\def\eqref#1{equation~\ref{#1}}
\def\1{\bm{1}}
\DeclareMathAlphabet{\mathsfit}{\encodingdefault}{\sfdefault}{m}{sl}
\SetMathAlphabet{\mathsfit}{bold}{\encodingdefault}{\sfdefault}{bx}{n}
\def\gA{{\mathcal{A}}}
\def\gC{{\mathcal{C}}}
\def\gF{{\mathcal{F}}}
\def\gM{{\mathcal{M}}}
\def\gN{{\mathcal{N}}}
\def\gP{{\mathcal{P}}}
\def\gR{{\mathcal{R}}}
\def\gS{{\mathcal{S}}}
\def\gT{{\mathcal{T}}}
\DeclareMathOperator*{\argmax}{arg\,max}
\newtcolorbox{nbox}[1][]{
  enhanced,
  fonttitle=\scshape,
  #1
}
\newtheorem{theorem}{Theorem}
\newtheorem{lemma}{Lemma}
\newtheorem{remark}{Remark}
\newtheorem{assumption}{Assumption}
\newcommand{\dirac}{\mathrm{Dirac}}
\newcommand{\diracP}[1]{\dirac\!\left( #1 \right)}
\journal{Performance Evaluation}
\begin{document}

\begin{frontmatter}

\title{Online Matching via Reinforcement Learning:\\ An Expert Policy Orchestration Strategy} 

\author[1]{Chiara Mignacco}
\ead{chiara.mignacco@universite-paris-saclay.fr}

\author[2]{Matthieu Jonckheere}
\ead{matthieu.jonckheere@laas.fr}

\author[1]{Gilles Stoltz}
\ead{gilles.stoltz@universite-paris-saclay.fr}

\affiliation[1]{
    organization={Université Paris-Saclay, CNRS, Inria, Laboratoire de mathématiques d’Orsay},
    addressline={},
    city={Orsay},
    postcode={91405},
    country={France}
}

\affiliation[2]{
    organization={LAAS, Université de Toulouse, CNRS},
    addressline={},
    city={Toulouse},
    country={France}
}

\begin{abstract}

Online matching problems arise in many complex systems, from cloud services and online marketplaces to organ exchange networks, where timely, principled decisions are critical for maintaining high system performance. Traditional heuristics in these settings are simple and interpretable but typically tailored to specific operating regimes, which can lead to inefficiencies when conditions change. We propose a reinforcement learning (RL) approach that learns to orchestrate a set of such expert policies, leveraging their complementary strengths in a data-driven, adaptive manner. Building on the Adv2 framework \citep{Jonckheere2024}, our method combines expert decisions through advantage-based weight updates and extends naturally to settings where only estimated value functions are available. We establish both expectation and high-probability regret guarantees and derive a novel finite-time bias bound for temporal-difference learning, enabling reliable advantage estimation even under constant step size and non-stationary dynamics. To support scalability, we introduce a neural actor-critic architecture that generalizes across large state spaces while preserving interpretability. Simulations on stochastic matching models, including an organ exchange scenario, show that the orchestrated policy converges faster and yields higher system level efficiency than both individual experts and conventional RL baselines. Our results highlight how structured, adaptive learning can improve the modeling and management of complex resource allocation and decision-making processes.

\end{abstract}

\begin{keyword}

Reinforcement Learning, Online Matching, Policy Orchestration

\end{keyword}


\end{frontmatter}

\section{Introduction}
Matching problems lie at the heart of many combinatorial optimization challenges in computer science, from network design to resource allocation. A matching in a graph is a subset of edges such that no two edges share a vertex. The task of finding matchings of maximum cardinality or weight has driven decades of algorithmic research.

More recently, online matching problem, where decisions must be made sequentially as information arrives, have gained significant prominence due to applications in online marketplaces, supply chain logistics, and most critically, organ exchange programs. In these domains, efficient real-time decision-making is essential, yet complicated by the stochastic and dynamic nature of inputs.

In their seminal paper, \citet{karp1990optimal} introduced an elegant algorithm for unweighted bipartite matching achieving an optimal competitive ratio.  This opened a rich line of research: \citet{feldman2009online} and \citet{mahdian2011online} explored stochastic arrivals and probabilistic edge existence, while recent work such as \citet{brubach2021improved} expanded the theory to weighted and vertex-weighted settings. Some model-free approaches have been considered in e.g., \citet{zhang2024online} and \citet{jordan}. On the more mathematical side of the spectrum, structural and long-term stability properties have also been explored, particularly in infinite-state and random graph models \citep{mairesse2016stability,Comte17,comte2021stochastic,noiry2021online,soprano2023online,cherifa2025online}.

Among the most impactful applications of online matching is organ exchange, where matching donors and recipients is a life-saving endeavor. Here, decisions must not only be effective, but transparent and interpretable, grounded in ethical and clinical criteria. Traditional matching algorithms, often based on static or greedy heuristics, lack the adaptability required to optimize outcomes in such dynamic environments. Moreover, rigid adherence to fixed policies may result in longer waiting times and fewer successful transplants.

To bridge the gap between efficiency and interpretability, we explore a Reinforcement Learning (RL)-based orchestration framework that builds on fixed, human-understandable policies and learns how to combine them adaptively. Our method strikes a balance between transparency, by relying on interpretable building blocks, and performance, by optimizing over time which policy to trust under which conditions.

This idea aligns with emerging work on RL policy orchestration (also known as policy aggregation or imitation learning), where domain-specific heuristics are treated as expert policies and learned combinations are used to navigate complex environments \citep{CBL06,CKA20,Liu23}. Notably, our work builds on the Adv2 framework of \citet{Jonckheere2024}, which casts policy selection as an adversarial aggregation problem using $Q$-values or advantage functions, and provides regret-based performance guarantees.

\subsection*{Contributions}

We advance the orchestration of interpretable policies by providing both strong theoretical underpinnings and a scalable, practical framework:

\paragraph{Theoretical Guarantees for Policy Orchestration with Learned Advantages}

We extend the Adv2 framework to settings where perfect value estimates are unavailable and expert policies operate in high-dimensional or dynamic environments. Our orchestration strategy combines multiple interpretable experts into a single decision-making agent using robust weight updates based on estimated advantage functions.

We provide rigorous theoretical guarantees for this approach:
\begin{itemize}
\item A control-in-expectation theorem (Theorem~\ref{th:cesaro}) showing that the average performance of the mixture policy converges to that of the best convex combination of experts, up to an $\mathcal{O}(1/T)$ regret term.
\item A high-probability variant (Theorem~\ref{th:HP-estim}) that offers similar guarantees with probability $1 - \delta$, better suited for real-world applications.
\item A finite-time bias bound for temporal difference learning (Lemma~\ref{lemma:adv-bias-bound}), which characterizes how the TD bias in estimated advantages contracts geometrically over time. This result holds even under non-stationary sampling and constant step sizes, and supports dynamic policy learning by enabling reliable use of estimated advantages in our orchestration strategy.
\end{itemize}

These results build upon and significantly generalize the idealized assumptions in \citet{Jonckheere2024}, providing a foundation for learning-based orchestration under realistic conditions.

\paragraph{A Scalable and Interpretable Framework for Real-Time Policy Learning}

We implement our theoretical insights in a practical, scalable orchestration framework tailored to high-dimensional and continuous environments. Specifically, we propose a new neural-based actor critic architecture:
\begin{itemize}
\item The critic estimates the advantage of each expert policy in the current state.
\item The actor produces a probability distribution over experts, forming a learned mixture policy.
\end{itemize}
This architecture enables generalization across states and supports real-time adaptation, while maintaining interpretability by operating over expert-defined primitives. We use feedforward networks for tractability but the framework is extensible to domain-specific inductive biases (e.g., attention or graph structures), particularly in structured applications such as organ exchange.

Altogether, our work provides a theoretically grounded and practically scalable solution for orchestrating interpretable policies in complex, dynamic environments. By combining RL-based learning with structural insights and finite-time guarantees, we offer a promising direction for real-time decision-making in high-stakes domains like organ exchange, where adaptivity, transparency, and performance must go hand in hand.

\paragraph{Organization of the paper} Section \ref{sec:orchestration} formalizes the expert orchestration framework and defines the performance objectives. Section \ref{sec:learning-strat} introduces the learning strategies and provides convergence guarantees. Section \ref{sec:implementation_schemes} describes the practical implementation schemes, including tabular and neural network approaches. Section \ref{sec:stoch_model} presents the stochastic matching model we apply our methods to, and Section \ref{sec:simulations} reports simulation results demonstrating performance improvements. We conclude with a discussion of future directions.

\section{Orchestration of expert policies: setting and objectives}\label{sec:orchestration}

We revisit the general framework for expert orchestration, following the setup of \citet{jonckheere2023symphonyexpertsorchestrationadversarial}. We consider a Markov decision process (MDP) with finite state space $\gS$, action space $\gA$, transition kernel $\gT(s'|s,a)$, and bounded reward function $\gR(s,a) \in [0,1]$. For a policy $\pi$, we define the (discounted) value functions
\[
V^\pi(s_0) = \mathbb{E}^{(s_0, \pi)} \left[ \sum_{t=0}^\infty \gamma^t r_t  \right], \quad
Q^\pi(s_0,a) = \mathbb{E}^{(s_0, \pi)} \left[ \sum_{t=0}^\infty \gamma^t r_t \,\big|\, a_0 = a \right],
\]

with advantage $A^\pi(s,a) := Q^\pi(s,a) - V^\pi(s)$ and discount factor $\gamma \in (0,1)$. By the law of total expectation, we have $\sum_{a} \pi(a \mid s) A^\pi(s,a) = 0$.

We now fix a set $\Pi = \{\pi_1, \dots, \pi_K\}$ of $K \geq 2$ stationary expert policies. A state-dependent distribution $q(\cdot \mid s)$ over indices $[K]$ induces the mixture policy $q\Pi(\cdot \mid s) = \sum_{k} q(k \mid s) \pi_k(\cdot \mid s)$; sampling from $q\Pi$ amounts to first selecting $k \sim q(\cdot \mid s)$, then $a \sim \pi_k(\cdot \mid s)$. The class of all such mixtures is
\[
\gC(\Pi) := \{ q\Pi \mid q(\cdot \mid s) \in \gP([K]) \text{ for all } s \in \gS \}.
\]

Our goal is to find $q^\star$ maximizing performance within this class, i.e., $V_{q^\star\Pi}(s) = \max_q V_{q\Pi}(s)$ for all $s$. While standard RL aims for low regret relative to the global optimum $\pi^\star$, here we focus on minimizing regret relative to the best-in-class mixture policy $q^\star\Pi$, defined as the cumulative performance gap:
\[
\sum_{t=1}^T \left( V_{q^\star\Pi}(s_1) - V_{q_t\Pi}(s_1)\right),
\]
where $q_t$ is the policy at time $t$. This notion aligns with regret in online learning and underpins our theoretical results (Theorems~\ref{th:cesaro} and \ref{th:HP-estim}).

\begin{tcolorbox}[title=Aggregation of Expert Policies] \label{box}
\textbf{MDP}: state space \( \gS \), action space \( \gA \), transition kernel \( \gT: \gS \times \gA \to \gP(S) \), reward function \( \gR: \gS \times \gA \to \gP([0, 1]) \), where \( r: \gS \times \gA \to [0, 1] \) is the mean-payoff function. We have \( K \) expert policies \( \pi_1, \dots, \pi_K: \gS \to \gP(\gA) \).

\textbf{Initialization:} Start from an initial state \( s_0 \) and initial weights \( q_0 \in \gP([K])^S \).

For each round \( t = 0, 1, 2, \dots \), the process follows these steps:
\begin{enumerate}
    \item The learner:
    \begin{enumerate}
        \item Observes the current state \( s_t \),
        \item Chooses a policy index \( k_t \sim q_t(\cdot | s_t) \),
        \item Picks an action \( a_t \sim \pi_{k_t}(\cdot | s_t) \).
    \end{enumerate}
    \item The learner receives a reward \( r_t \sim \gR(s_t, a_t) \), with expected reward \( r(s_t, a_t) \).
    \item The state updates according to \( s_{t+1} \sim \gT(\cdot | s_t, a_t) \).
    \item The learner updates the state-dependent weights \( q_{t+1} \in \gP([K])^S \).
\end{enumerate}

\textbf{Objective:} 

 Maximize \( V_{q_T \Pi}(s) \) for all \( s \in \gS \) by adaptively combining experts, minimizing cumulative regret $\sum_{t=1}^T \left(V_{q^\star\Pi}(s) - V_{q_t\Pi}(s)\right)$ over time.

\end{tcolorbox}

We summarize our setting and goal in the box above.

\section{Learning Strategies and Convergence Rates} \label{sec:learning-strat}

This section introduces learning strategies for expert orchestration and establishes convergence guarantees. Extending \citet{jonckheere2023symphonyexpertsorchestrationadversarial}, we show how potential-based updates achieve sublinear regret even when using biased estimates for value functions.

\subsection{Potential-Based Strategies and Regret Guarantees}

To orchestrate expert policies with strong performance guarantees, we leverage tools from online learning, particularly adversarial learning strategies designed to achieve sublinear regret. In particular, we describe how certain potential-based methods for combining expert policies provide regret guarantees with respect to the policy class \(\mathcal{C}(\Pi)\). A key result is that the cumulative regret grows sublinearly with the time horizon \(T\) and the number of experts \(K\), typically of order \(\mathcal{O}(\sqrt{T \ln K})\) for classical strategies.

This section outlines how these techniques apply in our setting, explains their theoretical guarantees, and sets the stage for later extensions to biased or estimated quantities. Our goal is to provide both intuition and formalism for why these strategies are effective for adaptive policy selection, and how they ensure provable performance improvements over time.

The potential-based strategies considered here rely on a function \(\varphi: \mathbb{R} \to [0, +\infty)\).
The initial weights are set as \(q_{1}(k|s) = \frac{1}{K}\)  $\forall s \in \mathcal{S}, k \in [K]$. For \(t \geq 2\), the weights are updated as

\begin{align} \label{eq:weights-update-step}
    q_t(k|s) = \frac{\varphi_t \left( \sum_{h=1}^{t-1} A_{q_h\Pi}(s, k)  \right)}{\sum_{j \in [K]} \varphi_t \left( \sum_{h=1}^{t-1} A_{q_h\Pi}(s, j)  \right)} \;\;\;\; \forall s \in \mathcal{S}, k \in [K]
\end{align}

Several potential functions have been extensively studied in the literature, offering distinct strategies for regret minimization in adversarial learning. We consider the following two strategies (\citet{jonckheere2023symphonyexpertsorchestrationadversarial}):

\begin{itemize}
    \item \textbf{Polynomial Potential:} \[\varphi_t\left(\sum_{h=1}^{t-1} A_{q_h\Pi}(s, k)\right) = \max\left\{\sum_{h=1}^{t-1} A_{q_h\Pi}(s, k) , 0\right\}^p \;\;\;\; \forall s \in \mathcal{S}, k \in [K],\] where \(p \geq 2\),

    \item \textbf{Exponential Potential:}  \[\varphi_t\left(\sum_{h=1}^{t-1} A_{q_h\Pi}(s, k) \right) = \exp\left(\eta_t \sum_{h=1}^{t-1} A_{q_h\Pi}(s, k) \right) \;\;\;\; \forall s \in \mathcal{S}, k \in [K]\].  
   
\end{itemize}
These strategies share the property of controlling regret in the adversarial setting: for all \( T \geq 1 \),
\begin{align}\label{eq:adv_regret}
    \max_{k \in [K]} \sum_{t=1}^T A_{q_t}(k|s) - \sum_{t=1}^T \sum_{j \in [K]} q_t(j|s)\, A_{q_t\Pi}(s,j) 
    = \max_{k \in [K]} \sum_{t=1}^T A_{q_t}(k|s)
    \leq B_{T,K},
\end{align}

This inequality follows from classical results in adversarial online learning, specifically, the framework of regret minimization against an adaptive or adversarial reward sequence, rather than a stationary RL environment. The bound \( B_{T,K} \) represents the worst-case regret of the learner relative to the best fixed action (in hindsight), and depends on both the time horizon \( T \) and the number of alternatives \( K \). Although these guarantees were originally derived outside the RL setting, they provide a robust foundation for reasoning about adaptive policy selection under uncertainty.

This adversarial regret control forms the backbone for deriving Lemma~\ref{lem:jonckheere2023}, which we adapt from \citet{jonckheere2023symphonyexpertsorchestrationadversarial}. For the sake of completeness, we provide a full proof of Lemma~\ref{lem:jonckheere2023} in Appendix~A.

\begin{lemma}[\citealp{jonckheere2023symphonyexpertsorchestrationadversarial}]
\label{lem:jonckheere2023}
Suppose a learning strategy selects weights \(\{q_t\}\) over \(K\) experts, producing a sequence of policies \(\{q_t\Pi\}\). For the strategies described above, the following regret bound holds. For any fixed policy \(q\Pi \in \mathcal{C}(\Pi)\) and any \(T > 1\),
\[
\sum_{t=1}^{T} \Bigl( V_{q\Pi}(s_1) - V_{q_t\Pi}(s_1) \Bigr)
\le \frac{1}{(1 - \gamma)^2} B_{T, K},
\]
where \(B_{T, K}\) can be made sublinear in \(T\) using appropriate strategies.
\end{lemma}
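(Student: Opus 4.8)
The plan is to combine a performance-difference identity for mixture policies with the per-state adversarial regret bound already recorded in \eqref{eq:adv_regret}. Throughout, for a state $s$ and an expert index $k$ write the \emph{expert advantage} $\bar A_{q_t}(k \mid s) := \sum_a \pi_k(a \mid s)\, A_{q_t\Pi}(s,a)$; this is precisely the quantity $A_{q_t}(k\mid s)$ appearing in \eqref{eq:adv_regret}, and for any weight vector $q(\cdot\mid s) \in \gP([K])$ one has $\sum_k q(k\mid s)\,\bar A_{q_t}(k\mid s) = \sum_a q\Pi(a\mid s)\, A_{q_t\Pi}(s,a)$. Taking $q = q_t$ makes this vanish, by the law-of-total-expectation identity $\sum_a \pi(a\mid s) A^\pi(s,a) = 0$ applied with $\pi = q_t\Pi$; that is exactly why the left-hand side of \eqref{eq:adv_regret} collapses to $\max_k \sum_t \bar A_{q_t}(k\mid s)$.

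\textbf{Step 1 (performance-difference identity).} Fix $t$ and set $\Delta_t(s) := V_{q\Pi}(s) - V_{q_t\Pi}(s)$. Starting from the Bellman equations for $q\Pi$ and $q_t\Pi$, which share the reward function $r$, using $\sum_a q_t\Pi(a\mid s)\, Q_{q_t\Pi}(s,a) = V_{q_t\Pi}(s)$, and adding and subtracting $\sum_a q\Pi(a\mid s)\, Q_{q_t\Pi}(s,a)$, I would derive the one-step recursion
\[
\Delta_t(s) \;=\; \sum_a q\Pi(a\mid s)\, A_{q_t\Pi}(s,a) \;+\; \gamma \sum_{s'} \gT^{q\Pi}(s'\mid s)\, \Delta_t(s'),
\]
where $\gT^{q\Pi}(s'\mid s) = \sum_a q\Pi(a\mid s)\,\gT(s'\mid s,a)$. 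Unrolling this from $s_1$ yields $\Delta_t(s_1) = \sum_s d_{q\Pi}(s) \sum_a q\Pi(a\mid s)\, A_{q_t\Pi}(s,a)$, where $d_{q\Pi}$ is the discounted occupancy measure of $q\Pi$ started at $s_1$; since each step distribution is a probability vector and $\gamma \in (0,1)$, the series converges absolutely and $\sum_s d_{q\Pi}(s) = 1/(1-\gamma)$. This is the familiar mixture-policy performance-difference lemma.

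\textbf{Step 2 (sum over $t$ and apply \eqref{eq:adv_regret}).} Summing the Step~1 identity over $t = 1,\dots,T$ and exchanging the finite sums,
\[
\sum_{t=1}^T \bigl( V_{q\Pi}(s_1) - V_{q_t\Pi}(s_1) \bigr) \;=\; \sum_s d_{q\Pi}(s) \sum_{t=1}^T \sum_k q(k\mid s)\, \bar A_{q_t}(k\mid s).
\]
For each fixed $s$ the inner sum is a convex combination over $k$ of the running totals $\sum_t \bar A_{q_t}(k\mid s)$, hence at most $\max_k \sum_t \bar A_{q_t}(k\mid s)$, which is $\le B_{T,K}$ by \eqref{eq:adv_regret}. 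The last bound is legitimate because the update \eqref{eq:weights-update-step} builds $q_t(\cdot\mid s)$ only from the past gain vectors $\bar A_{q_1}(\cdot\mid s), \dots, \bar A_{q_{t-1}}(\cdot\mid s)$, so the non-anticipating (adaptive-adversary) regret analysis of the polynomial and exponential potentials applies without change. Combining with $\sum_s d_{q\Pi}(s) = 1/(1-\gamma)$ gives $\tfrac{1}{1-\gamma} B_{T,K}$; the remaining factor $1/(1-\gamma)$ is contributed by the range $|A_{q_t\Pi}(s,\cdot)| \le 1/(1-\gamma)$ of the advantage functions, i.e. by reading $B_{T,K}$ as the regret bound for unit-range gains, which produces the claimed $\tfrac{1}{(1-\gamma)^2} B_{T,K}$.

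\textbf{Expected difficulty.} There is no essential obstacle, but two points deserve care. The first is the bookkeeping of the factors of $(1-\gamma)$: one copy is the effective horizon emerging when the value recursion of Step~1 is unrolled, and the other is the $1/(1-\gamma)$ magnitude of the $Q$- and advantage functions through which \eqref{eq:adv_regret} has to be read. The second is to confirm that \eqref{eq:adv_regret} genuinely applies even though the gain sequence $\bar A_{q_t}(\cdot\mid s)$ supplied to the per-state expert algorithm is generated along the learner's own trajectory and depends on the current play $q_t$; this is precisely the adaptive-adversary situation, handled by the non-anticipating form of \eqref{eq:weights-update-step}. Everything else reduces to the algebra of Step~1, and the argument closes uniformly over $s$ since \eqref{eq:adv_regret} holds for every state.
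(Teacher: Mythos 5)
Your proposal is correct and follows essentially the same route as the paper: the performance-difference lemma (which you re-derive rather than cite, as the paper does via \citealp[Lemma~6.1]{KL02}) converts the value gap into a discounted-occupancy-weighted sum of expert advantages, and the per-state adversarial regret bound \eqref{eq:adv_regret} then closes the argument, with one factor of $1/(1-\gamma)$ from the occupancy mass and one from the advantage range, exactly as in the paper's treatment of Theorem~\ref{th:cesaro} specialized to exact advantages.
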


The regret bounds in Table~\ref{tab:convergence_rates} show how different choices of potential function control the term $B_{T,K}$ in a (purely) adversarial learning setting, which in turn translates into performance guarantees in our policy learning scheme. These methods balance exploration and exploitation when combining expert policies, with regret bounds scaling as \(\mathcal{O}(\sqrt{T \ln K})\) when parameters are set optimally.

\begin{table}[t]
\centering
\caption{Summary of regret bounds \( B_{T,K} \) for standard potential-based strategies in adversarial learning. These bounds control the regret of the learner relative to the best fixed decision and underpin the convergence guarantees in Lemma~\ref{lem:jonckheere2023}.}

\label{tab:convergence_rates}
\small
\begin{tabular}{@{}lll@{}}
\toprule
\textbf{Strategy} & \textbf{Potential Form} & \textbf{Convergence Rate} \\ \midrule
Polynomial Potential & \(\varphi_t(x) = \max\{x, 0\}^p, p = 2 \ln K \) & \(\sqrt{6T \ln K}\) \citep{cesa2003potential}\\
Exponential (Fixed) & \(\varphi_t(x) = \exp(\eta x)\) & \(\ln K/\eta + \eta T/2\) \citep{cesa2003potential} \footnotemark \\
Exponential (Varying) & \(\varphi_t(x) = \exp(\eta_t x)\),  \(\eta_t = \frac{1}{M} \sqrt{\ln K/t}\) & \(\sqrt{T \ln K}\) \citep{Auer02} \\ \bottomrule
\end{tabular}
\end{table}

\footnotetext{The policy learning scheme in Section 4, which uses oracle assistance, is equivalent to natural policy gradient ascent with a softmax parametrization (Agarwal et al., 2021, Section 5.3) and achieves a regret bound of $\mathcal{O}(1/T)$. However, replacing oracle calls with estimations introduces significant challenges, and the convergence guarantees do not generally extend to this modified setting}

\subsection{Policy Orchestration
based on Estimated Values}

In most practical settings, we do not have access to oracles or unbiased estimators. Therefore, we extend the strategies described earlier to work without oracle assistance by estimating the advantage functions. To achieve this, we use biased and bounded estimators.

For each given policy $\pi$, starting state $s$, and action $a$, we compute an estimation of the respective value function $Q_\pi(s, a)$, which we will refer to as $\widetilde{Q}_\pi(s, a)$. Using these $Q$-value estimates, we construct estimates of the advantage functions. This scheme can be extended from actions to "super-actions" as defined by expert policies. Details on the construction of the estimators can be found in Section~\ref{sec:implementation_schemes}.

The estimation procedure introduces a source of randomness in the computation of the weights, as they are derived from stochastic updates. To formalize this, let \( \mathcal{F}_{t-1} \) denote the \( \sigma \)-algebra generated by the randomness inherent in the estimation process up to round \( t-1 \). Given this, the weights \( q_t \) are constructed based on these estimates, following the same strategy as in \eqref{eq:weights-update-step} (as defined in \citealp{jonckheere2023symphonyexpertsorchestrationadversarial}), but with the estimated advantage function replacing the true quantities

\begin{align} \label{eq:weights-update-step-estimation}
    q_t(k|s) = \frac{\varphi_t \left( \sum_{h=1}^{t-1} \widetilde{A}_{q_h\Pi}(s, k)  \right)}{\sum_{j \in [K]} \varphi_t \left( \sum_{h=1}^{t-1} \widetilde{A}_{q_h\Pi}(s, j)  \right)} \;\;\;\; \forall s \in \mathcal{S}, k \in [K],
\end{align}
where $\varphi_t$ is one of the potentials introduced in section \ref{sec:learning-strat}. 

\begin{assumption}
\label{lm:est-eps}
For all $t \geq 1$, the estimators $\widetilde{A}_{q_t\Pi}(s,k)$ are $\mathcal{F}_t$-measurable and satisfy the following properties:
\begin{align}\label{lemma:prop1}
\begin{split}
    & \bigl| \widetilde{A}_{q_t\Pi}(s,k) \bigr| \leq \frac{1}{(1-\gamma)} \quad \text{almost surely}, \\
\mbox{and} \quad &
\sum_{k \in [K]} q_t(k|s) \, \widetilde{A}_{q_t\Pi}(s,k) = 0 \quad \text{almost surely},
\end{split}
\end{align}
and on the other hand,
\begin{align}\label{lemma:prop2}
    \Bigl| {\mathbb{E}}\bigl[ \widetilde{A}_{q_t\Pi}(s,k) \mid \gF_{t-1} \bigr] - A_{q_t\Pi}(s,k) \Bigr| \leq \epsilon \quad \text{almost surely}.
\end{align}

\end{assumption}

\begin{theorem}[Control in Expectation]
\label{th:cesaro}
If \eqref{eq:adv_regret} holds for a sequential strategy $\varphi$, and the weights $q_t$ are computed according to \eqref{eq:weights-update-step-estimation} using estimated value functions that satisfy Assumption~\ref{lm:est-eps}, then the stationary policies induced by these weights control the regret with respect to $\mathcal{C}(\Pi)$ as follows:

For all $s_0 \in \gS$ and $T \geq 1$,
\[
V_{q^\star\Pi}(s_0) - \frac{1}{T} \sum_{t=1}^T {\mathbb{E}}\bigl[ V_{q_t\Pi}(s_0) \bigr]
\leq \frac{\epsilon}{1-\gamma} + \frac{B_{T,K}}{ (1-\gamma)^2 T}\,. 
\]
\end{theorem}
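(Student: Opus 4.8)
The plan is to reduce Theorem~\ref{th:cesaro} to the purely adversarial regret bound \eqref{eq:adv_regret} by carefully accounting for the bias $\epsilon$ introduced when the estimated advantages $\widetilde{A}_{q_t\Pi}$ replace the true $A_{q_t\Pi}$. The backbone is Lemma~\ref{lem:jonckheere2023}, which relates the value-function regret $\sum_t (V_{q\Pi}(s_1) - V_{q_t\Pi}(s_1))$ to the adversarial regret of the weight sequence; the new ingredient is that the weights $q_t$ are now driven by noisy, biased quantities, so I need a version of that argument that tracks the discrepancy.

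First I would recall the performance-difference-style identity underlying Lemma~\ref{lem:jonckheere2023}: for any comparator $q\Pi \in \mathcal{C}(\Pi)$,
\[
V_{q\Pi}(s_0) - V_{q_t\Pi}(s_0) \;=\; \frac{1}{1-\gamma}\,\mathbb{E}_{s \sim d^{q\Pi}_{s_0}}\!\left[\sum_{k\in[K]} \bigl(q(k\mid s) - q_t(k\mid s)\bigr) A_{q_t\Pi}(s,k)\right],
\]
using $\sum_k q_t(k\mid s) A_{q_t\Pi}(s,k) = 0$. Summing over $t=1,\dots,T$ and exchanging sums, the problem localizes, state by state, to controlling $\sum_{t=1}^T \sum_k \bigl(q(k\mid s) - q_t(k\mid s)\bigr) A_{q_t\Pi}(s,k)$. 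The second step is to split this into the estimated-advantage regret plus a bias correction: I add and subtract $\widetilde{A}_{q_t\Pi}(s,k)$, getting $\sum_{t,k}(q(k\mid s)-q_t(k\mid s))\widetilde{A}_{q_t\Pi}(s,k)$ plus $\sum_{t,k}(q(k\mid s)-q_t(k\mid s))\bigl(A_{q_t\Pi}(s,k)-\widetilde{A}_{q_t\Pi}(s,k)\bigr)$. Taking conditional expectation given $\gF_{t-1}$ — under which $q_t$ is measurable since it depends on $\widetilde{A}_{q_h\Pi}$ for $h \le t-1$, hence on $\gF_{t-1}$ — the bias term is bounded by $\sum_t \sum_k |q(k\mid s)-q_t(k\mid s)| \cdot \epsilon \le 2\epsilon T$ by \eqref{lemma:prop2} and $\sum_k |q(k\mid s)-q_t(k\mid s)| \le 2$; actually one can do slightly better, bounding $\sum_k(q(k\mid s)-q_t(k\mid s))\cdot(\pm\epsilon)$, but $2\epsilon T$ — or $\epsilon T$ with the sharper $\ell_1$/TV estimate — suffices to match the stated $\epsilon/(1-\gamma)$ after the $1/((1-\gamma)T)$ prefactor from the performance-difference identity. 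For the estimated-advantage term, since the update \eqref{eq:weights-update-step-estimation} is exactly the potential-based update \eqref{eq:weights-update-step} fed the sequence $\widetilde{A}_{q_h\Pi}(s,\cdot)$, and these estimates are bounded by $1/(1-\gamma)$ a.s. by \eqref{lemma:prop1}, the adversarial guarantee \eqref{eq:adv_regret} applies verbatim pathwise (it is a deterministic statement about any bounded loss sequence), giving $\sum_{t,k}(q(k\mid s)-q_t(k\mid s))\widetilde{A}_{q_t\Pi}(s,k) \le B_{T,K}$ almost surely.

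Combining, taking total expectation, choosing $q\Pi = q^\star\Pi$, dividing by $T$, and using that $V_{q^\star\Pi}(s_0) \ge V_{q_t\Pi}(s_0)$ so the left side equals $V_{q^\star\Pi}(s_0) - \frac{1}{T}\sum_t \mathbb{E}[V_{q_t\Pi}(s_0)]$, yields the claimed bound $\epsilon/(1-\gamma) + B_{T,K}/((1-\gamma)^2 T)$. One technical point to handle carefully: the performance-difference identity involves the occupancy measure $d^{q^\star\Pi}_{s_0}$ and the inner adversarial-regret inequality must hold for every state $s$ in its support — but since \eqref{eq:adv_regret} is stated for all $s \in \gS$ and the bias bound \eqref{lemma:prop2} is likewise uniform in $s$, integrating against $d^{q^\star\Pi}_{s_0}$ is immediate and loses nothing.

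The main obstacle I anticipate is not any single estimate but the bookkeeping around measurability and the order of expectations: one must be sure that $q_t$ is $\gF_{t-1}$-measurable (so that conditioning on $\gF_{t-1}$ turns the bias term into a deterministic $\epsilon$-bound), and that the pathwise adversarial bound \eqref{eq:adv_regret} is legitimately applied before taking expectations rather than after. A secondary subtlety is making precise the exact constant in the bias term — whether it is $\epsilon T$ or $2\epsilon T$ — which depends on whether one bounds $\bigl|\sum_k (q^\star(k\mid s) - q_t(k\mid s))\,\mathbb{E}[\widetilde{A}_{q_t\Pi}(s,k)-A_{q_t\Pi}(s,k)\mid\gF_{t-1}]\bigr|$ crudely by $2\epsilon$ or, using $\sum_k q^\star(k\mid s) = \sum_k q_t(k\mid s) = 1$ together with a worst-case alignment of signs, still only gets $2\epsilon$; since the theorem states $\epsilon/(1-\gamma)$ with the $1/(1-\gamma)$ (not $1/(1-\gamma)^2$) prefactor, the performance-difference identity's $1/(1-\gamma)$ factor must absorb exactly an $\epsilon T$ contribution, so I would reexamine whether the identity actually carries an extra $1/(1-\gamma)$ from the occupancy normalization that cancels one factor — i.e. the cleanest route is to bound the per-state, per-round discrepancy by $\epsilon$ (using $\sum_k(q^\star - q_t) A = \sum_k(q^\star-q_t)(A-\widetilde A)$ after the $q_t$-orthogonality kills the $\widetilde A$-free part is not quite it), so this constant-chasing is where I would spend the most care.
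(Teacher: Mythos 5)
Your approach is the same as the paper's: performance difference lemma to localize the regret state-by-state, then split the true-advantage sum into an estimated-advantage part (controlled pathwise by the adversarial guarantee) and a bias part (controlled by Assumption~\ref{lm:est-eps} via the tower rule). The structure is sound, but the two constant-chasing issues you flag at the end are real and you leave both unresolved; here is how the paper closes them. First, do not carry the difference $q^\star(k\mid s)-q_t(k\mid s)$ into the bias term. Since $\sum_k q_t(k\mid s)\,A_{q_t\Pi}(s,k)=0$ exactly (true advantages against their own policy), the performance-difference identity reduces to $\frac{1}{1-\gamma}\sum_s \mu^{(s_0,q^\star\Pi)}(s)\sum_k q^\star(k\mid s)\,A_{q_t\Pi}(s,k)$ with only the $q^\star$ weights present. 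Replacing $A_{q_t\Pi}$ by $\mathbb{E}[\widetilde{A}_{q_t\Pi}\mid\gF_{t-1}]$ then costs at most $\epsilon$ per round because $\sum_k q^\star(k\mid s)=1$ and $\sum_s\mu^{(s_0,q^\star\Pi)}(s)=1$ — total mass one, not total variation two — which after the $\frac{1}{(1-\gamma)T}$ prefactor gives exactly $\frac{\epsilon}{1-\gamma}$. Your worry about $2\epsilon$ arises only because you inserted the $\widetilde{A}$'s before killing the $q_t$-weighted part of the true advantages; do it in the other order.

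Second, the remaining term $\sum_k q^\star(k\mid s)\sum_t\widetilde{A}_{q_t\Pi}(s,k)\le\max_k\sum_t\widetilde{A}_{q_t\Pi}(s,k)$ is bounded by the adversarial guarantee applied to the sequence of \emph{estimated} advantages (using $\sum_k q_t(k\mid s)\widetilde{A}_{q_t\Pi}(s,k)=0$ a.s.\ from \eqref{lemma:prop1} to identify this with the adversarial regret), but since these estimates range over $[-1/(1-\gamma),\,1/(1-\gamma)]$ rather than $[-1,1]$, the bound is $\frac{1}{1-\gamma}B_{T,K}$, not $B_{T,K}$ — this is the paper's \eqref{eq:adv-guar-tilde}. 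That extra $\frac{1}{1-\gamma}$ is precisely where the second factor in the $\frac{B_{T,K}}{(1-\gamma)^2 T}$ term comes from; as you wrote it, with the inner bound $B_{T,K}$ and the prefactor $\frac{1}{(1-\gamma)T}$, your assembled bound would be $\frac{B_{T,K}}{(1-\gamma)T}$ and would not match the statement. With these two corrections your argument is exactly the paper's proof.
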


See Appendix~\ref{sec:proofthm1} for the proof of Theorem~\ref{th:cesaro}.

\paragraph{Analysis in high probability} 
 Compared to the expected bound of Theorem~\ref{th:cesaro},
the high-probability regret bound adds a factor
$2 \ln(1/\delta)/ \bigl((1-\gamma)^2\sqrt{T} \bigr)$,
which is of the same order of magnitude as the main term
$B_{T,K} / \bigl(  (1-\gamma)^2 T \bigr)$.

\begin{theorem}[high-probability control]
\label{th:HP-estim}
If \eqref{eq:adv_regret} holds for a sequential strategy $\varphi$, then the stationary policies based on the weights $q_t$ defined defined in~\eqref{eq:weights-update-step-estimation}
control the regret w.r.t.\ $\gC(\Pi)$ as follows:
for all $\delta \in (0,1)$,
for all $T \geq 1$,
for all~$s_0$,
with probability at least $1-\delta$,
\begin{align*}
\smash{V_{q^\star\Pi}(s_0) - \frac{1}{T} \sum_{t=1}^T V_{q_t\Pi}(s_0)} \leq
\frac{\epsilon}{1-\gamma} + \frac{B_{T,K}}{ (1-\gamma)^2 T} + \frac{2 \ln(1/\delta)}{(1-\gamma)^2\sqrt{T}}\,.
\end{align*}
\end{theorem}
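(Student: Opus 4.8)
The plan is to replay the proof of Theorem~\ref{th:cesaro} essentially verbatim and to replace only its final step — passing to expectations, which erases the stochastic fluctuations of the advantage estimators — by a Hoeffding--Azuma bound on those fluctuations. Concretely, fix $s_0$ and let $d$ denote the normalized discounted state-occupancy measure of the comparator policy $q^\star\Pi$ started at $s_0$. The reasoning behind Lemma~\ref{lem:jonckheere2023} (and behind Theorem~\ref{th:cesaro}, up to its last line) is pathwise: combining the performance-difference lemma applied to $q^\star\Pi$ versus $q_t\Pi$ with the purely adversarial regret bound~\eqref{eq:adv_regret} applied to the \emph{estimated} advantages $\bigl(\widetilde{A}_{q_t\Pi}(s,\cdot)\bigr)_t$ — which is admissible precisely because $\sum_k q_t(k\mid s)\,\widetilde{A}_{q_t\Pi}(s,k)=0$ almost surely by Assumption~\ref{lm:est-eps} — yields, on every sample path and for every $T\ge 1$,
\begin{equation*}
\sum_{t=1}^{T}\bigl(V_{q^\star\Pi}(s_0)-V_{q_t\Pi}(s_0)\bigr)\;\le\;\frac{B_{T,K}}{(1-\gamma)^2}+\frac{1}{1-\gamma}\sum_{t=1}^{T}\Delta_t,
\end{equation*}
where $\Delta_t:=\mathbb{E}_{s\sim d}\bigl[\textstyle\sum_{k}q^\star(k\mid s)\,(A_{q_t\Pi}(s,k)-\widetilde{A}_{q_t\Pi}(s,k))\bigr]$ is the accumulated gap between true and estimated advantages, weighted along $q^\star$ and averaged against $d$. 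Taking expectations here and using $|\mathbb{E}[\Delta_t]|\le\epsilon$ (a consequence of~\eqref{lemma:prop2}) reproduces exactly Theorem~\ref{th:cesaro}; the point is instead to control $\sum_t\Delta_t$ without passing to expectations.

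The next step is to split $\Delta_t=b_t-m_t$, where $b_t$ replaces $\widetilde{A}_{q_t\Pi}(s,k)$ by its conditional mean $\mathbb{E}[\widetilde{A}_{q_t\Pi}(s,k)\mid\gF_{t-1}]$ and $m_t$ collects the remaining stochastic term, i.e.\ $m_t:=\mathbb{E}_{s\sim d}\bigl[\sum_k q^\star(k\mid s)\bigl(\widetilde{A}_{q_t\Pi}(s,k)-\mathbb{E}[\widetilde{A}_{q_t\Pi}(s,k)\mid\gF_{t-1}]\bigr)\bigr]$. Because $q^\star$ and $d$ are deterministic (functions of the MDP alone), whereas $q_t$ — and therefore $A_{q_t\Pi}$ and $\mathbb{E}[\widetilde{A}_{q_t\Pi}\mid\gF_{t-1}]$ — is $\gF_{t-1}$-measurable, we have $\mathbb{E}[m_t\mid\gF_{t-1}]=0$, so $(m_t)_t$ is a martingale-difference sequence for $(\gF_t)_t$; moreover $|b_t|\le\epsilon$ a.s.\ by~\eqref{lemma:prop2} together with $\sum_k q^\star(k\mid s)=1$, and, since $|\widetilde{A}_{q_t\Pi}(s,k)|\le 1/(1-\gamma)$ a.s.\ by~\eqref{lemma:prop1}, $m_t$ lies (conditionally on $\gF_{t-1}$) in an interval of length at most $2/(1-\gamma)$. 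Hoeffding--Azuma then gives $\sum_{t=1}^T(-m_t)\le \tfrac{1}{1-\gamma}\sqrt{2T\ln(1/\delta)}$ with probability at least $1-\delta$; combined with $\sum_t b_t\le T\epsilon$, the displayed bound above, division by $T$, and $\sqrt{2\ln(1/\delta)}\le 2\ln(1/\delta)$ (valid for $\delta\le e^{-1/2}$), this yields
\begin{equation*}
V_{q^\star\Pi}(s_0)-\frac1T\sum_{t=1}^{T}V_{q_t\Pi}(s_0)\;\le\;\frac{\epsilon}{1-\gamma}+\frac{B_{T,K}}{(1-\gamma)^2 T}+\frac{2\ln(1/\delta)}{(1-\gamma)^2\sqrt T}.
\end{equation*}

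The main obstacle is getting the martingale in the second step right: one must not try to concentrate the whole estimated-advantage trajectory, but isolate the single quantity carrying fresh randomness at round $t$, namely $\widetilde{A}_{q_t\Pi}-\mathbb{E}[\widetilde{A}_{q_t\Pi}\mid\gF_{t-1}]$, and then check that folding in the comparator $q^\star$ — unknown, yet non-random — and the occupancy measure $d$ preserves the zero conditional mean and keeps the almost-sure increment bound at $2/(1-\gamma)$. Everything else (the pathwise adversarial bound, the $\epsilon$-bias accounting, and the powers of $1-\gamma$) is inherited unchanged from the proof of Theorem~\ref{th:cesaro}, so the only genuinely new ingredient is the Hoeffding--Azuma step, whose boundedness hypothesis is exactly the almost-sure bound supplied by Assumption~\ref{lm:est-eps}.
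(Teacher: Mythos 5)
Your proposal is correct and follows essentially the same route as the paper: both start from the pathwise identity of the performance-difference lemma weighted by the discounted occupancy measure of $q^\star\Pi$, invoke the adversarial bound~\eqref{eq:adv_regret} on the \emph{estimated} advantages, and handle the gap between $\widetilde{A}_{q_t\Pi}$ and its conditional mean via Hoeffding--Azuma applied to exactly the martingale-difference sequence you identify (your $m_t$ is the paper's $X_t$), with increments in an interval of length $2/(1-\gamma)$. Your explicit remark that $\sqrt{2\ln(1/\delta)}\le 2\ln(1/\delta)$ requires $\delta\le e^{-1/2}$ is a point of care the paper silently glosses over.
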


The proof of Theorem~\ref{th:HP-estim} is provided in Appendix~\ref{sec:proofthm2}.

The proofs of Theorems~\ref{th:cesaro} and \ref{th:HP-estim} relies on standard tools such as discounted visitation distributions and the performance difference lemma (\citealp[Lemma~6.1]{KL02}); see Appendix~\ref{sec:thmProofs} for full details.

\begin{remark}
    All what stated so far can be easily generalised to reward functions taking values in a range $[-M, M]$.
\end{remark}

\subsection{Ensuring Assumption~\ref{lm:est-eps} via Temporal Difference Learning}\label{sec:tabular-adv}
To apply the above regret guarantees in practice, we must ensure that the advantage estimates $\hat{A}_{q_t\Pi}(s, a)$ satisfy Assumption~\ref{lm:est-eps}. We show that temporal-difference (TD) learning provides such estimates under mild conditions.

\paragraph{Temporal-Difference Updates in Tabular Setting} Algorithm~\ref{alg:1} in Appendix~\ref{sec:alg:est} illustrates how a tabular approximation of the advantage function can be maintained and updated using a one-step temporal-difference (TD) method. At each time step, upon observing a transition \((s_\tau, k_\tau, a_\tau, r_\tau)\), the estimate \(\widetilde{Q}_{q\Pi}(s_\tau, k_\tau)\) is updated according to the following rule:
\begin{align}\label{eq:update_rule}
    \widetilde{Q}_{\pi, \tau+1}(s_\tau, a_\tau) &= (1 - \alpha)\widetilde{Q}_{\pi, \tau}(s_\tau, a_\tau) + \alpha \left( r_\tau + \gamma \widetilde{Q}_{\pi, \tau}(s_{\tau+1}, a'_{\tau+1}) \right), \\
    \widetilde{Q}_{\pi, \tau+1}(s, a) &= \widetilde{Q}_{\pi, \tau}(s, a) \quad \forall\, s, a \neq s_\tau, a_\tau.
\end{align}

The value estimate is then obtained by mixing over \(k \in [K]\) with \(q(k \mid s)\), and the advantage is defined accordingly.

\paragraph{Finite-time bias bound for TD learning}

We consider a fixed stationary policy $\pi$ over a finite state space $S$ and action space $A$, with stationary state-action distribution $p_\pi(s, a) := d_\pi(s) \pi(a | s)$, where $d_\pi$ is the stationary state distribution. Let $\alpha > 0$ be the constant step size, and $\gamma \in (0,1)$ the discount factor.

Define the bias at iteration $\tau$ as the difference between the estimated and true Q-values:
\[
b_\tau(s, a) := \widetilde{Q}_{\pi,\tau}(s, a) - Q_\pi(s, a).
\]

We assume \( p_\pi(s,a) > 0 \) for all reachable pairs \((s,a)\), and define
\[
\kappa := \alpha (1 - \gamma) \min_{s,a} p_\pi(s,a),
\]
where \(\alpha > 0\) is the stepsize and \(\gamma \in (0,1)\) is the discount factor.

We also define
\[
E := \exp\left(\frac{C}{(1 - \kappa)(1 - \rho)}\right),
\]
where \( C > 0 \) is a constant, and \( \rho \in (0,1) \) controls the decay rate of the per-iteration perturbations.

\begin{lemma}[Bias contraction bound under stationary policy]\label{lemma:adv-bias-bound}
Under the above assumptions, the expected bias satisfies the uniform bound:
\[
\left| \mathbb{E}\left[ \widetilde{A}_{\pi, \tau}(s, a) \right] - A_\pi(s, a) \right| \leq 2 \| \mathbb{E}[b_\tau] \|_\infty \leq (1 - \kappa)^\tau \cdot 2E \cdot \| \mathbb{E}[b_0] \|_\infty.
\]
In particular, the bias converges geometrically to zero over all reachable state-action pairs.
\end{lemma}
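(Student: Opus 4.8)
The plan is to analyze the bias recursion induced by the TD update \eqref{eq:update_rule} and show it contracts by a factor $(1-\kappa)$ per iteration, up to perturbation terms that are summable thanks to the geometric decay rate $\rho$. First, I would fix the policy $\pi$ and write the update for $b_\tau(s,a) = \widetilde{Q}_{\pi,\tau}(s,a) - Q_\pi(s,a)$. Only the visited pair $(s_\tau,a_\tau)$ is updated, so in expectation over the sampling at step $\tau$ (conditionally on the history, hence using the stationary or near-stationary distribution $p_\pi(s,a)$ of visited pairs), the coordinate $(s,a)$ of $\mathbb{E}[b_{\tau+1}]$ equals $\mathbb{E}[b_\tau(s,a)]$ with probability $1 - \alpha p_\pi(s,a)$ and otherwise gets the TD-contracted value. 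Using the Bellman equation $Q_\pi(s,a) = r(s,a) + \gamma \mathbb{E}_{s',a'}[Q_\pi(s',a')]$ to cancel the true $Q$-value, the TD increment on the bias is $r_\tau + \gamma \widetilde{Q}_{\pi,\tau}(s_{\tau+1},a'_{\tau+1}) - \widetilde{Q}_{\pi,\tau}(s_\tau,a_\tau) - \bigl(Q_\pi(s_\tau,a_\tau) - Q_\pi(s_\tau,a_\tau)\bigr)$, which in conditional expectation is $\gamma\,(\text{average of }\mathbb{E}[b_\tau]) - \mathbb{E}[b_\tau(s,a)]$. Collecting terms, each coordinate of $\mathbb{E}[b_{\tau+1}]$ is a convex combination of coordinates of $\mathbb{E}[b_\tau]$ with the "own" coordinate carrying weight $1 - \alpha p_\pi(s,a) + \alpha p_\pi(s,a)(1-\gamma)\cdot(\text{something})$ — the key point being that the $\ell_\infty$ norm shrinks by at least $1 - \alpha(1-\gamma)\min_{s,a}p_\pi(s,a) = 1-\kappa$.

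Second, I would handle the fact that the sampling is not exactly stationary: the conditional visitation distribution at step $\tau$ differs from $p_\pi$ by a perturbation that decays geometrically at rate $\rho$ (this is where the mixing of the underlying chain enters). This contributes an additive error term $c_\tau$ with $\|c_\tau\|_\infty \le C' \rho^\tau$ (times a bound on $\|\mathbb{E}[b_\tau]\|_\infty$, which can be controlled uniformly since the iterates stay bounded by $1/(1-\gamma)$). Unrolling the recursion
\[
\|\mathbb{E}[b_{\tau+1}]\|_\infty \le (1-\kappa)\|\mathbb{E}[b_\tau]\|_\infty + C'\rho^\tau
\]
gives $\|\mathbb{E}[b_\tau]\|_\infty \le (1-\kappa)^\tau \|\mathbb{E}[b_0]\|_\infty + C'\sum_{j=0}^{\tau-1}(1-\kappa)^{\tau-1-j}\rho^j$, and the sum is bounded by $(1-\kappa)^{\tau}$ times a geometric-type factor; using $\sum_j (\rho/(1-\kappa))^j \le 1/(1-\rho/(1-\kappa)) \le \tfrac{1-\kappa}{(1-\kappa)(1-\rho)}$-type estimates and absorbing everything into $E = \exp\bigl(C/((1-\kappa)(1-\rho))\bigr)$ yields $\|\mathbb{E}[b_\tau]\|_\infty \le (1-\kappa)^\tau E \|\mathbb{E}[b_0]\|_\infty$. (The exponential form of $E$ is a convenient crude bound that also covers the possibility of a slowly-varying $\kappa_\tau$ in the non-stationary version, via $\prod(1-\kappa_j) \le \exp(-\sum \kappa_j)$ manipulations.)

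Third and finally, I would pass from the $Q$-bias to the advantage bias. Since $\widetilde{A}_{\pi,\tau}(s,a) = \widetilde{Q}_{\pi,\tau}(s,a) - \sum_{a'}\pi(a'|s)\widetilde{Q}_{\pi,\tau}(s,a')$ and $A_\pi(s,a) = Q_\pi(s,a) - \sum_{a'}\pi(a'|s)Q_\pi(s,a')$, subtracting gives $\widetilde{A}_{\pi,\tau}(s,a) - A_\pi(s,a) = b_\tau(s,a) - \sum_{a'}\pi(a'|s)b_\tau(s,a')$, whose expectation is bounded in absolute value by $\|\mathbb{E}[b_\tau]\|_\infty + \|\mathbb{E}[b_\tau]\|_\infty = 2\|\mathbb{E}[b_\tau]\|_\infty$ (triangle inequality, $\pi(\cdot|s)$ a probability vector). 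Chaining with the previous display gives exactly the claimed bound $2\|\mathbb{E}[b_\tau]\|_\infty \le (1-\kappa)^\tau \cdot 2E \cdot \|\mathbb{E}[b_0]\|_\infty$.

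The main obstacle I anticipate is the second step: making precise the statement that the conditional sampling distribution at step $\tau$ converges geometrically to $p_\pi$ and that the resulting perturbation terms are both summable and compatible with the constant step size. This requires a mixing/ergodicity argument for the state-action chain under $\pi$ (or under the non-stationary sequence of policies in the application), careful bookkeeping of how the perturbation constant $C$ and decay rate $\rho$ enter, and verifying that the contraction factor $1-\kappa$ stays bounded away from $1$ uniformly — in particular that $\alpha$ is small enough that $1-\alpha p_\pi(s,a) \ge 0$ so the per-coordinate updates really are convex combinations. The rest is essentially unrolling a scalar geometric recursion and the elementary advantage-decomposition inequality.
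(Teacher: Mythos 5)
Your proposal matches the paper's proof essentially step for step: the same conditional bias recursion with Bellman cancellation, the same use of uniform geometric ergodicity to write the visitation distribution as $d_\pi$ plus an $O(\rho^\tau)$ perturbation, the same $\ell_\infty$ contraction with rate $1-\kappa$ absorbed into the exponential constant $E$, and the same triangle-inequality factor of $2$ when passing from the $Q$-bias to the advantage bias. The only cosmetic difference is that you unroll an additive perturbation recursion while the paper keeps the perturbation multiplicative and bounds $\prod_k(1-\kappa+\delta_k)$ directly by $(1-\kappa)^\tau\exp\bigl(\tfrac{1}{1-\kappa}\sum_k\delta_k\bigr)$; both yield the stated bound.
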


We prove this Lemma in Appendix \ref{sec:proof_lemma}.

\paragraph{Corollary}
Under standard conditions (namely, bounded rewards, geometric ergodicity, constant step size), temporal difference (TD) learning produces advantage estimates $\widetilde{A}_\pi(s,a)$ satisfying Assumption~\ref{lm:est-eps} with bias $\epsilon_\tau$ converging to $0$ at a geometric rate. This justifies their use in Theorems~\ref{th:cesaro} and~\ref{th:HP-estim}.

\paragraph{Remark}
If there exist $(s, a)$ such that $p_\pi(s, a) = 0$, the result still holds for all reachable pairs, and $\min_{s,a} p_\pi(s,a)$ should be understood as taken over those pairs. Unreachable pairs are not visited and thus irrelevant for learning.

\paragraph{Related Work and Comparison}

The bias of temporal difference (TD) learning has been analyzed primarily under linear function approximation and constant step sizes. Classical works such as \citet{tsitsiklis1997analysis} and \citet{borkar2000ode} focus on asymptotic convergence using stochastic approximation, but do not provide finite-time bias guarantees. Recent finite-time analyses by \citet{SrikantYing2019}, \citet{Dalal2018TD}, and \citet{Bhandari2018} derive error and convergence rates under stationary or Markovian sampling, but they do not isolate explicit bias bounds, while \citet{KondaTsitsiklis2004} study asymptotic bias under constant step sizes. Off-policy methods like emphatic TD \citep{Yu2015ETD} address distribution mismatch but remain asymptotic.

Our contribution complements this line of work by providing an explicit finite-time bias bound for TD learning with constant step sizes, under potentially non-stationary sampling. We derive a bound of the form
\[
\|x_{\tau} - x^*\| \leq \rho^\tau \|x_0 - x^*\| + C_0 \alpha + \sum_{s=0}^{\tau-1} \rho^s \delta_{\tau-s},
\]
where $\delta_t$ measures perturbations from sampling shifts. Our bound separates contraction, step-size bias, and perturbation effects, offering a flexible tool for analyzing learning under changing environments. We build on standard contraction arguments, applying them recursively to track error accumulation under non-stationary conditions.

A more detailed discussion of related work and how our results compare can be found in Appendix~\ref{appendix:related-work-lemma}.

\section{Implementation schemes: Policy Updates and Advantage Computation}\label{sec:implementation_schemes}
We develop and analyze policy learning methods for large-scale expert orchestration in reinforcement learning (RL). Our goal is to design algorithms that can dynamically combine expert policies to improve decision-making in complex environments. Crucially, we aim for methods that not only scale to large state and action spaces but also preserve interpretability, which is essential in domains where transparency and trust are critical.

The learning process is structured around two key steps: (i) updating the weights $q_t(\cdot \mid s)$ assigned to each expert policy $\pi_k \in \Pi$ based on an estimated advantage function, and (ii) evaluating or estimating this advantage function under the updated policy to inform future updates. We explore two complementary approaches for realizing this process: a \emph{tabular} method for smaller state-action spaces and a \emph{neural network} method for high-dimensional or continuous environments. Both approaches follow the same general framework, but differ in how they represent and approximate the advantage function.

In addition to these, we propose a novel \emph{NN-based policy learning approach}, where not only the advantage function but also the policy itself is directly parameterized by a neural network. This method enables continuous generalization across the state space and removes the need for explicit tabular representations, providing a scalable and flexible solution for complex decision-making tasks. Together, these methods demonstrate how advantage-based orchestration can adapt to different problem regimes and open the door to principled learning in large-scale settings.

From now on, we will use \( q \) to denote a general set of weights, representing any policy that maps each state \( s \in \mathcal{S} \) to a probability distribution \( q(\cdot \mid s) \) over \([K]\). In the description of routines and algorithms, we will use \( q_t \) to refer specifically to the set of weights learned at step \( t \) of the main algorithm, where \( t \) indexes the overall learning iterations of the policy.  

Within each learning step \( t \), we employ subroutines to estimate the advantage function. In this framework, we use \( \tau \) to index the discrete time steps within these subroutines, where observations and updates occur for estimating the Q-values and advantage function. At each step \( \tau \), the agent observes a transition in the environment, characterized by the state \( s_\tau \), action \( a_\tau \), action index \( k_\tau \), and the resulting reward \( r_\tau \). This temporal indexing ensures the sequential updating of the Q-value estimate, \( \widetilde{Q}_{q_t \Pi}(s_\tau, k_\tau) \), facilitating the computation of the value estimate through a mixture over action indices \( k \in [K] \). The advantage function is then derived from these updated estimates.

We begin by describing the advantage estimation procedures in both tabular and neural network settings, then detail the two policy learning approaches.

\subsection{Advantage Function Estimation}

Accurately estimating the advantage function is essential for guiding policy improvement, whether in simple tabular settings or in complex, high-dimensional environments.

\begin{remark} \label{remark:A_contruction}
    In both estimation procedures described below, we do not directly estimate the advantage function. Instead, we first estimate the Q-function and then compute the advantage using the following formula:
    \begin{align*}
        \widetilde{V}_{q\Pi}(s) &= \sum_{k \in [K]}q(k|s) \widetilde{Q}_{q\Pi}(s, k) \\
        \widetilde{A}_{q\Pi}(s, k) &= \widetilde{Q}_{q\Pi}(s, k) - \widetilde{V}_{q\Pi}(s)
    \end{align*}
for each $s\in \mathcal{S}$.
\end{remark}

\paragraph{Tabular} See Section~\ref{sec:tabular-adv} for details on the tabular estimation of $A_{q_t\Pi}(s,k)$ (and performance guarantees).

\paragraph{Neural Network}\label{sec:adv_learning_NN}

In environments with high-dimensional or continuous state spaces, maintaining an explicit table for $Q_{q\Pi}(s,k)$ or $A_{q_t\Pi}(s,k)$ becomes impractical. Instead, we employ a neural network (NN) (see Figure \ref{fig:neural_network}) to approximate the advantage function. In Algorithm~\ref{alg:2} (Appendix~\ref{sec:alg:est}), we use a DQN-based architecture adapted to combine $K$ experts, with modifications to the output layer to reflect the $(s,k)$ structure. 

An alternative approach is Double DQN \citep{vanhasselt2015deepreinforcementlearningdouble}, which mitigates Q-value overestimation by using separate networks for action selection and value estimation (see Algorithm~\ref{alg:doubleDQN}, Appendix~\ref{sec:alg:est}).

\begin{figure}[h]
    \centering
    \begin{tikzpicture}[
        node distance=0.9cm and 1.2cm,
        every node/.style={draw, circle, minimum size=0.8cm, inner sep=0pt},
        input/.style={fill=blue!20},
        hidden/.style={fill=green!20},
        output/.style={fill=red!20},
        textnode/.style={draw=none, fill=none, font=\scriptsize}
        ]
        
        \node[input] (I1) {};
        \node[input, below=0.6cm of I1] (I2) {};
        \node[input, below=0.6cm of I2] (I3) {};

        \node[hidden, right=of I2] (H11) {};
        \node[hidden, above=0.6cm of H11] (H12) {};
        \node[hidden, below=0.6cm of H11] (H13) {};
        
        \node[hidden, right=of H11] (H21) {};
        \node[hidden, above=0.6cm of H21] (H22) {};
        \node[hidden, below=0.6cm of H21] (H23) {};
        
        \node[output, right=of H21] (O1) {};

        \node[textnode, left=0.4cm of I2] {\textbf{Input}};
        \node[textnode, right=0.4cm of O1] {\textbf{Output (linear)}};
        
        \node[draw=none, fill=none, fit=(H12) (H22), textnode, yshift=0.7cm] {\textbf{Fully connected layers + ReLU}};

        \foreach \i in {1,2,3}
            \foreach \j in {11,12,13}
                \draw[->] (I\i) -- (H\j);

        \foreach \i in {11,12,13}
            \foreach \j in {21,22,23}
                \draw[->] (H\i) -- (H\j);

        \foreach \i in {21,22,23}
            \draw[->] (H\i) -- (O1);
    \end{tikzpicture}
    \caption{Neural network \(\mathcal{N}_{\theta}\) with parameters \(\theta \in \Theta\): two fully connected layers with ReLU activation, followed by a linear output layer.}
    \label{fig:neural_network}
\end{figure}
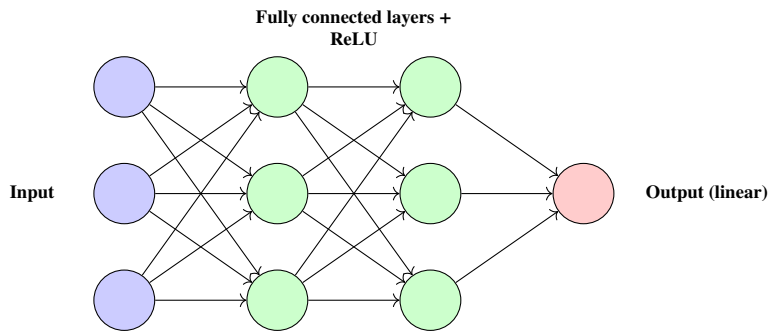

\medskip

\subsection{Tabular Approach for Policy Learning}

In the tabular case, we represent the policy weights $q_t(k\mid s)$ directly in a table for every state-expert pair $(s,k)$, and similarly store or update the advantage values in a tabular structure.  We recall from the previous section that the mixing weights $q_t(\,\cdot\,|s)$ can be updated by leveraging the adversarial-learning strategies (e.g., polynomial or exponential potentials) to guarantee sublinear regret (see Table \ref{tab:tabular-policy}).  The advantage function, in turn, can be estimated either \textbf{(i)} fully in tabular form, or \textbf{(ii)} using a neural network while still employing the tabular framework for the policy weights.

\begin{table}[ht]
\centering
\caption{Tabular policy learning approach referencing expert strategies from the previous section. The advantage function $\widetilde{A}_{q_t \Pi}$ can be estimated by a tabular update or by a neural-network approximation (NN), as indicated in the rightmost column.}
\label{tab:tabular-policy}
\begin{tabular}{@{}lcc@{}}
\toprule
\textbf{Step} & \textbf{Update Mechanism} & \textbf{Advantage Estimation} \\
\midrule
1. \textbf{Initialize} & 
\begin{minipage}[t]{0.44\linewidth}
\raggedright
\emph{Set initial weights} $q_0(k\mid s)$ for all $(s,k)$, e.g., uniform over $k$. 
\end{minipage}
&
\begin{minipage}[t]{0.24\linewidth}
\raggedright
\textbf{Tabular:} \\
Initialize $\widetilde{Q}_{\,q_0\Pi,0}(s,k)=0.$ \\[4pt]
\textbf{NN:} \\
Initialize network weights.
\end{minipage}
\\
\midrule
2. \textbf{Policy update} & 
\begin{minipage}[t]{0.44\linewidth}
\raggedright
\emph{Use a strategy of choice (polynomial potential, exponential potential) to re-weight experts: $q_{t+1}(\,\cdot\,|s)$ based on $\left(\widetilde{A}_{q_h \Pi}(s,\cdot)\right)_{h\le t}$.}
\end{minipage}
&
\\
\midrule
3. \textbf{Advantage update} & 
\begin{minipage}[t]{0.44\linewidth}
\raggedright
\emph{Collect new data $(s_{\tau}, k_{\tau}, a_{\tau}, r_{\tau})_{\tau \le H}$ by applying $q_{t+1}$ and update advantage estimates.}
\end{minipage}
&
\begin{minipage}[t]{0.24\linewidth}
\raggedright
\textbf{Tabular} or \textbf{NN} (see Algorithms \ref{alg:1} and \ref{alg:2})

\end{minipage}
\\
\bottomrule
\end{tabular}
\end{table}

\subsection{Another approach: Neural Network for Policy Learning} \label{sec:NN_policy_updates}

In contrast to the previous approach, where only the advantage function might be approximated via a neural network while the policy remains tabular, we now also model the policy using a neural network. This leads to a more scalable solution in which the policy is directly parameterized and computed by the network. 

A central motivation for this shift is the limitation of the tabular approach: updating the expert mixture weights requires aggregating advantage estimates from all previous steps $(\widetilde{A}_{q_h\Pi})_{h \le t}$. While feasible in small problems, this quickly becomes impractical in high-dimensional or long-horizon settings, as it would demand storing and reprocessing a growing set of advantage values for each state-expert pair. To overcome this bottleneck, we adopt a more involved architecture where the advantage is maintained online by a critic network, removing the need for explicit historical storage.

Specifically, we adopt an actor-critic framework: the critic is represented by a neural network $\mathcal{N}_{\theta}$ that estimates state-dependent advantages, while the actor is a separate network $\mathcal{M}_{\phi}$ that outputs a probability distribution over the $K$ experts.

A key advantage of this method is that the policy is updated continuously and generalizes across the state space, removing the need to store or access a tabular representation. However, we note that both $\mathcal{N}_{\theta}$ and $\mathcal{M}_{\phi}$ are implemented as relatively simple feedforward neural networks. These architectures do not incorporate domain-specific inductive biases, such as attention mechanisms, relational reasoning, or graph-based structure, which could potentially improve learning efficiency and generalization in structured environments. While sufficient for our current setting, future work may benefit from exploring more expressive or specialized architectures tailored to the combinatorial and graph-structured nature of the expert selection problem.

\paragraph{Comparison with Existing Neural Policy Gradient Methods}

Neural policy gradient (NPG) and actor-critic methods are widely used in reinforcement learning for high-dimensional decision problems \citep{sutton2000policy, Kak01, schulman2015trust}. While our approach maintains the actor-critic structure, it diverges from conventional policy gradient methods in several key respects.

Most notably, we replace gradient-based policy updates with a potential-based scheme that aggregates temporal advantage estimates. This not only removes the need for direct gradient computation, offering increased robustness to biased value estimates and greater stability in non-stationary settings, but also resolves the storage issue inherent in the tabular approach. In traditional potential-based orchestration, policy weights at time $t$ depend on the cumulative sequence of past advantage values, which becomes infeasible to store as $t$ grows. By contrast, our critic network $\mathcal{N}_{\theta}$ provides an online approximation of the advantage function, ensuring that the actor $\mathcal{M}_{\phi}$ can be updated incrementally without retaining the full advantage history.

Furthermore, our actor network outputs discrete distributions over structured decisions, rather than logits over primitive actions, enabling more controlled and interpretable learning.

A detailed technical comparison with related methods is provided in Appendix~\ref{app:pg-comparison}.

\begin{algorithm}[H]
\caption{Neural Network Policy Learning}
\label{alg:3}
\begin{algorithmic}[h]
\STATE \textbf{Input:} state space $\mathcal{S}$, action space $\mathcal{A}$, $K$ experts $\{\pi_1, \pi_2, \ldots, \pi_K\}$, transition kernel $\mathcal{T}: \mathcal{S}\times \mathcal{A} \mapsto \mathcal{P}(\mathcal{S})$, reward function $\mathcal{R}: \mathcal{S}\times \mathcal{A} \mapsto [0, 1]$, potential functions $\left(\varphi_t: \mathbb{R} \mapsto [0, \infty]\right)_{t\le T}$ (see section \ref{sec:learning-strat}), a parameter space $\Phi$, a loss function for the actor $\mathcal{L}_{\rm{actor}}:\Phi \mapsto \mathbb{R}$ (e.g., KL-divergence), number of learning steps $T$, learning rate decay factor and discount factor $\lambda, \gamma \in (0, 1)$, and an \textbf{advantage simulator} for estimating $\widetilde{A}_{q\Pi}$. 
\STATE \textbf{Initialize:}  State $s_0$, starting learning rate $\alpha_0$, Neural network $\mathcal{M}_{\phi_0}(s)$, with parameters $\phi_0 \in \Phi$, Advantage simulator (e.g., Double DQN) with replay buffer $\mathcal{D}$.
\FOR{episode $t= 1$ to $T$}
    \STATE Reset environment
    \FOR{$\tau = 1$ to $H$}
    \STATE Compute expert distribution $\mathcal{M}_{\phi}(s_\tau) $
    \STATE Observe $s_\tau$, select and execute action $k_\tau \sim \mathcal{M}_{\phi}(s_\tau)$, receive reward $r_\tau$, and observe next state $s_{\tau+1}$.
    \STATE Store transition $(s_\tau, k_\tau, r_\tau, s_{\tau+1})$ in buffer $\mathcal{D}$.  
        \STATE Update the advantage simulator (see Section~\ref{sec:adv_learning_NN}) 
        \STATE Set $ s_\tau = s_{\tau+1}$
    \ENDFOR
    
    \STATE \textbf{Distribution network update:}
    \STATE \quad Sample a minibatch $B_e$ from $\mathcal{D}$
    \STATE \quad Query the advantage simulator for estimated advantages: 
    \[
    \widetilde{A}_{q\Pi}(B_e, \cdot) \leftarrow \text{Simulator}(B_e)
    \]
    \STATE \quad Compute new expert distribution:
    \[
       \mathcal{M}_{\rm{target}}(\cdot \mid B_e) 
       = \varphi_t
        \left( \,  \sum_{h=0}^{t-1}\widetilde{A}_{q_h\Pi}(B_e, \cdot)  \right)
    \]
    \STATE \quad Compute distribution update loss $\mathcal{L}_{\mathrm{actor}} (\phi_t)= \ell\bigl(\mathcal{M}_{\rm{target}},\mathcal{M}_{\phi_t})$
    \STATE \quad Backpropagate and update $\phi_t$:
\begin{itemize}
    \item  Compute gradient of loss w.r.t. network parameters: $\nabla_{\phi_t} \mathcal{L}_{\text{actor}}(\phi_t)$.
     \item Update online network parameters using Adam optimizer:
      \[
    \phi_{t+1} = \phi_{t} - \alpha_t \nabla_{\phi_t} \mathcal{L}_{\text{actor}}(\phi_t),
    \]
\end{itemize}
\STATE Decay learning rate: $\alpha_{\tau+1} = \alpha_{\tau} \cdot \lambda$.
     
\ENDFOR
\STATE Set $q_T(\cdot \mid s) = \mathcal{M}{\phi_T}(s)$ for all $s \in \mathcal{S}$
\STATE \textbf{Output:} Learned expert mixture $q_T\Pi$.
\end{algorithmic}
\end{algorithm}

\textbf{Training details:}

\begin{itemize}
    \item The neural network $\mathcal{M}_{\phi}$ is trained using the Adam optimizer \citep{kingma2014adam} with a decaying learning rate. The learning rate $\alpha_t$ is reduced after each update step by a decay factor $\lambda$ to improve stability during training.
    \item The loss function $\mathcal{L}_{\text{actor}}$ for updating the distribution network is computed by comparing the updated expert mixture $\mathcal{M}_{\text{target}}$ (derived from the estimated advantage) to the current policy distribution $\mathcal{M}_{\phi_t}$:
    \[
    \mathcal{L}_{\text{actor}} = \ell\left(\mathcal{M}_{\text{target}},\mathcal{M}_{\phi_t}\right),
    \]
    where $\ell$ is a suitable distance metric (e.g., Kullback-Leibler divergence or cross-entropy).
    
    \item The gradients of $\mathcal{L}_{\text{actor}}$ with respect to $\phi_t$ are computed via backpropagation, and the parameters $\phi_t$ are updated using the Adam optimizer. The update rule is:
    \[
    \phi_{t+1} = \phi_{t} - \alpha_t \nabla_{\phi_t} \mathcal{L}_{\text{actor}},
    \]
    where $\alpha_t$ is the learning rate at step $t$.
\end{itemize}

\noindent
In practice, this neural-network policy approach allows for a much more compact representation of large or continuous state spaces, at the cost of introducing additional approximation errors and the need for suitable hyperparameter tuning (e.g., learning rate, batch size, architecture). Once the policy network converges, we obtain a function $s\mapsto ({q}(1\mid s), \dots, {q}(K\mid s))$ that orchestrates the experts effectively based on state features.
\\\\
In Algorithm \ref{alg:3}, we present the procedure described above in detail.

\begin{remark}
    In the algorithms described, we directly use the input expert policies or the learned distribution $\mathcal{M}_{\phi}$ for selecting actions. However, this can be replaced by an $\epsilon$-greedy strategy, where actions are sampled according to the given policy with probability $1 - \epsilon$ and chosen randomly with probability $\epsilon$. The exploration rate $\epsilon$ can be decayed over time to balance exploration and exploitation.
\end{remark} 

\noindent
In summary, the tabular and neural network approaches provide two scalable strategies for expert orchestration. The tabular method offers simplicity and exactness in small-scale settings, while the neural network method generalizes across large or continuous state spaces, trading off approximation error for scalability. Together, they illustrate how advantage-based policy learning can be adapted to different problem regimes.

\section{Stochastic matching model}\label{sec:stoch_model}
We study a discrete-time stochastic matching problem motivated by applications such as online marketplaces, supply chain logistics, and organ exchange programs. The system consists of $I$ item classes, each representing a queue with finite capacity $L$. Items arrive over time, and at each decision point, the system can either match a new or existing item with a compatible counterpart, store it for future matching, or discard it if no feasible option exists.

The matching possibilities are encoded by an undirected compatibility graph, where an edge between classes $i$ and $j$ indicates that items from these classes can be paired. Each class $i$ has an arrival rate $\lambda_i$, and items may leave or relocate at rates $\mu_i$ and $\nu_i$, respectively. To model these stochastic dynamics in discrete time, we apply a uniformization procedure that bundles arrival, departure, relocation, and idle events under a single time-homogeneous process.

\paragraph{Formal details}
The full mathematical formulation, including the transition kernel, event probabilities, and reward computation, is provided in Appendix~\ref{appendix:matching}. This includes the definition of the state space $\gS$, the action space $\gA$, and the exact reward components used in our simulations.

\paragraph{Decision and reward structure}
At each time step, the learner observes the current queue state and event type (arrival, departure, relocation) and selects an action $a_t$: either match two items, place the new item into its queue, or trash it. Rewards balance positive gains for successful matches against negative costs for maintaining queues or relocating items. For example, in organ exchange, matches are weighted by clinical priority and compatibility, while unused organs incur wastage costs.

\paragraph{Expert policies}
We define four expert policies as follows:
\begin{itemize}
    \item The first expert policy, $\pi_1$, is called \emph{match the longest}. If at least one match is possible, this policy always selects the class with the largest number of items in its queue. In the case of a tie, the policy breaks it based on the payoffs.

    \item The second expert policy, $\pi_2$, is of the \emph{edge-priority} type. It selects matches according to a priority order defined on the edges of the compatibility graph. If at least one match is possible, $\pi_2$ chooses the match that leads to the largest payoff. Ties are broken based on queue lengths.

    \item The third expert policy, $\pi_3$, also follows a greedy approach like $\pi_2$, but with an additional restriction: it only applies the greedy policy to the items belonging to a specific set of compatible classes $P(\pi_3)$. $\pi_3$ ignores items that do not belong to this set. This policy will be employed in the donor exchange example (\ref{sec:organ_exchange}), where it might be beneficial to overlook matches with a low-urgency class in favor of waiting for a higher reward match.

    \item The fourth, and final, expert policy, $\pi_4$, is the uniform policy: it randomly selects a match among the available ones.

\end{itemize}

If no match is possible in any of the above policies, in the events of an arrival or a relocation, they proceed as follows: if the maximum allowed queue length $L$ has not been reached, the item is added to the queue; if the queue is full, the item is discarded (``trashed'').

\paragraph{Example models}
We illustrate this general framework using two representative examples, depicted in Figures~\ref{fig:diamond_graph} and~\ref{fig:org_don_graph}.

\begin{itemize}
    \item \textbf{Diamond graph:}
The diamond graph in Figure~\ref{fig:diamond_graph} represents a symmetric four-node matching environment. Each node corresponds to a class of items with specific arrival rates, and edges encode matchable pairs along with associated rewards. This setup fits directly into our general stochastic matching framework: the state space is defined by queue lengths at each node, the action space comprises match decisions or queue management actions, and transitions reflect the stochastic arrival and match dynamics.

At each time step, the system observes an event (e.g., arrival) and selects an action (e.g., match two items or queue an incoming item). The reward function assigns fixed values to each edge: for instance, the edge $(2,4)$ yields a high reward of $200$, incentivizing that match when feasible. Expert policies such as ``match the longest'' or ``greedy payoff'' can exhibit short-term gains but may ignore future match opportunities. This controlled setting allows us to evaluate how our orchestration strategy learns to balance such trade-offs by adaptively combining policies to improve long-term value.

    \begin{figure}[h]
	\centering
	\begin{tikzpicture}[scale=1]
		\tikzstyle{roundnode} = [shape=circle, draw, fill=blue!10, minimum size=0.01cm]
		
		\node[roundnode] (A) at (0, 0) {2};
		\node[roundnode] (B) at (-2, -1.5) {1};
		\node[roundnode] (C) at (2, -1.5) {4};
		\node[roundnode] (D) at (0, -3) {3}; 
		
		\draw (A) -- (B);
		\draw (A) -- (C);
		\draw (A) -- (D); 
		\draw (B) -- (D);
		\draw (C) -- (D);
		
		\draw (A) -- (B) node[midway, above left] {$g_{1,2} = 10$};
		\draw (A) -- (C) node[midway, above right] {$g_{2,4}= 200$};
		\draw (A) -- (D) node[midway, right] {$g_{2,3}= 50 $};
		\draw (B) -- (D) node[midway, below left] {$g_{1,3}= 1$};
		\draw (C) -- (D) node[midway, below right] {$g_{3,4}= 20$};
		
		\draw[->] (0, 1) -- (A) node[midway, right] {$\lambda_2= 0.225 $};
		\draw[->] (-4, -1.5) -- (B) node[midway, above ] {$ \lambda_1 = 0.125 $};
		\draw[->] (4, -1.5) -- (C) node[midway, above] {$\lambda_4= 0.05 $};
		\draw[->] (0, -4) -- (D) node[midway, right] {$\lambda_3= 0.15 $};
	\end{tikzpicture}%
	\caption{Diamond network with four nodes (labeled 1--4).}
	\label{fig:diamond_graph}
\end{figure}
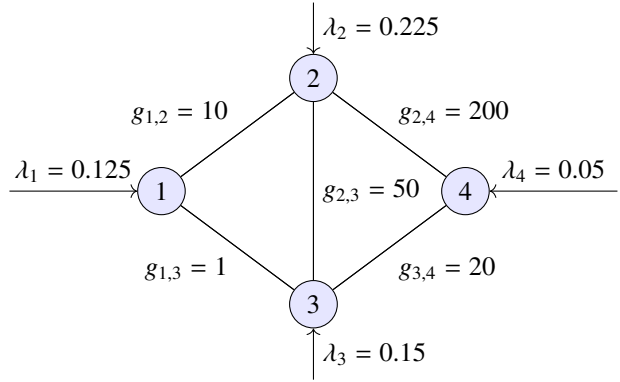
    \item \textbf{Organ exchange model:}
Figure~\ref{fig:org_don_graph} shows a compatibility graph inspired by organ exchange programs, where recipients (colored circles) and donors (green rectangles) are grouped by blood type and urgency level. Edges represent feasible medical matches. This environment is modeled as a stochastic matching problem with heterogeneous queues, rare classes, and urgency-based transitions. 

The state includes the number of patients at each node; actions involve selecting feasible matches, queuing, or discarding items. Transitions are influenced by arrivals, departures, and urgency escalations, modeled via relocation to higher urgency states. Rewards are weighted by urgency and penalize discards or delayed matches. For example, relocating a patient from urgency level 0 to 1 incurs a smaller penalty than losing a level-2 patient to departure.

This example highlights the importance of policy adaptivity: naive heuristics may favor frequent, low-urgency matches, while the orchestrator can learn to prioritize high-reward opportunities even if they are rare. It demonstrates how our framework accommodates both fairness and efficiency in high-stakes, structured decision problems. For a detailed explanation of this model see Section \ref{sec:organ_exchange}.

    \begin{figure}[t]
    \centering
    \scalebox{0.85}{
    \begin{tikzpicture}[scale=0.1, every node/.style={draw, circle, minimum size=2.5mm, font=\scriptsize}, donor/.style={rectangle, fill=green!50!black, font=\scriptsize}]
        \node[donor] (0) {0};
        \node[donor, right=1.25cm of 0] (1) {1};
        \node[donor, below left=3cm and 0.3cm of 0] (2) {2};
        \node[donor, below right=3.5cm and 2.5cm of 1] (3) {3};

        \node[fill=red!70] (4) [above left=0.9cm and 0.6cm of 0] {4};
        \node[fill=orange!70] (5) [below=0.3cm of 4] {5};
        \node[fill=yellow!70] (6) [below=0.3cm of 5] {6};

        \node[fill=red!70] (9) [above right=0.6cm and 0.6cm of 1] {9};
        \node[fill=orange!70] (8) [above=0.3cm of 9] {8};
        \node[fill=yellow!70] (7) [above=0.3cm of 8] {7};

        \node[fill=orange!70] (11) [left=0.9cm of 2] {11};
        \node[fill=red!70] (10) [below=0.3cm of 11] {10};
        \node[fill=yellow!70] (12) [above=0.3cm of 11] {12};

        \node[fill=red!70] (13) [left=0.9cm of 3] {13};
        \node[fill=orange!70] (14) [above=0.3cm of 13] {14};
        \node[fill=yellow!70] (15) [above=0.3cm of 14] {15};

        \foreach \i/\j in {0/4, 0/5, 0/6, 1/7, 1/8, 1/9, 2/10, 2/11, 2/12, 3/13, 3/14, 3/15, 9/0, 8/0, 7/0, 13/0, 13/1, 13/2,14/0, 14/1, 14/2, 15/0, 15/1, 15/2, 12/0, 11/0, 10/0}
            \draw[thick] (\i) -- (\j);

        \begin{scope}[on background layer]
            \node[draw, ellipse, fit=(0)(4)(5)(6), inner sep=3pt, label={0}] {};
            \node[draw, ellipse, fit=(1)(7)(8)(9), inner sep=3pt, label={A}] {};
            \node[draw, ellipse, fit=(2)(10)(11)(12), inner sep=3pt, label={B}] {};
            \node[draw, ellipse, fit=(3)(13)(14)(15), inner sep=3pt, label={AB}] {};
        \end{scope}

        \draw[->] (1) -- ++(20,0) node[right, draw=none] {donor};
    \end{tikzpicture}
    }
    \caption{Compatibility graph in the organ donation example. Donors (green rectangles) connect to recipients (colored circles) according to blood type and urgency.}
    \label{fig:org_don_graph}
\end{figure}
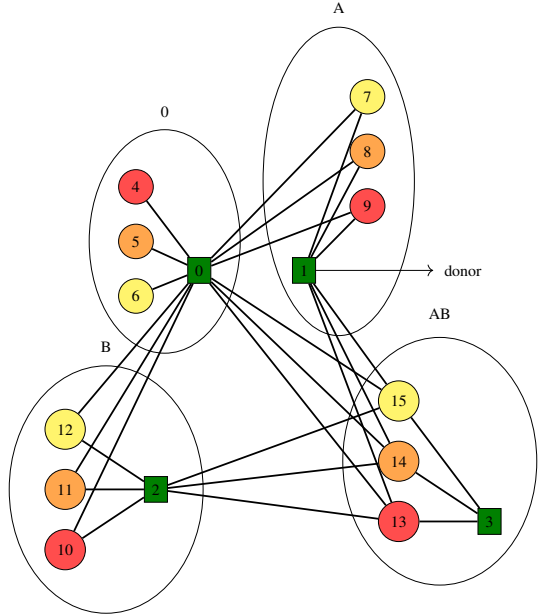
\end{itemize}

These two models exemplify how our general orchestration framework can be applied to both synthetic and realistic domains. The diamond graph offers a controlled setting for testing convergence and performance guarantees, while the organ exchange model highlights the framework’s ability to handle real-world constraints like fairness, urgency, and sparse compatibility. We provide experimental validation in Section \ref{sec:simulations}.

\section{Simulations}\label{sec:simulations}
In this section, we systematically evaluate the effectiveness of our proposed orchestration strategies through a series of simulation experiments. Our goal is to demonstrate how adaptive expert policy orchestration improves learning speed, policy quality, and scalability across representative stochastic matching problems.

We focus on two complementary models:
\begin{itemize}
    \item Diamond graph (Figure \ref{fig:diamond_graph}): a controlled, symmetric environment designed to test and compare learning dynamics under known conditions.
    \item Organ exchange network (Figure \ref{fig:org_don_graph}): an heterogeneous environment capturing the complexity and high stakes of medical decision-making.
\end{itemize}

For each setting, we compare our methods against individual expert baselines as well as standard reinforcement learning algorithms, such as QL and Double DQN. We assess performance based on average value improvement over time, convergence behavior, and robustness across independent runs.

The results demonstrate that combining interpretable expert policies via RL-based orchestration leads to faster convergence and higher performance than both standalone experts and traditional RL methods, particularly in complex or large-scale environments.

\paragraph{Learning Schemes} The experiments in the following sections involves three different methodologies:
\begin{enumerate}
    \item Direct Tabular Policy Learning with Tabular Advantage Learning (TD Learning) \ref{sec:tabular-adv};
    \item Direct Tabular Policy Learning with Neural Network Advantage Learning (NNAL)  \ref{sec:adv_learning_NN};
    \item Neural Network Policy Learning with Neural Network Advantage Learning (NNAL) \ref{sec:NN_policy_updates}.
\end{enumerate}

\paragraph{Policy Update Strategies}
We consider the three potential-based learning schemes introduced in section \ref{sec:learning-strat}:
Polynomial potential (PP), Exponential potential with a \emph{fixed} learning rate (EP-C), Exponential potential with \emph{time-varying} learning rates (EP-T).

Below, we first describe the experimental setup and then present detailed quantitative results.

\subsection{Diamond Graph}

The first set of experiments is performed on the diamond graph depicted in Figure~\ref{fig:diamond_graph}. This simple network consists of four nodes, structured to test and compare different learning strategies. All departure and relocation probabilities are set to zero for simplicity. The arrival rates and rewards are specified in Figure~\ref{fig:diamond_graph}, while the maximum queue capacity is set to $L=5$. All model specifications are also detailed in tables \ref{tab:diamond_params} and \ref{tab:global_params_diamond} in Appendix \ref{sec:sim:setting1}.

\paragraph{Performance Criterion} We compare the performance of the algorithms in terms of the values of the learned policies averaged over all possible initial states when starting with an empty system ($V_{{q}_{t} \Pi}(\mu_0)$).

In order to obtain robust estimates, we perform $N$ independent runs of the learning process, each indexed by $n$. For each $t$, the value of the learned policy $q_t$ is averaged over these $N$ independent runs, i.e.,

\begin{align}\label{eq:avg_value}
        t \;\longmapsto\; \frac{1}{N} \sum_{n=1}^N V_{{q}_{t, n} \Pi}(\mu_0),
\end{align}
where $\widetilde{q}_{t, n}$ denotes the policy learned during the $n$-th run. The shaded areas around each curve represent $\pm 2$ standard errors, which are computed by taking the variability across the $N$ different learned policies $\{q_{t, n}\}_{n=1}^N$. These standard errors provide a measure of the uncertainty in the learning process across different trials. Throughout all simulations for this example, we set $N = 100$ and $H=40$ as number of estimation steps.

Because the transitions and rewards are fully known, we can compute exact value functions using Bellman's equations for each stationary policy; see \citet{Agarwal2019ReinforcementLT} for a thorough discussion. We compare: the performance of individual ``expert'' policies (denoted $\pi_1, \pi_2, \ldots$); the best mixture policy in the class $\gC(\Pi)$, denoted $q^\star \Pi$ ; the optimal stationary policy over the entire policy space.

A description of the parameters used in the experiments is provided in Appendix \ref{sec:sim:setting1}.

\subsubsection{Direct Tabular Learning and Comparison with Baselines}\label{sec:comp_baselines}

We consider a particular case of orchestration, namely the one where each expert corresponds to a particular action (edge). In this scenario, there are no real experts, as the policies are directly associated with individual actions.

When $K = |\gA|$ and the policies $\Delta = (\pi_a)_{a \in \gA}$ are given
by Dirac masses, i.e., $\pi_a(s) = \delta_a$
for all $s \in \gS$, then
\[
\gC(\Delta) = \bigl\{ p\Delta, \ p \in \gP(\gA)^{\gS} \bigr\}
\]
is the set of all stationary policies, stated in their tabular form via a direct parametrization (following
the terminology of~\citealp[Section~3]{Agar21}).

We compare our strategies against traditional reinforcement learning baselines:
\begin{itemize}
    \item \textbf{Q-learning vs Tabular Policy Learning with Tabular Advantage Learning (Figure \ref{fig:comparison_baselines} (a)):} We compare the value obtained using the policy learned with our strategy to the QL policy. By QL policy, we refer to the greedy policy derived from the Q-values produced by Algorithm \ref{alg:QL} (see Appendix \ref{sec:alg:std}), where actions are selected by maximizing the Q-value for each state.

    For this comparison, we ignore the cost of policy updates (which involves a single matrix multiplication) and focus solely on the number of TD updates required to estimate the advantage function at each step. Importantly, one TD update in our method has the same computational cost as one QL update. Therefore, it is meaningful to compare the two approaches directly in terms of the number of such updates. The x-axis reports the total number of TD updates used by our algorithm to estimate the advantage functions ($t \cdot H$), which corresponds to the total number of QL updates performed by the baseline.

As shown in Figure \ref{fig:comparison_baselines}, the orchestration strategy outperforms QL as it converges much faster to the optimal value overall.

\item \textbf{Double-DQN vs Tabular Policy Learning with Neural Network Advantage Learning (Figure \ref{fig:comparison_baselines} (b)) :} 
We compare the value obtained using tabular policy learning with the potential function combined with NN Advantage estimation to the Double DQN (D-DQN) policy. By Double DQN policy, we refer to the greedy policy derived from the Q-values produced by Algorithm \ref{alg:doubleDQN}, where actions are selected by maximizing the Q-value for each state. As before, for the comparison, we neglect the update on the policy and focus on the number of NN updates needed to approximate the advantage function at each step. One NN update is computationally equivalent to one step of the Double DQN algorithm.

\end{itemize}

\begin{figure}[h]
    \centering
    \begin{subfigure}[h]{0.49\textwidth}
        \centering
\includegraphics[width=\textwidth]{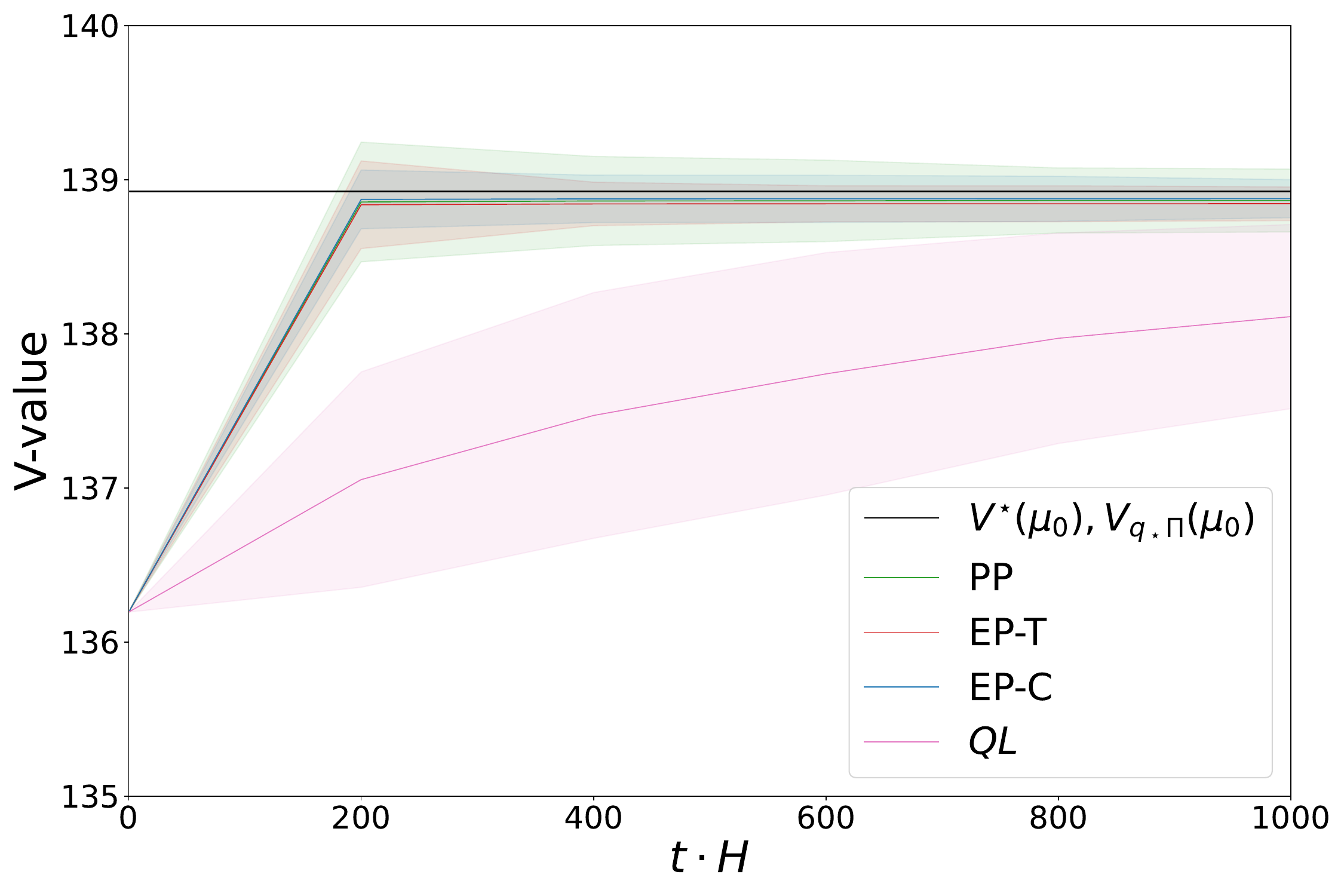}
        \caption{Tabular Advantage Learning vs QL.}
    \end{subfigure}
    \begin{subfigure}[h]{0.49\textwidth}
        \centering
\includegraphics[width=\textwidth]{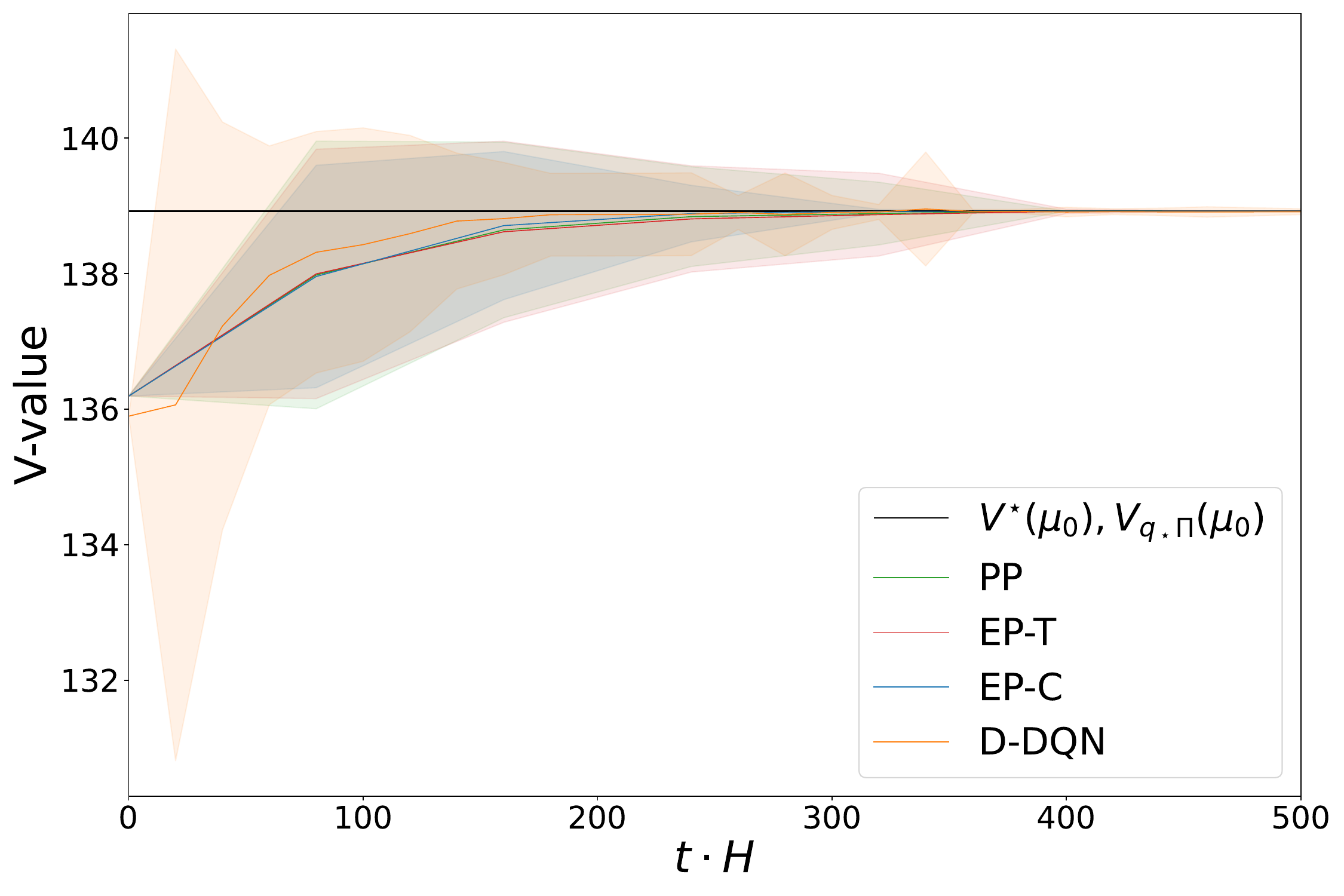}
        \caption{NN Advantage Learning vs Double-DQN.}
    \end{subfigure}
    \caption{Comparison of direct tabular learning performance against baseline methods.}
    \label{fig:comparison_baselines}
\end{figure}

\subsubsection{Learning Dynamics with Expert Policies}

In this section we consider the expert policies introduced in Section~\ref{sec:stoch_model}. In particular, for this experiment, we limit the scope to policies $\pi_1, \pi_2, \pi_4$.

Figure~\ref{fig:learning_curves} illustrates the evolution of the average value achieved by the learned mixture policies (compute as in \eqref{eq:avg_value}), where the x-axis represents policy updates. 

The constant lines in the figure indicate the values of the individual expert policies, the best mixture policy $V_{q^\star \Pi}(\mu_0)$, and the optimal policy $V^\star(\mu_0)$, which serves as a reference for comparison.

\begin{figure}[h]
    \centering
    \begin{subfigure}[t]{0.49\textwidth}
        \centering
        \includegraphics[width=\linewidth]{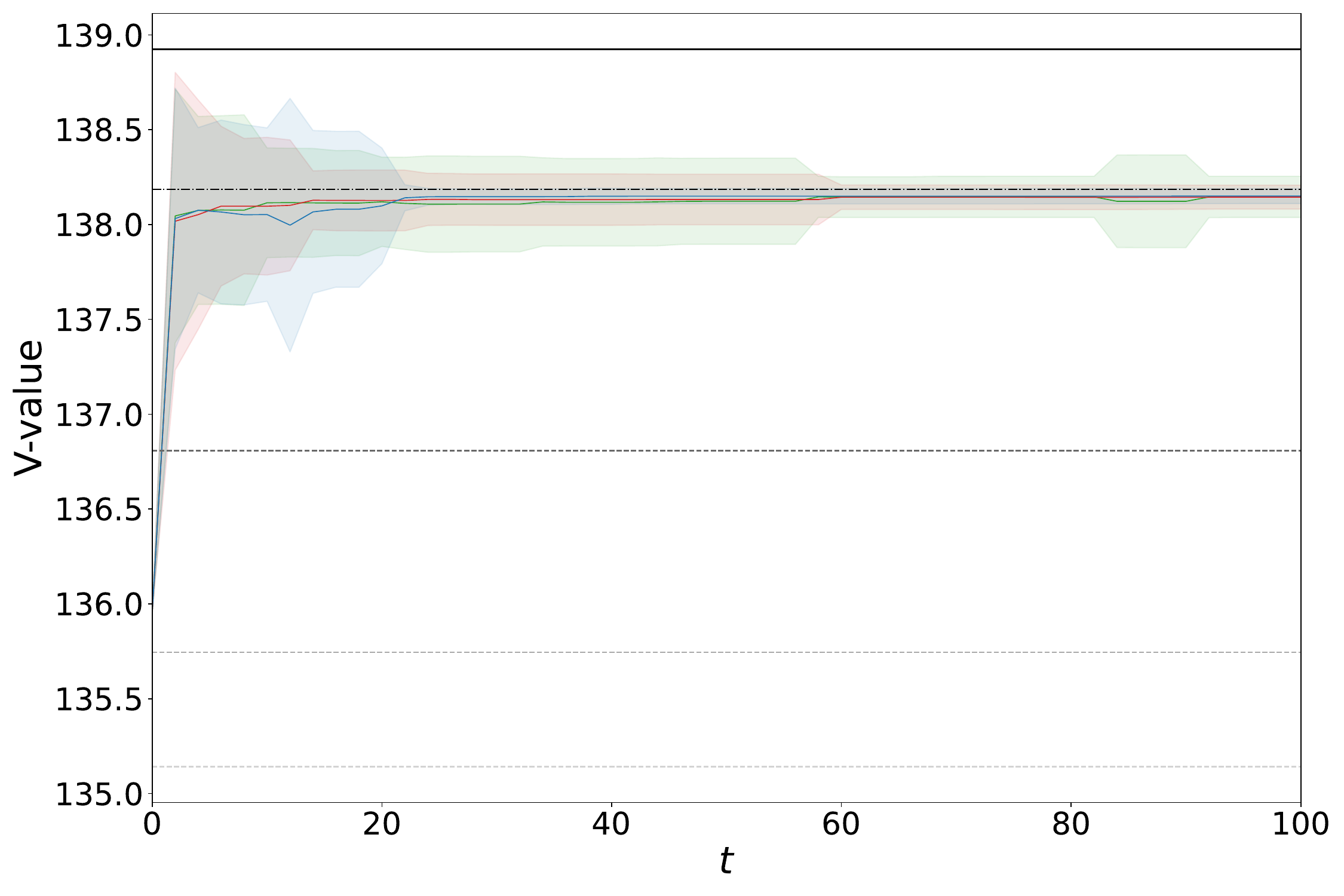}
        \caption{Tabular policy learning with tabular advantage learning.}
    \end{subfigure}
    \hfill
    \begin{subfigure}[t]{0.49\textwidth}
        \centering
       \includegraphics[width=\linewidth]{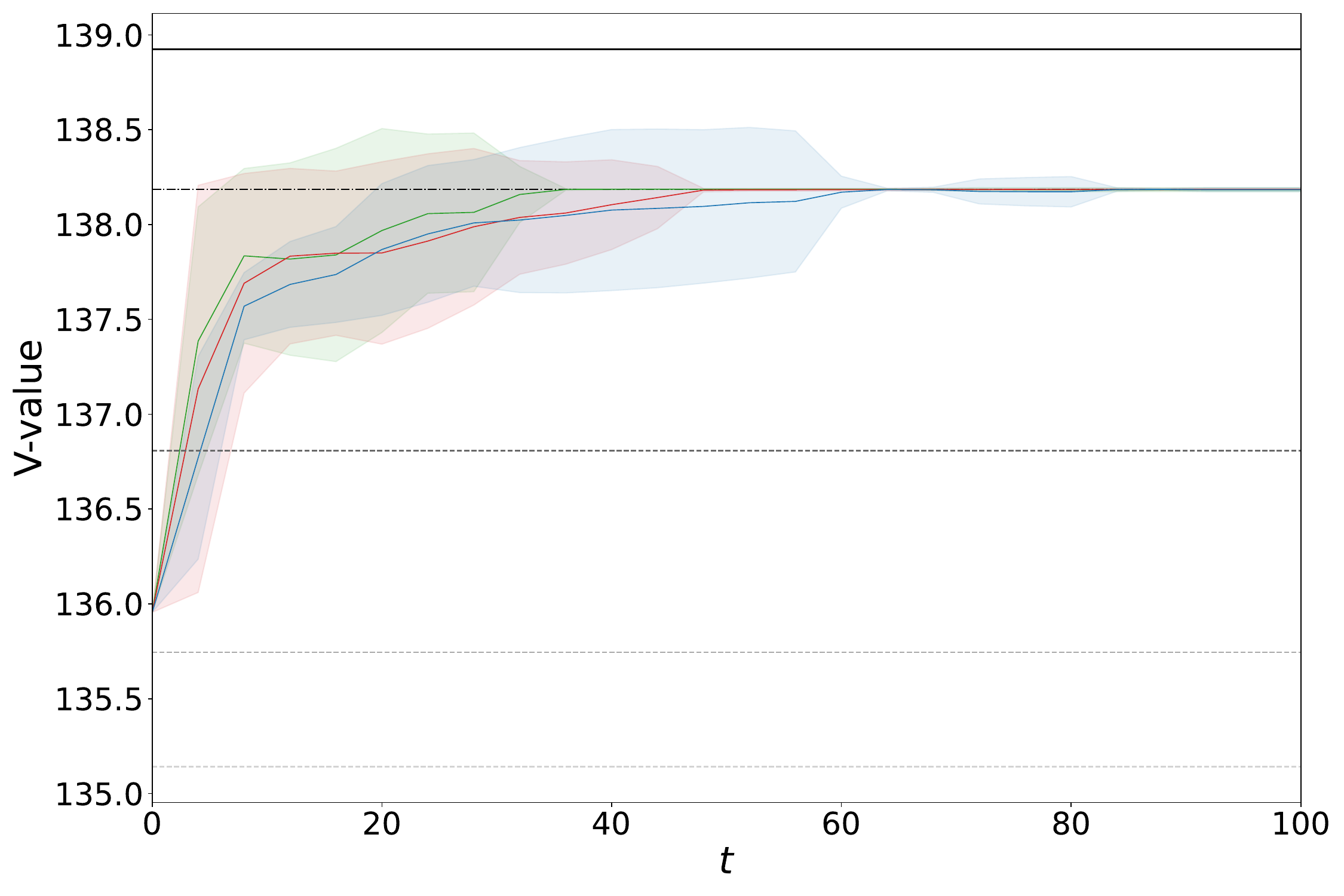}
        \caption{Tabular policy learning with NNAL.}
        
    \end{subfigure}

    \vspace{0.4cm}

    \begin{subfigure}[t]{0.49\textwidth}
        \centering
        \includegraphics[width=\linewidth]{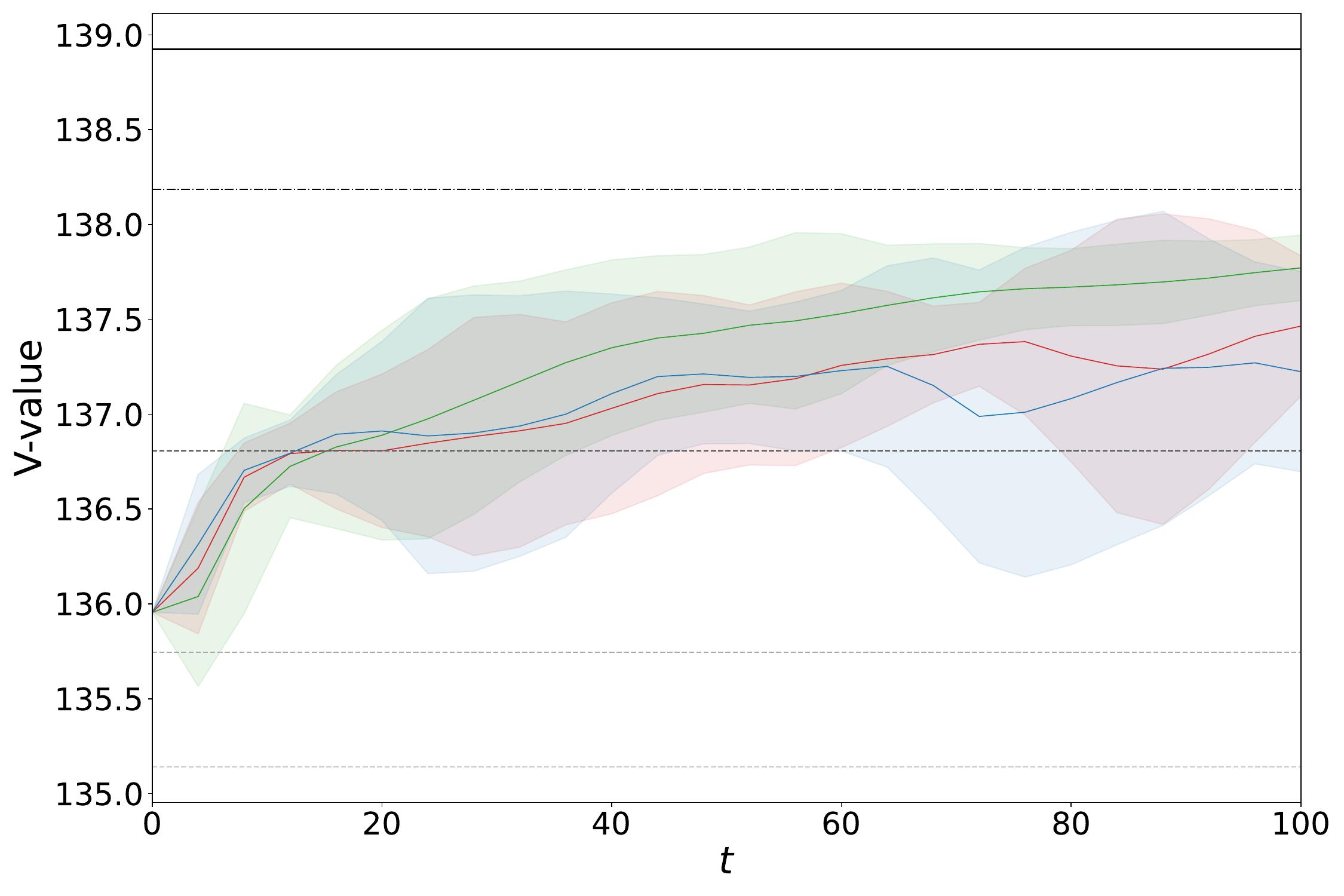}
        \caption{NN policy learning with NNAL.}
    \end{subfigure}
    \hfill
    \begin{subfigure}[t]{0.49\textwidth}
        \centering
         \includegraphics[width=0.35\linewidth]{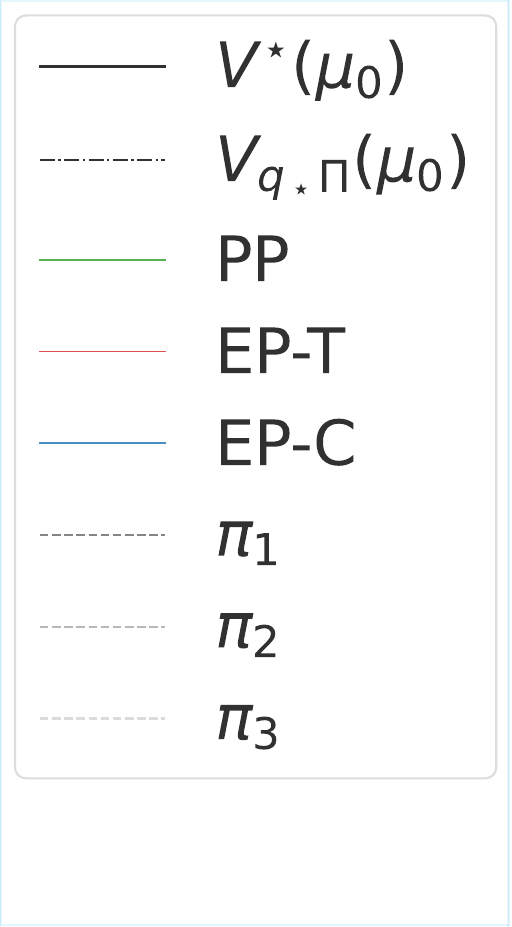}
    \end{subfigure}

    \caption{Convergence of average performance under different orchestration and learning schemes. The legend (bottom right) is shared across all plots.}
    \label{fig:learning_curves}
\end{figure}

While all three plots in Figure~\ref{fig:learning_curves} illustrate the evolution of the policy value as a function of policy updates, Figures~\ref{fig:learning_curves}(a) and \ref{fig:learning_curves}(b) depict the same type of policy update—namely, a tabular update—using different simulators (TD and NN, respectively). In contrast, Figure~\ref{fig:learning_curves}(c) differs in that it represents neural network (NN)  updates of the policy, utilizing a NN simulator. While the first two plots illustrate a single-step update, which can be computed with a simple matrix multiplication, the third one involves significantly more complex computations. In fact, training a general neural network requires performing multiple forward and backward passes, with a computational complexity of approximately \( O( l \cdot d^2) \), where \( l \) is the number of layers, and \( d \) is the dimensionality of each layer.

\paragraph{State Space Dominance Analysis} Table \ref{tab:state-space} summarizes the proportion of the state space dominated by each expert policy (i.e., the frequencies with which each expert policy appears in the mixture $q^\star$) for all the strategies of case (a) of Figure\ref{fig:learning_curves}. These frequencies reveal the contribution of each expert policy to the improved overall performance. 

\begin{table}[t]
\centering
\caption{Proportion of the state space controlled by each expert policy under different strategies in the tabular policy learning with advantage learning framework.}
\label{tab:state-space}
\begin{tabular}{l c c c}
\toprule
\textbf{Strategy} & $\pi_1$ & $\pi_2$ & $\pi_4$\\
\midrule
Exponential fixed $\eta$ &  0.36 & 0.35 & 0.29\\
Exponential  $\eta_t$    & 0.23 &  0.41& 0.36\\
Polynomial    & 0.32 & 0.30 &0.38\\
\bottomrule
\end{tabular}
\end{table}

For the simulation, we simplify the state space. At each time step, an event occurs that modifies the queues, as described in Appendix \ref{appendix:matching}, particularly in \eqref{eq:transition_matrix}. The queue lengths at time $t$ contain all the necessary information for determining possible matches. Consequently, our algorithm represents the state solely by the number of items in each queue at each time step. Therefore, for each model, the state space has size $L^{\text{\#nodes}}$, in this case $L^4$.

\subsection{Organ Exchange Model}\label{sec:organ_exchange}

Here, we apply our orchestration strategies to the stochastic matching framework of \citealp{jonckheere2023generalized}, which is motivated by organ transplantation settings and captures key structural features of such networks while remaining tractable for simulation. The framework is designed to test scalability, adaptivity, and robustness in a heterogeneous, high-stakes domain. We evaluate how effectively the proposed methods navigate the complexity of blood-type compatibility, urgency levels, and asymmetric rewards.

 The compatibility graph (Figure~\ref{fig:org_don_graph}) is bipartite: one set represents donors, and the other represents recipients. Each donor or recipient belongs to one of four blood-type groups $\{0, A, B, AB\}$. Within each group, nodes are subdivided by urgency level $u \in \{0,1,2\}$ (low, medium, high).

Each node $i$ is represented by a tuple $(G_i, u_i)$, where $G_i \in \{0, A, B, AB\}$ denotes the blood type and $u_i \in \{0, 1, 2\}$ represents the urgency level. We use the same notation for rewards, transition kernels, and expert policies as in Section~\ref{sec:stoch_model}, with the following modifications:

\begin{itemize}
    \item For each node $i = (G_i, u_i)$ where $u_i \in \{0, 1\}$, the relocation "next node" is given by $N_i = (G_i, u_i + 1)$, which corresponds to the node with the same blood type but a higher urgency level. The relocation probability is denoted by $\nu_i > 0$.
    \item For nodes at the highest urgency level, $(G_i, 2)$, we set $\nu_i = 0$. This means that any departure from these nodes is final, as there is no higher urgency level.
    \item The reward function distinguishes between \emph{final} departures (from the highest urgency level) and \emph{non-final} departures, with the latter being penalized less. Specifically, for a given state $s = (\varrho, (G_i, u_i), e)$ and an action $a$, the reward function is:
    \[
    r(s,a) = - D_{u_i} \cdot 1_{e=\text{departure}} - R_{u_i} \cdot 1_{e=\text{departure}} + g_{i,a},
    \]
    where $R_{u_i} < D_{u_i}$. Here, $D_{u_i}$ and $R_{u_i}$ are constants associated with the cost of an item either departing the system or being relocated from node $i$. In this model, these constants depend only on the urgency level $u_i$, not the blood type $G_i$.
\end{itemize}

This model illustrates how different urgency levels, together with complex compatibility constraints, can be effectively managed within the same Markov decision process framework. As in previous sections, combining multiple expert policies using potential-based learning methods facilitates efficient exploration of decisions across both blood types and urgency levels.

Model specifications are provided in tables \ref{tab:node_params}, \ref{tab:urgency_params}, and \ref{tab:global_params} in Appendix \ref{sec:sim:setting2}.

\subsubsection{Orchestration, Direct Learning, and Baselines}

For this experiment, we use the expert policies $\pi_1, \pi_2, \pi_3, \pi_4$ introduced in Section~\ref{sec:stoch_model}, and focus on Learning Scheme 3, which uses neural network policy learning (with NNAL). This choice is motivated by the high-dimensional nature of the organ exchange environment, where simpler methods are computationally infeasible.

\paragraph{Performance Criterion}  As in the previous case, we compare the performance of the algorithms in terms of $V_{{q}_{t} \Pi}(\mu_0)$.

We follow the same evaluation protocol as in the previous experiment: for each time step \( t \), we perform \( N \) independent runs of the learning process, and report the average value$\frac{1}{N} \sum_{n=1}^N V_{{q}_{t, n} \Pi}(\mu_0)$ over these runs. 
Shaded areas indicate \(\pm 2\) standard errors, computed from the variability across the \( N \) learned policies at each time step.

In this setting, the transition and reward structure remains fully known. However, unlike in the previous experiment, the state space is too large to allow for exact computation or storage of value functions. As a result, we rely on Monte Carlo techniques to estimate policy values: specifically, we report the average cumulative reward obtained over 200 steps of interaction, averaged across 5{,}000 independent runs.

Furthermore, due to the size and complexity of the environment, we can no longer include the true optimal values for comparison. As we will later observe, traditional methods such as TD learning or DQN converge very slowly in this setting, making them impractical as baselines.

\paragraph{Performance Comparison}
We compare NN policy learning via Neural Network Advantage Learning (NNAL) in both expert-guided and direct action settings, and benchmark the resulting policies against a Double DQN baseline. Consistent with prior evaluations, we focus on the number of neural network updates required to approximate the advantage function at each step. Since all learning approaches rely on networks of comparable size, we treat each policy update in NNAL as equivalent to one Double DQN step.

\paragraph{Figure~\ref{fig:donor_learning_curves}: Identifying the Best Among Many}  
This figure illustrates the effectiveness of orchestration when learning across a diverse set of experts. With three expert policies and a total of 27 possible actions, the orchestrated learner rapidly identifies and converges to the performance of the best single experts. In contrast, the direct learning approaches initially show fast gains due to broad exploration but soon plateau, unable to refine their behavior in such a large, complex action space.  
The orchestrated approach, by efficiently leveraging expert priors, narrows the search space early and focuses learning where it matters most. This result powerfully demonstrates that, in environments with many possible actions, orchestration is not merely beneficial; it yields a marked improvement over direct learning methods.

\begin{figure}[H]
    \centering
    \includegraphics[width=0.6\textwidth]{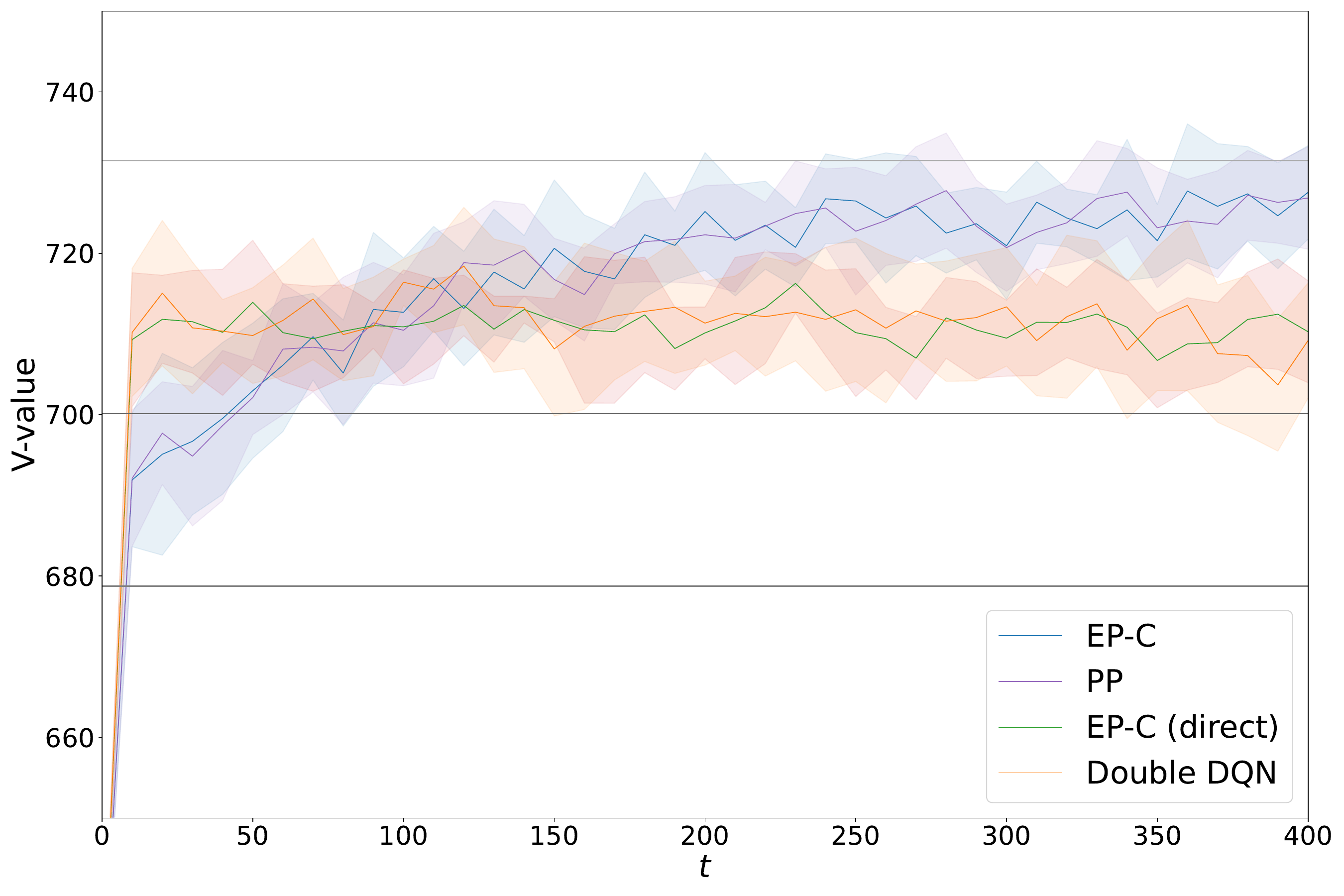}
    \caption{Learning curves for the donor exchange experiment.}
    \label{fig:donor_learning_curves}
\end{figure}

\paragraph{Figure~\ref{fig:donor_learning_curves_1}: Improving Beyond the Best Expert}  
In this experiment, we restrict the expert set to just two policies: $\pi_1$ (match the longest) and $\pi_2$ (greedy max-payoff). Remarkably, even with such a small set, orchestration does not merely select the better expert: it learns an adaptive combination that improves upon both.  
This result is especially important in life-and-death domains like organ exchange, where even small improvements can have critical long-term consequences. Here, orchestration provides an adaptive mechanism that outperforms static expert policies, delivering gains that would be otherwise difficult to achieve (or would require very long training) through direct learning or unreachable fixed expert selection alone.
This highlights orchestration’s potential as a powerful decision-support tool.

\begin{figure}[H]
    \centering
    \includegraphics[width=0.6\textwidth]{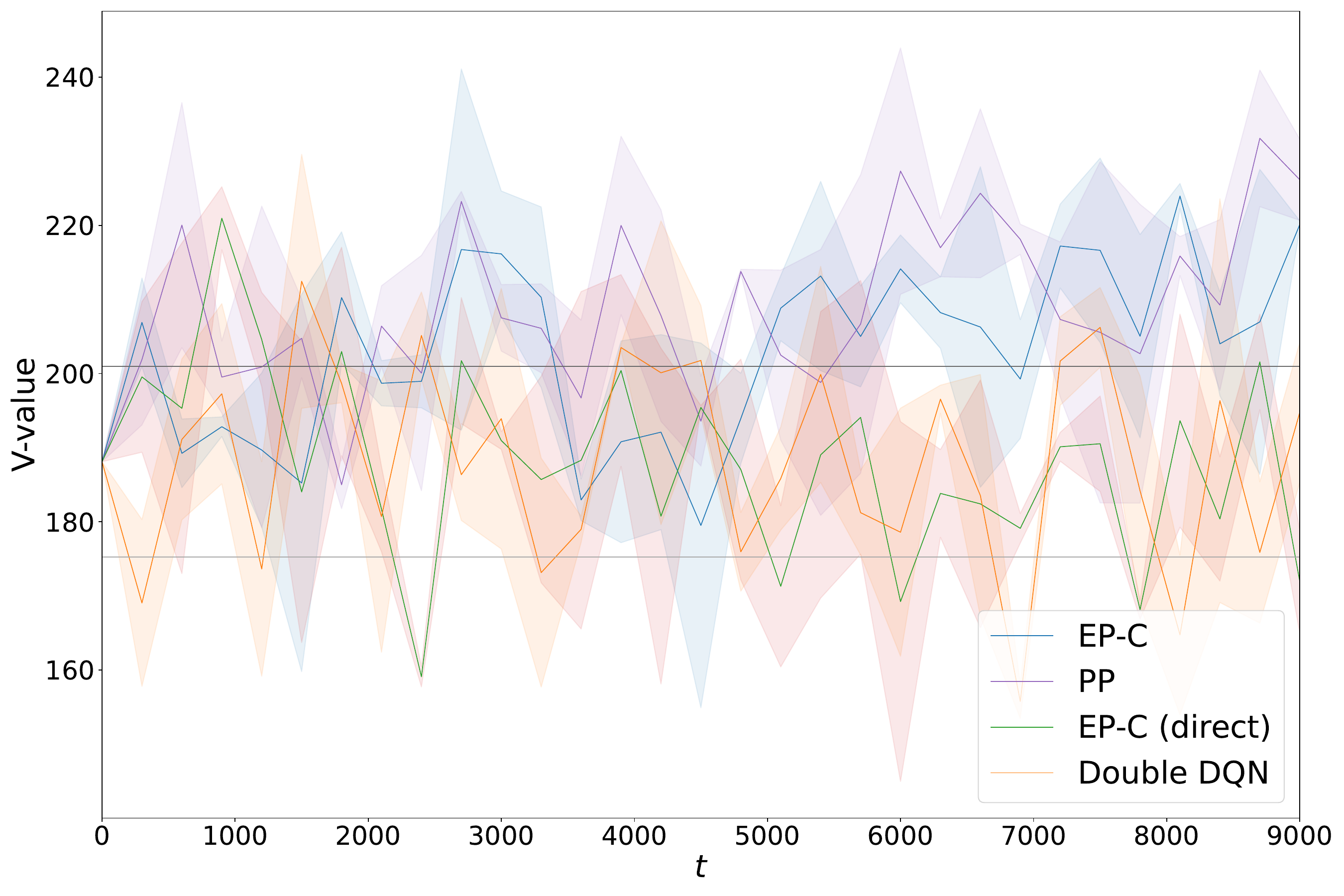}
    \caption{Learning curves for the donor exchange experiment.}
    \label{fig:donor_learning_curves_1}
\end{figure}

\paragraph{Summary}  
Together, these results make a compelling case for the use of orchestration in high-dimensional, high-stakes environments. By building on structured prior knowledge and adapting over time, orchestration delivers faster convergence, better performance, and more reliable policies than either direct learning or individual expert strategies. See Appendix~\ref{sec:sim:setting2} for full reproducibility details.

\section{Conclusion and Future Research}
We have presented a general framework for orchestrating expert policies in reinforcement learning, with a focus on stochastic matching problems. Our approach extends prior orchestration methods by accommodating biased estimators, leveraging potential-based strategies, and scaling to high-dimensional settings via actor-critic architectures. We introduce a novel finite-time bias bound for TD learning, which enables the use of learned advantage estimates while preserving theoretical guarantees. Our empirical evaluation demonstrates that the proposed methods consistently outperform both individual expert policies and standard RL baselines, achieving superior performance and faster convergence in both synthetic and realistic settings.

As a promising direction for future research, we propose extending the orchestration framework to non-stationary environments, where the underlying Markov Decision Process (MDP) may evolve across latent contexts. This requires both the detection of context shifts, potentially via monitoring value function dynamics, and the maintenance of adaptive distributions over possible MDP regimes. Pursuing this direction involves formalizing and evaluating orchestration in dynamically changing environments, where the learner must adapt not only to inherent stochasticity but also to structural non-stationarity. Such an extension would broaden the applicability of our framework to real-world domains, such as financial markets, logistics, or online marketplaces, where adaptability, robustness, and interpretability are all essential.

\bibliographystyle{elsarticle-harv} 
\bibliography{bibliography}

\newpage
\vfill

\begin{center}
{\Large \bf Supplementary material
for \bigskip \\
``{Online Matching via Reinforcement Learning:
An Expert Policy Orchestration Strategy}'' 

}
\end{center}

\begin{itemize}
\item Appendix~\ref{sec:thmProofs}: contains the proofs of Theorems \ref{th:cesaro} and \ref{th:HP-estim}
\item Appendix~\ref{sec:proof_lemma}: contains the proof of Lemma \ref{lemma:adv-bias-bound}.
\item Appendix~\ref{app:pg-comparison}: provides a detailed comparison between our neural policy learning scheme and classical neural policy gradient methods.

\item Appendix~\ref{sec:alg}: provides the pseudocode for the algorithms presented in the paper.
\item Appendix~\ref{appendix:matching}: contains the formal description of the MPD and of the expert policies.
\item Appendix~\ref{sec:sim}: details the simulation settings and includes a parameter study for both scenarios.
\end{itemize}

\appendix

\section{Proofs of Theorems \ref{th:cesaro} and \ref{th:HP-estim}} \label{sec:thmProofs}

\paragraph{Preparation} 
for a given stationary policy $\pi$, we introduce, for each $s \in \gS$,
\[
\mu^{(s_0,\pi)}(s) = (1-\gamma) \sum_{t=0}^{+\infty} \gamma^t \, \mathbb{P}^{(s_0,\pi)}(s_t = s)\,,
\]
i.e., $\mu^{(s_0,\pi)}$ is the discounted state visitation distribution
starting from $s_0$ and taking actions drawn by $\pi$.
Moreover, from \eqref{eq:adv_regret} it follows that
\\
\begin{align}
\label{eq:adv-guar-tilde}
\max_{k \in [K]} \sum_{t=1}^T \widetilde{A}_{q_t\Pi}(s,k)
- \sum_{t=1}^T \smash{\overbrace{\sum_{j \in [K]} q_t(j|s) \widetilde{A}_{q_t\Pi}(s,j)}^{= 0}} 
= \max_{k \in [K]} \sum_{t=1}^T \widetilde{A}_{q_t\Pi}(s,k)
\leq \frac{1}{(1-\gamma)}\,B_{T,K}\,.
\end{align}
Finally, consider the following lemma.
\begin{lemma}[performance difference lemma; see~{\citealp[Lemma~6.1]{KL02}}]\label{lemma:perf_diff}
For any pair $\pi,\pi'$ of stationary policies and all
states $s_0$,
\begin{align*}
V_{\pi}(s_0) - V_{\pi'}(s_0) 
= \frac{1}{1-\gamma}
\sum_{s \in \gS} \mu^{(s_0,\pi)}(s) \sum_{a \in \gA} \pi(a|s) \, A_{\pi'}(s,a)\,.
\end{align*}
\end{lemma}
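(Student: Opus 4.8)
The plan is to prove the identity by summing the advantage function $A_{\pi'}$ along trajectories generated by $\pi$ and exploiting a telescoping cancellation. First I would introduce the central quantity
\[
\Delta := \mathbb{E}^{(s_0,\pi)}\!\left[ \sum_{t=0}^\infty \gamma^t A_{\pi'}(s_t,a_t) \right],
\]
where the trajectory $(s_0,a_0,s_1,a_1,\dots)$ is drawn by following $\pi$ from $s_0$, and show that $\Delta$ simultaneously equals $V_\pi(s_0)-V_{\pi'}(s_0)$ and the right-hand side of the claimed formula. The whole argument then amounts to evaluating $\Delta$ in two different ways.

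For the first evaluation I would substitute the Bellman expansion $A_{\pi'}(s,a) = r(s,a) + \gamma\,\mathbb{E}_{s'\sim\gT(\cdot|s,a)}[V_{\pi'}(s')] - V_{\pi'}(s)$. Under the trajectory law of $\pi$, the middle term is $\gamma\,\mathbb{E}^{(s_0,\pi)}[V_{\pi'}(s_{t+1}) \mid s_t,a_t]$, so that
\[
\Delta = \mathbb{E}^{(s_0,\pi)}\!\left[ \sum_{t=0}^\infty \gamma^t r(s_t,a_t) \right] + \mathbb{E}^{(s_0,\pi)}\!\left[ \sum_{t=0}^\infty \bigl( \gamma^{t+1} V_{\pi'}(s_{t+1}) - \gamma^t V_{\pi'}(s_t) \bigr) \right].
\]
The first term is exactly $V_\pi(s_0)$ by definition. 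The second is a telescoping series whose partial sum up to $T$ collapses to $\gamma^{T+1} V_{\pi'}(s_{T+1}) - V_{\pi'}(s_0)$. Since rewards lie in $[0,1]$, the value functions satisfy $0\le V_{\pi'}\le 1/(1-\gamma)$, so the boundary term vanishes as $T\to\infty$, leaving $\Delta = V_\pi(s_0) - V_{\pi'}(s_0)$.

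For the second evaluation I would re-index $\Delta$ by state. Writing $\mathbb{E}^{(s_0,\pi)}[A_{\pi'}(s_t,a_t)] = \sum_{s\in\gS} \mathbb{P}^{(s_0,\pi)}(s_t=s)\sum_{a\in\gA} \pi(a|s)\,A_{\pi'}(s,a)$ and interchanging the sum over $t$ with the sum over $s$, the time-discounted occupancy $\sum_{t\ge0}\gamma^t\mathbb{P}^{(s_0,\pi)}(s_t=s)$ coincides with $\mu^{(s_0,\pi)}(s)/(1-\gamma)$ by the definition recalled in the Preparation. Equating the two evaluations of $\Delta$ then yields the stated formula.

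The main obstacle is purely technical: justifying the interchange of expectation with the infinite sums and the vanishing of the boundary term. Both follow from the uniform bound $|A_{\pi'}|\le 1/(1-\gamma)$ together with $\gamma<1$, which makes every series absolutely and geometrically convergent, so that dominated convergence (equivalently, Fubini--Tonelli applied to the finite measure $\sum_{t\ge0}\gamma^t\,\mathbb{P}^{(s_0,\pi)}$) legitimizes each exchange. Once this bookkeeping is in place, no genuinely hard step remains.
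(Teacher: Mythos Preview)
Your proof is correct and is precisely the standard argument for the performance difference lemma. Note, however, that the paper does not actually supply its own proof of this statement: it merely quotes the lemma from \citet[Lemma~6.1]{KL02} and uses it as a black box in the proofs of Theorems~\ref{th:cesaro} and~\ref{th:HP-estim}. Your derivation---evaluating $\mathbb{E}^{(s_0,\pi)}\bigl[\sum_{t\ge0}\gamma^t A_{\pi'}(s_t,a_t)\bigr]$ in two ways, once via the Bellman identity and telescoping to obtain $V_\pi(s_0)-V_{\pi'}(s_0)$, and once by regrouping over states to recover $\mu^{(s_0,\pi)}$---is exactly the original Kakade--Langford proof, and your handling of the interchange of sums and the vanishing boundary term is sound.
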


\subsection{Proof of Theorem \ref{th:cesaro} (analysis in expectation)}\label{sec:proofthm1}
\begin{proof}[Proof of Theorem \ref{th:cesaro}]
The first part is a restatement of Lemma \ref{lemma:perf_diff}: for any pair $q,q' \in \gP\bigl([K]\bigr)^{\gS}$
of state-dependent weights and all states $s_0$:
\begin{align*}
V_{q\Pi}(s_0) - V_{q'\Pi}(s_0)  &=\frac{1}{1-\gamma}
\sum_{s \in \gS} \mu^{(s_0,q\Pi)}(s) \sum_{a \in \gA} q\Pi(a|s) \, A_{q'\Pi}(s,a) \\
& = \frac{1}{1-\gamma}
\sum_{s \in \gS} \mu^{(s_0,q\Pi)}(s) \sum_{k \in [K]} q(k|s) \, A_{q'\Pi}(s,k)\,,
\end{align*}
It follows that
\begin{align}
\label{eq:V-cesaro-diff-tq}
& V_{q^\star\Pi}(s_0) - \frac{1}{T} \sum_{t=1}^T V_{q_t\Pi}(s_0)
=  \frac{1}{(1-\gamma)T}
\sum_{s \in \gS} \mu^{(s_0,q^\star\Pi)}(s) \sum_{k \in [K]} q^\star(k|s) \sum_{t=1}^T A_{q_t\Pi}(s,k)\,.
\end{align}

Then, by the tower rule, Assumption~\ref{lm:est-eps} implies that:
\begin{equation}
\label{eq:tower-rule-loose}
- \epsilon \leq {\mathbb{E}} \bigl[ \widetilde{A}_{q_t\Pi}(s,k) \bigr] - A_{q_t\Pi}(s,k)\leq \epsilon\,.
\end{equation}
Thus, taking expectations in~\eqref{eq:V-cesaro-diff-tq} gives:
\begin{align}
\label{eq:V-cesaro-diff-tq-eps}
& V_{q^\star\Pi}(s_0) - \frac{1}{T} \sum_{t=1}^T {\mathbb{E}} \bigl[ V_{q_t\Pi}(s_0) \bigr] \leq \frac{\epsilon}{1-\gamma} + \\[0.15cm]
\nonumber
& \widetilde{\mathbb{E}} \left[ \frac{1}{(1-\gamma)T}
\sum_{s \in \gS} \mu^{(s_0,q^\star\Pi)}(s) \sum_{k \in [K]} q^\star(k|s) \sum_{t=1}^T \widetilde{A}_{q_t\Pi}(s,k) \right]
\end{align}
From~\eqref{eq:adv-guar-tilde}, we know that, almost surely, for all $s$:
\begin{align}\label{eq:tA-BTK-regr}
\sum_{k \in [K]} q^\star(k|s) \sum_{t=1}^T \widetilde{A}_{q_t\Pi}(s,k)
& \leq \max_{k \in [K]} \sum_{t=1}^T \widetilde{A}_{q_t\Pi}(s,k) \leq \frac{1}{(1-\gamma)}B_{T,K}.
\end{align}
Substituting this inequality into~\eqref{eq:V-cesaro-diff-tq-eps} gives:
\[
V_{q^\star\Pi}(s_0) - \frac{1}{T} \sum_{t=1}^T {\mathbb{E}} \bigl[ V_{q_t\Pi}(s_0) \bigr]
\leq \frac{\epsilon}{1-\gamma} + \frac{B_{T,K}}{(1-\gamma)^2 T}\,,
\]
as desired.
\end{proof}

\subsection{Proof of Theorem \ref{th:HP-estim} (analysis in high probability)}\label{sec:proofthm2}
\begin{proof}[proof of Theorem \ref{th:HP-estim}]
    We use again the (in)equalities~\eqref{eq:V-cesaro-diff-tq} and~\eqref{eq:tA-BTK-regr},
which hold with probability~1,
and based on Assumption~\ref{lm:est-eps}, we only need to explain why,
with probability at least $1-\delta$,
\begin{multline*}
\sum_{s \in \gS} \mu^{(s_0,q^\star\Pi)}(s) \sum_{k \in [K]} q^\star(k|s) \sum_{t=1}^T \widetilde{\mathbb{E}}\bigl[ \widetilde{A}_{q_t\Pi}(s,k) \,\big|\, \gF_{t-1} \bigr] 
\leq \sum_{s \in \gS} \mu^{(s_0,q^\star\Pi)}(s) \sum_{k \in [K]} q^\star(k|s) \sum_{t=1}^T \widetilde{A}_{q_t\Pi}(s,k)
+ \frac{1}{(1-\gamma)} \sqrt{2 T \ln \frac{1}{\delta}}\,.
\end{multline*}
The inequality above indeed follows from the Hoeffding-Azuma lemma, applied to the martingale difference sequence
\[
X_t = \sum_{s \in \gS} \mu^{(s_0,q^\star\Pi)}(s) \sum_{k \in [K]} q^\star(k|s) \, \Bigl(
\widetilde{A}_{q_t\Pi}(s,k) - {\mathbb{E}} \bigl[ \widetilde{A}_{q_t\Pi}(s,k) \,\big|\, \gF_{t-1} \bigr] \Bigr)\,,
\]
whose increments are bounded by $2/\bigl((1-\gamma)\bigr)$.
\end{proof}

\section{Proof of Lemma \ref{lemma:adv-bias-bound}: Uniform Bias Contraction for $Q$-Function Estimates Under a Stationary Policy}\label{sec:proof_lemma}

\begin{proof}[Proof of Lemma \ref{lemma:adv-bias-bound}]

We first show that \( \widetilde{Q}_{\pi} \) is bounded. We prove by induction that \( 0 \le \widetilde{Q}_{\pi, \tau}(s, a) \le M \) for all \( \tau, s, a \), where \( M := \frac{1}{1 - \gamma} \).

 \( \widetilde{Q}_{\pi, 0}(s,a) = 0 \) satisfies the bound. Assume \( 0 \le \widetilde{Q}_{\pi, \tau}(s,a) \le M \) for all \( s,a \). By the update rule (\eqref{eq:update_rule}), for $s\ne s_\tau, a\ne a_\tau$
\[\widetilde{Q}_{\pi, \tau+1}(s, a) = \widetilde{Q}_{\pi, \tau}(s, a)  \le M\]
 while at $(s_\tau, a_\tau)$ 
\[
\widetilde{Q}_{\pi, \tau+1}(s_\tau, a_\tau) \le (1-\alpha) M + \alpha (1 + \gamma M).
\]
Since \( 1 + \gamma M = \frac{1 - \gamma + \gamma}{1 - \gamma} = M \), we get
\[
\widetilde{Q}_{\pi, \tau+1}(s_\tau, a_\tau) \le M.
\]
Non-negativity is preserved as the update is a convex combination of non-negative terms. 
Hence, the bound holds for all \( \tau \), and \( \widetilde{Q}_{\pi}(s, a)\) is bounded by \( M \) for all $s, a\in \gS, \gA$.

By construction (Remark~\ref{remark:A_contruction}), \( \widetilde{V}_\pi \le M \), so
\[
- M \le \widetilde{A}_\pi(s,a) \le M, \quad \forall s,a.
\]
Finally, using the definition of the estimated advantage function
\begin{align*}
\sum_{a \in \gA} \pi(a|s) \widetilde{A}_\pi(s,a) = \sum_{a \in \gA}\widetilde{Q}_\pi(s,a) - \widetilde{V}_\pi(s) = 0.
\end{align*}

We now define the bias at time $\tau$ for each fixed state-action pair $(s, a) \in \gS \times \gA$ as
\begin{align*}
    b_\tau(s, a) := \widetilde{Q}_{\pi, \tau}(s, a) - Q_{\pi}(s, a),
\end{align*}

The TD update at step $\tau$ is given by:
\begin{align*}
    \widetilde{Q}_{\pi, \tau+1}(s, a) = \begin{cases}
        (1 - \alpha) \widetilde{Q}_{\pi, \tau}(s, a) + \alpha \left( r_\tau + \gamma \widetilde{Q}_{\pi, \tau}(s_{\tau+1}, a'_{\tau+1}) \right) & \text{if } (s, a)= (s_\tau, a_\tau)\\
        \widetilde{Q}_{\pi, \tau}(s, a) & \text{otherwise}
    \end{cases}
\end{align*}

Let \( \mathcal{F}_\tau \) denote the \( \sigma \)-algebra generated by the randomness inherent in the estimation process up to round \( \tau \). Then
\begin{align*}
    &\mathbb{E}\left[b_{\tau+1}(s, a) \;| \;\gF_\tau \right] =\\
    &\quad= \begin{cases}
        \mathbb{E}\left[(1 - \alpha) \left(\widetilde{Q}_{\pi, \tau}(s, a) - Q_{\pi}(s, a)\right) + \alpha \left( r_\tau + \gamma \widetilde{Q}_{\pi, \tau} (s_{\tau+1}, a'_{\tau+1}) - Q_{\pi}(s, a)\right)\;| \;\gF_\tau \right] & \text{if } (s, a)= (s_\tau, a_\tau) \\
        \mathbb{E}\left[\widetilde{Q}_{\pi, \tau}(s, a) - Q_{\pi}(s, a) | \gF_\tau \right]& \text{otherwise}\end{cases} \\
        &\quad=\begin{cases}
        (1 - \alpha) b_\tau(s, a) + \alpha \mathbb{E}\left[ r_\tau + \gamma \widetilde{Q}_{\pi, \tau} (s_{\tau+1}, a'_{\tau+1}) - Q_{\pi}(s, a)\mid \gF_\tau\right] & \text{if }(s, a)= (s_\tau, a_\tau)\\
        b_\tau(s, a)& \text{otherwise}\\
    \end{cases}  
\end{align*}
From the Bellman equation for \( Q_\pi \), i.e.,
\[
Q_{\pi}(s_\tau, a_\tau) = \mathbb{E}\left[r_\tau + \gamma Q_{\pi}(s_{\tau+1}, a'_{\tau+1}) \mid  s_\tau, a_\tau\right],
\]
we get
\begin{align*}
    &\mathbb{E}\left[b_{\tau+1}(s, a) | \gF_\tau \right] =\\
    &\quad=\begin{cases}
        (1 - \alpha) b_\tau(s, a) + \alpha\gamma \mathbb{E}\left[ \widetilde{Q}_{\pi, \tau} (s_{\tau+1}, a'_{\tau+1}) - Q_{\pi}(s_{\tau + 1}, a'_{\tau+ 1})\mid \gF_\tau\right] & \text{if } (s, a)= (s_\tau, a_\tau)\\
        b_\tau(s, a)& \text{otherwise}
    \end{cases} 
\end{align*}
where 
\begin{align*}
    \mathbb{E}\left[ \widetilde{Q}_{\pi, \tau} (s_{\tau+1}, a'_{\tau+1}) - Q_{\pi}(s_{\tau + 1}, a'_{\tau+ 1})\mid \gF_\tau\right] = \sum_{s'} \mathcal{T}(s' \mid s_\tau, a_\tau) \sum_{a'} \pi(a' \mid s') b_\tau(s',a')  
\end{align*}

Conditioning on \( \mathcal{F}_\tau \) corresponds here to taking the expectation over the next state-action pair \( (s_{\tau+1}, a'_{\tau+1}) \), where \( s_{\tau+1} \sim \mathcal{T}(\cdot \mid s_\tau, a_\tau) \) and \( a'_{\tau+1} \sim \pi(\cdot \mid s_{\tau+1}) \).

Let $p_{\pi,\tau}(s,a) = \mathbb{P}((s_\tau, a_\tau) = (s, a))$ be the distribution of visited state-action pairs under the sampling policy at step $\tau$. Taking total expectation over \( (s_\tau, a_\tau) \), we obtain:
\begin{align*}
\mathbb{E}[b_{\tau+1}(s,a)] 
&= \mathbb{E}\left[ \mathbf{1}_{\{(s_\tau, a_\tau)=(s,a)\}} \left( (1 - \alpha) b_\tau(s,a) + \alpha \gamma \sum_{s'} \mathcal{T}(s' \mid s,a) \sum_{a'} \pi(a' \mid s') b_\tau(s',a') \right) \right] \\
&\quad + \mathbb{E}\left[ \mathbf{1}_{\{(s_\tau, a_\tau) \ne (s,a)\}} b_\tau(s,a) \right] \\
&= p_{\pi, \tau}(s,a) \left( (1 - \alpha) \mathbb{E}[b_\tau(s,a)] + \alpha \gamma \sum_{s'} \mathcal{T}(s' \mid s,a) \sum_{a'} \pi(a' \mid s') \mathbb{E}[b_\tau(s',a')] \right) \\
&\quad + (1 - p_{\pi, \tau}(s,a)) \mathbb{E}[b_\tau(s,a)]\\
&= p_{\pi, \tau}(s,a) \left( (1 - \alpha) \mathbb{E}[b_\tau(s,a)] + \alpha \gamma  \left(P_\pi\mathbb{E}[b_\tau]\right)(s',a') \right)  + (1 - p_{\pi, \tau}(s,a)) \mathbb{E}[b_\tau(s,a)]
\end{align*}
where $P$ is the transition operator  defined as
\begin{align*}
    (P_\pi b)(s,a) := \sum_{s'} \mathcal{T}(s' \mid s,a) \sum_{a'} \pi(a' \mid s') \mathbb{E}[b(s',a')],
\end{align*}

From the uniform geometric ergodicity theorem \citep[Theorem 16.0.2]{meyn2009markov}, for any bounded function $f : \mathcal{S} \times \mathcal{A} \to \mathbb{R}$, there exist constants $R > 0$ and $\rho \in (0,1)$ such that
\[
\sup_{(s,a)} \left| \mathbb{E}_{s_0=s}[f(s_\tau, a_\tau)] - \mathbb{E}_{(s,a) \sim d_\pi}[f(s,a)] \right| \le 2R \|f\|_\infty \rho^\tau.
\]
Apply this to the indicator function $f_{(s,a)}(s',a') := \mathbf{1}_{(s',a')=(s,a)}$, yielding
\[
|p_{\pi, \tau}(s,a) - d_\pi(s,a)| \le 2R \rho^\tau,
\]
where $d_\pi(s,a) := d_\pi(s) \pi(a \mid s)$ is the stationary distribution over state-action pairs.

Thus, we write
\[
p_{\pi, \tau}(s,a) = d_\pi(s,a) + \Delta_\tau(s,a), \quad \text{with } |\Delta_\tau(s,a)| \le 2R \rho^\tau.
\]

Substitute into the update
\begin{align*}
T(b)(s,a)
&= (1 - \alpha d_\pi(s,a) - \alpha \Delta_\tau(s,a)) \mathbb{E}[b(s,a)] \\
&\quad + \alpha \gamma (d_\pi(s,a) + \Delta_\tau(s,a)) (P_\pi \mathbb{E}[b])(s,a).
\end{align*}

Using $\|P_\pi b\|_\infty \le \|b\|_\infty$, we upper bound
\[
\|T(b_\tau)\|_\infty \le \left(1 - \alpha(1 - \gamma) \min_{(s,a)} d_\pi(s,a) + \alpha(1+\gamma) \cdot 2R \rho^\tau \right) \|b_\tau\|_\infty.
\]

Define the contraction rate under the stationary distribution
\[
\kappa := \alpha(1 - \gamma) \min_{(s,a)} d_\pi(s,a),
\quad \text{and the time-dependent perturbation} \quad
\delta_\tau := \alpha(1+\gamma) \cdot 2R \rho^\tau.
\]
Then we obtain the bound:
\[
\|T(b_\tau)\|_\infty \le (1 - \kappa + \delta_\tau) \|b_\tau\|_\infty.
\]

Since \(\delta_\tau\) decays geometrically in \(\tau\), this shows that the bias contracts exponentially with a vanishing additive perturbation
\[
\|\mathbb{E}[b_{\tau+1}]\|_\infty \le (1 - \kappa + \delta_\tau) \|\mathbb{E}[b_\tau]\|_\infty.
\]
By iterating this inequality, we get
\[
\|\mathbb{E}[b_{\tau}]\|_\infty \le \left( \prod_{k=0}^{\tau-1} (1 - \kappa + \delta_k) \right) \|\mathbb{E}[b_0]\|_\infty.
\]
Since \(\delta_k \le C \rho^k\), with \(C := \alpha(1+\gamma) \cdot 2R\), and \(\sum_k \delta_k < \infty\), the product admits the bound
\[
\|\mathbb{E}[b_\tau]\|_\infty \le (1 - \kappa)^\tau \cdot \exp\left( \frac{1}{1 - \kappa} \sum_{k=0}^{\tau-1} \delta_k \right) \|\mathbb{E}[b_0]\|_\infty.
\]
This proves geometric convergence of the expected bias, uniformly over time.

\end{proof}

\subsection{Related Work and Comparison (Extended Discussion)} \label{appendix:related-work-lemma}

The problem of analyzing the bias of temporal difference (TD) learning has been central in reinforcement learning theory. Classical analyses, such as \citet{tsitsiklis1997analysis}, establish almost sure convergence under linear function approximation and diminishing step sizes, using a stochastic approximation framework. \citet{borkar2000ode} generalized this framework to broader stochastic iterative schemes, providing ODE-based convergence guarantees. However, these works focus on asymptotic behavior and do not provide explicit finite-time or bias bounds.

Finite-time analyses have emerged more recently. \citet{SrikantYing2019} provided finite-time mean-squared error (MSE) bounds under constant step sizes, assuming stationary data. \citet{Dalal2018TD} analyzed TD(0) under both i.i.d. and Markovian noise, deriving concentration bounds and bias-variance decompositions, though without isolating the bias explicitly. \citet{Bhandari2018} extended finite-time convergence results under Markovian sampling by leveraging the mixing time, again focusing on overall convergence rates rather than bias isolation. Asymptotic behavior under constant stepsizes (including steady-state bias) has been analyzed for linear SA/TD by \citet{SrikantYing2019} and off-policy bias was addressed asymptotically by \citet{Yu2015ETD, yu2016weak} using emphatic weightings.

Our result contributes to this line of work by deriving an explicit, finite-time bound on the bias of TD learning under constant step sizes, without requiring stationary or Markovian sampling assumptions. Specifically, we capture the effect of non-stationarity through a perturbation term $\delta_t$, and show how the error decomposes into a contractive component, a step-size-induced bias, and an accumulated perturbation decay. Our proof technique uses a recursive expansion and contraction inequality directly on the TD update, offering a modular approach applicable to related stochastic approximation algorithms.

\section{Comparison with Neural Policy Gradient Methods} \label{app:pg-comparison}

Neural policy gradient (NPG) and actor-critic methods are foundational tools in reinforcement learning, particularly suited for high-dimensional or continuous action spaces. Classical methods such as REINFORCE (\citealp{williams1992reinforce}), Advantage Actor-Critic (A2C), and Natural Policy Gradient (NPG) optimize stochastic policies by ascending the gradient of expected returns, using Monte Carlo or bootstrapped advantage estimates \citep{sutton1999policy, Kak01, schulman2015trust}. More recent approaches, including Proximal Policy Optimization (PPO) \citep{schulman2017proximal} and Soft Actor-Critic (SAC) \citep{haarnoja2018soft}, improve stability via regularization and introduce entropy-based exploration.

 While our approach retains the actor-critic structure, consisting of a value-based critic and a parametric actor, there are several distinguishing features in the way policies are updated and represented:

\paragraph{Learning via Potential-Based Advantage Aggregation}
Rather than updating the policy via a gradient estimate of expected return, we use a potential-based aggregation of temporal advantage estimates. This technique stems from online learning theory \citep{CBL06} and avoids the need to compute explicit policy gradients. In particular, the policy network is trained to align with targets derived from weighted historical advantages, modulated by polynomial or exponential potential functions. This design:
\begin{itemize}
    \item sidesteps high-variance gradient estimators common in policy gradient methods;
    \item enables regret-minimizing updates even when advantage estimates are biased;
    \item provides a stable update target without relying on differentiable reward signals or environments.
\end{itemize}

\paragraph{Training Without Explicit Policy Gradients}
Our actor is trained using a loss function that compares the current policy distribution to an advantage-weighted target using a suitable divergence metric (e.g., KL divergence or cross-entropy). This mirrors the behavior of softmax policy updates or trust-region methods but avoids the complexities of natural gradient computation. The update follows:
\[
\mathcal{L}_{\text{actor}} = \ell\left(\mathcal{M}_{\text{target}}, \mathcal{M}_{\phi_t}\right), \quad \phi_{t+1} = \phi_t - \alpha_t \nabla_{\phi_t} \mathcal{L}_{\text{actor}},
\]
where the target distribution $\mathcal{M}_{\text{target}}$ is derived from past estimated advantages via potential-based weighting. This avoids several issues common in policy optimization, such as premature convergence, poor step size tuning, or sensitivity to noisy rewards.

Our method offers an alternative to classical neural policy gradient techniques:
\begin{itemize}
    \item We introduce a non-gradient-based policy learning scheme that is compatible with TD-based advantage estimation and still admits convergence guarantees;
    \item We decouple the critic and actor updates in a way that enables more stable learning, even under non-stationary sampling;
    \item We design a loss function that aligns actor updates with potential-weighted targets, serving as an alternative to softmax parametrizations or entropy regularization;
    \item We demonstrate that this approach scales well in structured, high-dimensional environments, with empirical gains in both performance and convergence speed over gradient-based baselines such as DQN.
\end{itemize}

While related techniques appear in online learning and imitation learning contexts (e.g., potential-based updates or distribution matching), our application of these tools to neural policy learning in reinforcement learning settings represents a novel contribution, particularly under finite-time, biased-advantage conditions.

\section{Algorithms}\label{sec:alg}

This appendix presents the algorithms used throughout the paper to estimate Q-values and update policies. We divide them into two main categories:

\begin{itemize}
    \item \textbf{Estimation Procedures:} these algorithms are used to estimate value functions, either via tabular updates, neural networks, or more advanced techniques like Double DQN. They serve as the backbone of the critic in actor-critic frameworks.
    \item \textbf{Standard Algorithms:} for completeness, we include well-known algorithms like Q-learning, which form a baseline for comparison and are occasionally used for pretraining or bootstrapping.
\end{itemize}

In what follows, we describe each estimation procedure in detail, highlighting its role in our broader framework.

\subsection{Estimation Procedures}\label{sec:alg:est}

\paragraph{Algorithm~\ref{alg:1}: Temporal Difference (TD) Learning}  
This is a classical tabular approach to estimating Q-values under a fixed policy $\pi$, using a one-step bootstrap. At each time step, the value of the current state-action pair is updated toward a target that incorporates both the immediate reward and the expected return from the next state (sampled under $\pi$). While simple and effective in small discrete settings, TD learning becomes infeasible in large or continuous state spaces.

\begin{algorithm}[h]
\caption{Estimation of Q-values via Temporal Difference}
\begin{algorithmic}[1]\label{alg:1}
\STATE \textbf{Input:} state space $\mathcal{S}$, action space $\mathcal{A}$, trasition kernel $\gT: \mathcal{S}\times \mathcal{A} \rightarrow \mathcal{P}(\mathcal{S})$, reward function $\mathcal{R}: \mathcal{S}\times \mathcal{A} \rightarrow [0, 1]$, stationary policy $\pi$, number of iterations $H$, learning rate and discount factor \(\alpha, \gamma \in (0, 1)\).
\STATE \textbf{Initialize:} state $s_0$, \(\widetilde{Q}_{\pi, 0}(s, a) = 0\) for all \(s \in \mathcal{S}, \, a \in \mathcal{A}\)
\FOR{$\tau = 0 \dots H-1$}
    \STATE Observe $s_\tau$, select and execute action $a_\tau \sim \pi(\cdot | s_\tau)$, receive reward $r_\tau$, and observe next state $s_{\tau+1}$.
    \STATE Select $a'_{\tau+1}\sim \pi(\cdot|s_{\tau+1})$ and update the estimate:
    \begin{align*}
        \widetilde{Q}_{\pi, \tau+1}(s_\tau, a_\tau) &= (1-\alpha)\widetilde{Q}_{\pi, \tau}(s_\tau, a_\tau) + \alpha \left( r_\tau + \gamma \widetilde{Q}_{\pi, \tau}(s_{\tau+1}, a'_{\tau+1}) \right)
        \\
        \widetilde{Q}_{\pi, \tau+1}(s, a) &= \widetilde{Q}_{\pi, \tau}(s, a) \;\;\;\;\;\;\;\;\;\;\;\; \forall s, a \ne s_\tau, a_\tau
    \end{align*}

    \STATE

\ENDFOR
\STATE Set \(\widetilde{Q}_{\pi} = \widetilde{Q}_{\pi, H}\) 
\STATE \textbf{Output:} Q-values estimate \(\widetilde{Q}_{\pi}\) for the policy-update step.
\end{algorithmic}
\end{algorithm}

\paragraph{Algorithm~\ref{alg:2}: Neural Network-Based Q-Estimation}  
To extend Q-estimation to larger domains, we approximate the Q-function using a neural network. This variant of TD learning trains a network $\mathcal{N}_\theta$ to regress on Q-value targets using backpropagation. At each step, the target is constructed using a fixed (detached) copy of the network, preventing instability due to recursive bootstrapping. This formulation underlies many deep RL algorithms and serves as a flexible critic in our framework.

\begin{algorithm}[H]
\caption{Estimation of Q-values via Neural Network Training}
\begin{algorithmic}[1]\label{alg:2}
\STATE \textbf{Input:} state space $\mathcal{S}$, action space $\mathcal{A}$, transition kernel $\mathcal{T}: \mathcal{S}\times \mathcal{A} \mapsto \mathcal{P}(\mathcal{S})$, reward function $\mathcal{R}: \mathcal{S}\times \mathcal{A} \mapsto [0, 1]$, stationary policy $\pi$, a parameters space $\Theta$, a loss function $\mathcal{L}:\Theta \mapsto \mathbb{R}$ (e.g., mean squared error), number of iterations $H$, learning rate decay factor and discount factor \(\lambda, \gamma \in (0, 1)\).
\STATE \textbf{Initialize:} state $s_0$, starting learning rate $\eta_0$, a neural network $\mathcal{N}_{\theta_0}$ with parameters $\theta_0 \in \Theta$, consisting of two fully connected layers with ReLU activations, followed by a linear output layer (Figure \ref{fig:neural_network}).

\FOR{$\tau = 0 \dots H-1$}
    \STATE Observe $s_\tau$, select and execute action $a_\tau \sim \pi(\cdot | s_\tau)$, receive reward $r_\tau$, and observe next state $s_{\tau+1}$.
    \STATE Compute
    \begin{itemize}
        \item Predicted Q-value:  $\mathcal{N}_{\theta_\tau}(s_\tau, a_\tau)$ 
        \item Target action: $a'_{\tau+1}\sim \pi(\cdot | s_{\tau+1})$.
        \item Target Q-value: $\mathcal{N}_{\theta'_\tau}(s_{\tau+1}, a'_{\tau+1})$, where \(\theta'\) denotes a fixed (detached) copy of the parameters.
        
        \item the loss between predicted Q and target:
    \begin{align*}
        \mathcal{L}(\theta_\tau) &= \big( \mathcal{N}_{\theta_\tau}(s_\tau, a_\tau) - (r_\tau + \gamma \mathcal{N}_{\theta'_\tau}(s_{\tau+1}, a'_{\tau+1})\big)^2.
    \end{align*}
    \end{itemize}
    \STATE Backpropagate and update $\theta_\tau$: \begin{itemize}
        \item Compute gradient of loss w.r.t. network parameters: $\nabla_{\theta_\tau} \mathcal{L}(\theta_\tau)$.
        \item Update online network parameters using Adam optimizer:
    \begin{align*}
        \theta_{\tau+1} = \theta_{\tau} - \eta_\tau \nabla_{\theta_\tau} \mathcal{L}(\theta_\tau).
    \end{align*}
    \end{itemize}
    \STATE Update decay learning rate: $\eta_{\tau+1} = \eta_{\tau} \cdot \lambda$.
\ENDFOR
\STATE Set \(\widetilde{Q}_{\pi} = \mathcal{N}_{\theta_H}\) 
\STATE \textbf{Output:} Q-values estimate \(\widetilde{Q}_{\pi}\) for the policy-update step.
\end{algorithmic}
\end{algorithm}

\paragraph{Algorithm~\ref{alg:doubleDQN}: Double Deep Q-Network (Double DQN)}  
Standard Q-learning methods tend to overestimate action values due to maximization bias. Double DQN addresses this by decoupling action selection (from the online network) and evaluation (via a target network). This algorithm adds experience replay to stabilize training and periodically updates the target network to ensure learning consistency. It is particularly effective in high-dimensional environments where policy evaluation requires generalization.

\begin{algorithm}[H]
\caption{Estimation of Q-values via Double DQN}
\begin{algorithmic}[1] \label{alg:doubleDQN}
\STATE \textbf{Input:} state space $\mathcal{S}$, action space $\mathcal{A}$, transition kernel $\mathcal{T}: \mathcal{S}\times \mathcal{A} \mapsto \mathcal{S}$, reward function $\mathcal{R}: \mathcal{S}\times \mathcal{A} \mapsto [0, 1]$, stationary policy $\pi$, loss function $\mathcal{L}$, number of iterations $H$, learning rate decay factor and discount factor $\lambda, \gamma \in (0, 1)$.
\STATE \textbf{Initialize:} state $s_0$, learning rate $\eta_0$, replay buffer $\mathcal{B}$, target network parameters $\theta'_0 = \theta_0$, and online network $\mathcal{N}_{\theta_0}$ consisting of two fully connected layers with ReLU activations, followed by a linear output layer.
\FOR{$\tau = 0 \dots H-1$}
    \STATE Observe $s_\tau$, select and execute action $a_\tau \sim \pi(\cdot | s_\tau)$, receive reward $r_\tau$, and observe next state $s_{\tau+1}$.
    \STATE Store transition $(s_\tau, a_\tau, r_\tau, s_{\tau+1})$ in buffer $\mathcal{B}$.
    \STATE Sample a minibatch $\{(s_j, a_j, r_j, s_{j+1})\}$ from $\mathcal{B}$.
    \STATE Compute
    \begin{itemize}
    \item Predicted Q-value: $\mathcal{N}_{\theta_\tau}(s_j, a_j)$.
        \item Target action: $a'_{\tau+1}\sim \pi(\cdot | s_{j+1})$.
        \item Target Q-value: $\mathcal{N}_{\theta'_\tau}(s_{\tau+1}, a'_{\tau+1})$.
        
        \item Loss:
        \begin{align*}
            \mathcal{L}(\theta_\tau) &= \frac{1}{|\mathcal{B}|} \sum_{j} \big( \mathcal{N}_{\theta_\tau}(s_j, a_j) - r_j + \gamma \mathcal{N}_{\theta'_\tau}(s_{j+1}, a'_{\tau+1}) \big)^2.
        \end{align*}
    \end{itemize}
    \STATE Backpropagate and update $\theta_\tau$: \begin{itemize}
        \item Compute gradient of loss w.r.t. network parameters: $\nabla_{\theta_\tau} \mathcal{L}(\theta_\tau)$.
        \item Update online network parameters using Adam optimizer:
    \begin{align*}
        \theta_{\tau+1} = \theta_{\tau} - \eta_\tau \nabla_{\theta_\tau} \mathcal{L}(\theta_\tau).
    \end{align*}
    \end{itemize}

    \STATE Periodically update target network parameters: $\theta'_\tau \leftarrow \theta_\tau$.
    \STATE Decay learning rate: $\eta_{\tau+1} = \eta_{\tau} \cdot \lambda$.
\ENDFOR
\STATE Set $\widetilde{Q}_{\pi} = \mathcal{N}_{\theta_H}$.
\STATE \textbf{Output:} Q-values estimate $\widetilde{Q}_{\pi}$ for the policy-update step.
\end{algorithmic}
\end{algorithm}

\subsection{Standard Algorithms} \label{sec:alg:std}

\paragraph{Algorithm~\ref{alg:QL}: Q-Learning}  
Q-learning is a foundational off-policy algorithm for estimating optimal action values. It operates via value iteration, using greedy action selection with $\epsilon$-greedy exploration. Despite its simplicity, it forms the conceptual basis for many modern algorithms like DQN. We include it here for completeness and as a reference baseline for tabular settings.

    \begin{algorithm}[H]
\caption{Q-Learning}
\begin{algorithmic}[1]\label{alg:QL}
\STATE \textbf{Input:} state space $\mathcal{S}$, action space $\mathcal{A}$, trasition kernel $\mathcal{T}: \mathcal{S}\times \mathcal{A} \rightarrow \mathcal{P}(\mathcal{S})$, reward function $\mathcal{R}: \mathcal{S}\times \mathcal{A} \rightarrow [0, 1]$, stationary policy $\pi$, number of iterations $H$, learning rate, and discount factor \(\alpha, \gamma \in (0, 1)\), starting exploration rate and exploration rate decay factor \(\epsilon_0, \lambda \in (0, 1)\).
\STATE \textbf{Initialize:} state $s_0$, \({Q}_{\pi, 0}(s, a) = 0\) for all \(s \in \mathcal{S}, \, a \in \mathcal{A}\)
\FOR{$\tau = 0 \dots H-1$}
    \STATE Observe \( s_\tau \), select and execute action \( a_\tau \) such that  
\[
a_\tau =
\begin{cases} 
\text{random action}, & \text{with probability } \epsilon_\tau, \\
\arg\max_a Q_{\pi, \tau}(s_\tau, a), & \text{otherwise}.
\end{cases}
\]
Receive reward \( r_\tau \), and observe next state \( s_{\tau+1} \).

    \STATE Update the estimate:
    \begin{align*}
        {Q}_{\pi, \tau+1}(s_\tau, a_\tau) &= (1-\alpha){Q}_{\pi, \tau}(s_\tau, a_\tau) + \alpha \left( r_\tau + \gamma \max_a{Q}_{\pi, \tau}(s_{\tau+1}, a) \right)
        \\
        {Q}_{\pi, \tau+1}(s, a) &= {Q}_{\pi, \tau}(s, a) \;\;\;\;\;\;\;\;\;\;\;\; \forall s, a \ne s_\tau, a_\tau
    \end{align*}

    \STATE Set $\epsilon_{\tau+1} = \lambda \cdot \epsilon_\tau$

\ENDFOR
\STATE \textbf{Output:} \({Q}_{\pi} = {Q}_{\pi, H}\).
\end{algorithmic}
\end{algorithm}

\section{Formal description of the MDP and of the expert policies}
\label{appendix:matching}

This appendix provides the formal details of the stochastic matching model summarized in Section~5.

\paragraph{Action space}
The actions consist of making a match, i.e., selecting two indexes in $[I]$,
putting the item in its queue, which we denote by $\vDash$,
or trashing it, which we denote by $\boxtimes$ if its queue is already full.
That is,
\[
\gA = [I]\times [I] \cup \{ \vDash,\,\boxtimes\}\,
\]
More precisely,
the action taken $a_t$ lies in $[I]\times [I]$
if a match can be made between $a_t(1)$ and an item of class $a_t(2)$:
this requires compatibility between $a_t(1)$ and $a_t(2)$ (as indicated by the compatibility
graph), and the availability of a least one item in the queue of both nodes, i.e., $\varrho_{t,a_t(1)} \geq 1$ and $\varrho_{t,a_t(2)} \geq 1$.
Otherwise, $a_t = \,\vDash$ if $\varrho_{i_t,t} \leq L-1$ and $a_t = \boxtimes$
if $\varrho_{i_t,t} = L$.

\paragraph{State Space}
The situation at the beginning of the round $ t \geq 0 $ is summarized by the triplet $ s_t = (\varrho_t, i_t, e_t) $, where:
\begin{itemize}
    \item $ \varrho_t = (\varrho_{t,i})_{i \in [I]} $ is the vector of all queue sizes;
    \item $e_t$ is the event occurring at time $t$, where $e_t\in E=\{\text{arrival, departure, relocation, None}\}$
    \item $i_t$ is the class where the event occurs;
\end{itemize}
The state space therefore lies in:
\[
\gS = [L]^I \times [I] \times E.
\]

\paragraph{Event Probabilities}  
For each queue $i$, three possible events can occur:  
\begin{itemize}  
    \item \textbf{Arrival:} Occurs at rate $\lambda_i$.  
    \item \textbf{Departure:} Occurs at rate $\mu_i$, provided there are customers in the queue.  
    \item \textbf{Relocation:} Occurs at rate $\nu_i$, provided there are customers in the queue.  
\end{itemize}  

To model item relocation within the system, we define $N_i$ as the ``next node'' for node $i$. Specifically, if an item does not depart the system from node $i$, it moves to node $N_i$. This mapping ensures that every node $i$ has a designated transfer destination when an item remains in the system.

\paragraph{Uniformization Rate}  
To ensure a well-defined discrete-time process, we introduce a finite uniformization rate, denoted by $\Lambda$. This rate is chosen as the sum of all arrival and movement rates:  
\[
\Lambda = \sum_{i \in [I]} (\lambda_i + \mu_i L + \nu_i L).
\]
By selecting $\Lambda$ in this manner, we account for the possibility that no event occurs at certain time steps.  

\paragraph{Event Probabilities}  
At each time step $t$, the probability of an event $e_t$ occurring is given by:  
\begin{align}\label{eq:dym_prob}
    \mathbb{P}(e_t = X) = \begin{cases} 
    \lambda_i/\Lambda, & \text{if } X \text{ is an arrival at queue } i, \\[6pt]
    \mu_i \varrho_{t,i}/\Lambda, & \text{if } X \text{ is a departure from queue } i, \\[6pt]
    \nu_i \varrho_{t,i}/\Lambda, & \text{if } X \text{ is a relocation from queue } i, \\[6pt]
    1 - \sum_{i=1}^{n} (\lambda_i + \mu_i \varrho_{t,i} + \nu_i \varrho_{t,i})/\Lambda, & \text{if } X \text{ is no event}.
    \end{cases}
\end{align}
 
 An \textit{arrival} introduces a new item into the system at queue $i$. A \textit{departure} removes an item from queue $i$, provided the queue is not empty, while a \textit{relocation} moves an item from queue $i$ to its designated next node $N_i$, unless it exits the system.There are $\varrho_{t,i}$ items in queue $i$, and each item may independently depart, either by leaving the queue through a \textit{departure} with rate $\mu_i$ or by being relocated with rate $\nu_i$. If no event occurs, the system remains unchanged for that time step.

This formulation ensures that every possible event, including the case where no event occurs, is accounted for within the uniformized framework.

\paragraph{Transition Kernel}
The transition to the next state is governed by the transition kernel $ \gT : \gS \times \gA \to \gP(\gS) $. Given a state $ s_t = (\varrho_t, i_t, e_t) $ and a possible action $ a_t $, the subsequent state $ s_{t+1} = (\varrho_{t+1}, i_{t+1}, e_{t+1}) $ is generated as follows:

\begin{itemize}
    \item The class $i_{t+1}$ is drawn conditionally on $e_t$, with the probability:
    \[
    \mathbb{P}(\text{next event occurs at } i_{t+1} \mid e_t) = \frac{\lambda_{i_{t+1}} + \mu_{i_{t+1}} \varrho_{i_{t+1}} + \nu_{i_{t+1}} \varrho_{i_{t+1}}}{\Lambda}.
    \]

    \item The queue sizes $ \varrho_t $ are updated to $ \varrho_{t+1} $ as follows:
    \begin{align}\label{eq:transition_matrix}
    \varrho_{t+1} = \varrho_t 
    &+ 
    \begin{cases}
        -\mathbf{1}_{i_t} & \text{if } e_t = \text{departure at queue } i_t,\\[6pt]
        \mathbf{1}_{i_t} & \text{if } e_t = \text{arrival at } i_t, \\[6pt]
        -\mathbf{1}_{i_t} + \mathbf{1}_{N_{i_t}} & \text{if } e_t = \text{relocation at } i_t
    \end{cases} 
   \notag \\\\
    &+ 
    \begin{cases}
        -\mathbf{1}_{a_t(1)} - \mathbf{1}_{a_t(2)} & \text{if } a_t \in [I] \times [I],\\[6pt]
        -\mathbf{1}_{i_t} & \text{if } e_t = \text{arrival at } i_t \text{ and } a_t = \boxtimes, \\[6pt]
        -\mathbf{1}_{N_{i_t}} & \text{if } e_t = \text{relocation at } i_t \text{ and } a_t = \boxtimes
    \end{cases}
    \notag
\end{align}
\medskip
    Here, $ \mathbf{1}_{k} $ is the indicator vector for $ k \in [I] $, which is zero everywhere except at component $ k $, where it equals 1.

    \item The next event $ e_{t+1} $ is then sampled with the probabilities defined in Equation \ref{eq:dym_prob}.
\end{itemize}

This process fully determines the transition kernel $ \gT $.

\paragraph{Reward function}
We now describe the deterministic reward (cost) function $r : \gS \times \gA \to [-LI, M]$, where $M$ denotes its upper range. Positive rewards are obtained when a match is made, but some matches may yield higher rewards than others. Costs for maintaining the queues will be incurred in all cases. The actions of placing an item in a queue or trashing it lead to the same values of the reward function. More precisely, for a given state $s = (\varrho, i, e)$ and an action $a$, the reward function is given by:

\[
r(s,a) = - D_{i} \cdot 1_{e=\text{departure}} - R_{i} \cdot 1_{e=\text{relocation}} + g_{a}
\]

where $1$ is the indicator function (equal to 1 when $e_t = \text{departure}$ and 0 otherwise).

Reward Components:

\begin{itemize}
    \item \textbf{$D_i$}: This term represents the cost associated with an item departing the system from queue $i$. A departure event negatively impacts the reward, as it may indicate the loss of an item or a missed opportunity for matching.
    
    \item \textbf{$R_i$}: This term accounts for the cost associated with an item being relocated within the system from queue $i$. Like departures, relocations are penalized because they involve moving an item out of its current queue, potentially delaying or complicating future matching opportunities.

    \item \textbf{$g_{a}$}: This term captures the reward associated with taking a specific action $a$. For example, this could be the reward for successfully matching items from different queues or placing an item in a queue for future matching. The exact form of $g_{a}$ is context-dependent and is designed to encourage desired behaviors, such as making high-value matches.
\end{itemize}

In this setting, the reward structure balances the costs of departures and relocations with the rewards for successful actions like matching items. The reward for matching is indirectly captured through the term $g_{i,a}$, while departures and relocations are explicitly penalized. The goal is to incentivize the learner to make decisions that maximize matching efficiency, while minimizing the negative impacts of item departures and relocations.

Thus, the learner’s task is to navigate this cost-reward trade-off over time by learning an optimal policy. This policy should manage the complexities of the system, including efficiently handling the queues, making successful matches, and minimizing the costs of departures and relocations.

\paragraph{Distribution on initial state}
The initial state $s_0 = (\varrho_0,\,i_0, e_0)$
consists of an index $i_0$ drawn at random according to $\lambda$
and of an empty queue: $\varrho_{0,j} = 0$ for all $j \in [I]$.
We denote by $\mu_0$ the corresponding distribution of $s_0$.

\paragraph{Expert policies}
The first expert policy $\pi_1$ is called \emph{match the longest}:
if at least one match is possible, this policy always chooses the class with the most items in its queue
(ties broken based on the payoffs).
The other policies are of \emph{edge-priority} type and select matches according to some intrinsic priority order defined on the edges of the compatibility graph.
If at least one match is possible, the expert policy $\pi_2$ chooses the match leading to the largest payoff (ties broken based on queue lengths).
Finally, the expert policy $\pi_3$ also follows the greedy policy described by $\pi_2$, but only for the items beloging to a given set of classes $P(\pi_3)$. Otherwise, if no match is possible,
all expert policies described above add the item to its queue, if the maximal length $L$ of the latter is not achieved yet;
and in last resort, they trash the item.
We denote by $\gN(i) \subseteq [I]$ the item classes that are compatible with class $i$,
and formalize the expert policies $\pi_1,\,\pi_2,\,\pi_3: \gS \to \gP(\gA)$,
For a given state $s = (\varrho, i, e)$, we denote by
\[
\gM(s) = \bigl\{ j \in \gN(i) : \varrho_j \geq 1 \bigr\}
\]
the prospective matches. Neglecting the tie-breaking rules, $\mbox{if} \ \gM(s) \neq \emptyset$
\begin{align*}
& \pi_1(\,\cdot\,|s) = \diracP{\argmax_{j \in \gM(s)} \varrho_j},
\pi_2(\,\cdot\,|s) = \diracP{\argmax_{j \in \gM(s)} g_{i,j}}, \\
& \pi_3(\,\cdot\,|s) = \diracP{\argmax_{j \in P(\pi_3) \cap \gM(s)} g_{i,j}}\,,
\end{align*}
where ${\dirac}(k)$ denotes the Dirac mass at $j$;
otherwise, $\forall k \in \{1,2,3\}$,
\begin{align*}
   & \pi_k(\,\cdot\,|s) = {\dirac}(\vDash) \ \mbox{ if } \varrho_i \leq L-1 \\ &\pi_k(\,\cdot\,|s) = {\dirac}(\boxtimes) \ \mbox{ if } \varrho_i = L\,.
\end{align*}

We define $\gN(i) \subseteq [I]$ as the set of item classes that are compatible with class $i$, and we formalize the expert policies $\pi_1$, $\pi_2$, and $\pi_3$ as mappings from states to sets of actions, i.e., $\pi_1, \pi_2, \pi_3: \gS \to \gP(\gA)$.

For a given state $s = (\varrho, i, e)$, we define the prospective matches $\gM(s)$ as follows:
\[
\gM(s) = \bigl\{ j \in \gN(i) : \varrho_j \geq 1 \bigr\}
\]
Neglecting the tie-breaking rules, $\mbox{if} \ \gM(s) \neq \emptyset$
\begin{align*}
& \pi_1(\,\cdot\,|s) = \diracP{\argmax_{j \in \gM(s)} \varrho_j},
\pi_2(\,\cdot\,|s) = \diracP{\argmax_{j \in \gM(s)} g_{i,j}}, \\
& \pi_3(\,\cdot\,|s) = \diracP{\argmax_{j \in P(\pi_3) \cap \gM(s)} g_{i,j}}\,,  \pi_4(\,\cdot\,|s) = \frac{1}{|\gM(s)|} \sum_{j \in \gM(s)} \diracP{j}
,
\end{align*}
where ${\dirac}(k)$ denotes the Dirac mass at $j$;
otherwise, $\forall k \in \{1,2,3\}$,
\begin{align*}
   & \pi_k(\,\cdot\,|s) = {\dirac}(\vDash) \ \mbox{ if } \varrho_i \leq L-1 \\ &\pi_k(\,\cdot\,|s) = {\dirac}(\boxtimes) \ \mbox{ if } \varrho_i = L\,.
\end{align*}

\section{Simulations} \label{sec:sim}

In this appendix we provide detailed simulation setups, parameter values, and additional results. The aim is to make the experimental sections in the main text fully reproducible while guiding the reader through the logic of each testbed.

\subsection{Diamond Graph} \label{sec:sim:setting1}

The diamond graph (Figure~\ref{fig:diamond_graph}) is a stylized environment with four nodes and five edges. Despite its small size, it captures the key challenges of matching: balancing immediate rewards against long-term opportunities. This controlled setting allows us to test convergence properties and the effect of different learning strategies.

\paragraph{Simulation Settings} Each node represents a queue with stochastic arrivals, and each edge represents a feasible match with an associated reward. The specific rates are summarized in Table~\ref{tab:diamond_params}, while global parameters are listed in Table~\ref{tab:global_params_diamond}.

\begin{table}[t]
\centering
\caption{Parameters for the diamond network (Figure~\ref{fig:diamond_graph}).}
\label{tab:diamond_params}
\begin{tabular}{c c c}
\toprule
\textbf{Node} & \textbf{Arrival Rate $\lambda_i$} & \textbf{Other Rates} \\
\midrule
1 & 0.125 & $\mu_1 = 0$, $\nu_1 = 0$ \\
2 & 0.225 & $\mu_2 = 0$, $\nu_2 = 0$ \\
3 & 0.150 & $\mu_3 = 0$, $\nu_3 = 0$ \\
4 & 0.050 & $\mu_4 = 0$, $\nu_4 = 0$ \\
\bottomrule
\end{tabular}

\vspace{0.5em}
\begin{tabular}{c c}
\toprule
\textbf{Edge} $(i,j)$ & \textbf{Reward $g_{i,j}$}  \\
\midrule
(1,2) & 10 \\
(2,4) & 200 \\
(2,3) & 50 \\
(1,3) & 1 \\
(3,4) & 20 \\
\bottomrule
\end{tabular}
\end{table}

\begin{table}[t]
\centering
\caption{Global parameters for the diamond graph.}
\label{tab:global_params_diamond}
\begin{tabular}{l c}
\toprule
\textbf{Parameter} & \textbf{Value} \\
\midrule
Queue capacity $L$ & 5 \\
Discount factor $\gamma$ & 0.8 \\
\bottomrule
\end{tabular}
\end{table}

\paragraph{Parameter Study} We first examine sensitivity to learning-rate schemes. Three orchestration strategies are tested: polynomial potential, exponential potential (fixed rate), and exponential potential with varying rate. Figure~\ref{fig:params} shows how each scheme influences the evolution of values across hyperparameters.

\begin{figure}[H]
    \centering
    \begin{subfigure}[h]{0.45\textwidth}
        \centering
        \includegraphics[width=\textwidth]{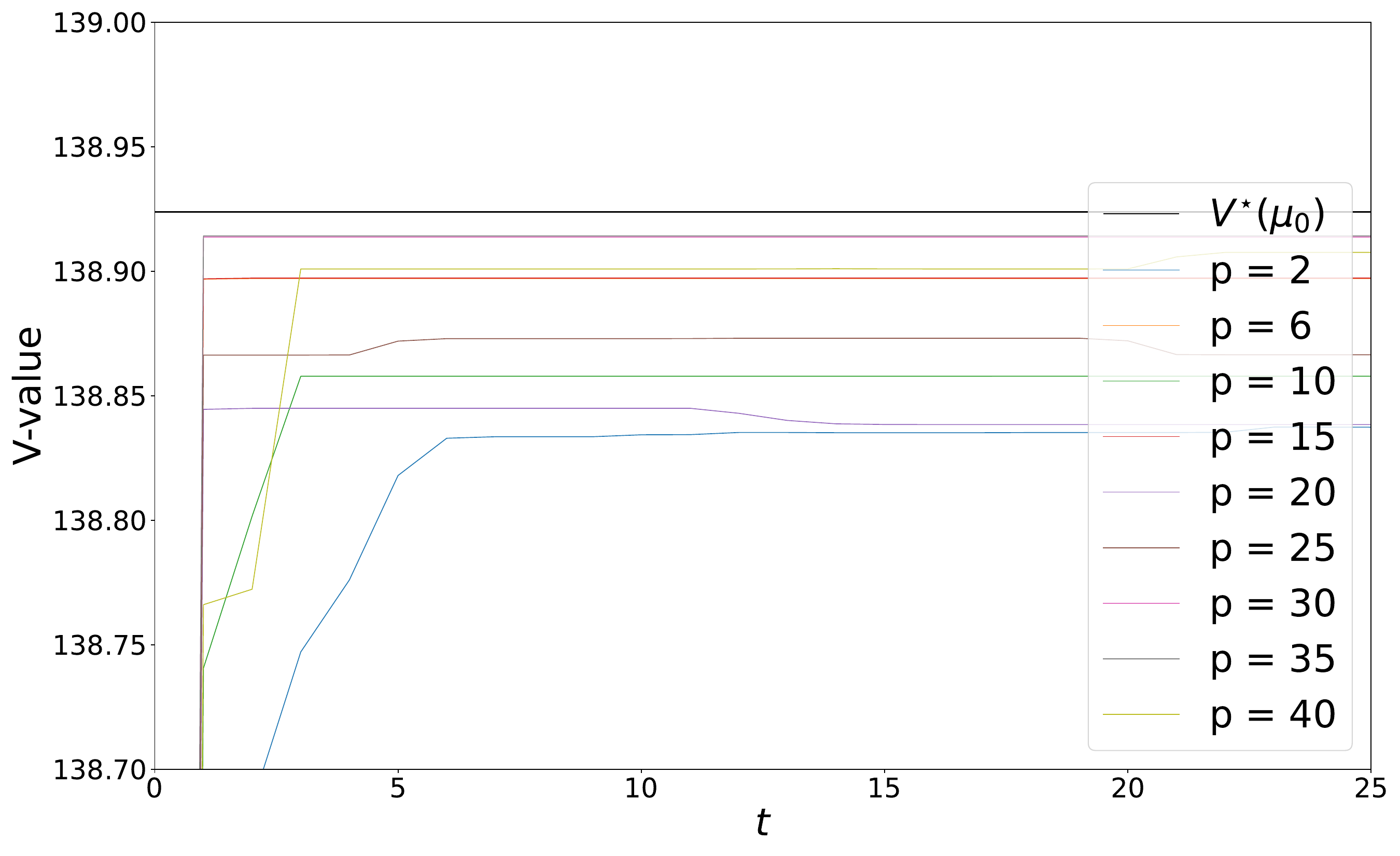}
        \caption{Polynomial Potential.}
    \end{subfigure}
    \begin{subfigure}[h]{0.45\textwidth}
        \centering
        \includegraphics[width=\textwidth]{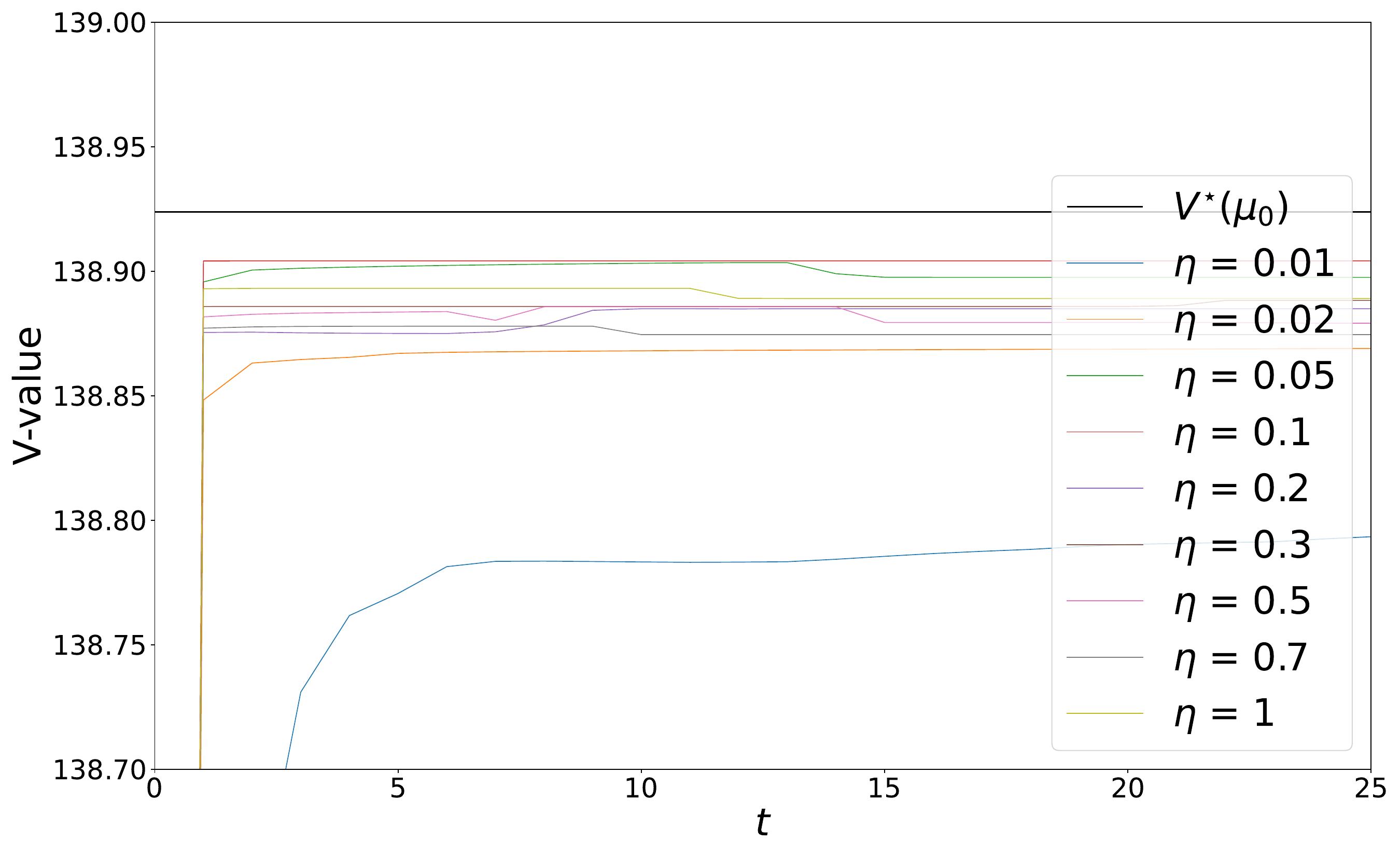}
        \caption{Exponential Potential ($\eta$ fixed).}
    \end{subfigure}
    \begin{subfigure}[h]{0.45\textwidth}
        \centering
        \includegraphics[width=\textwidth]{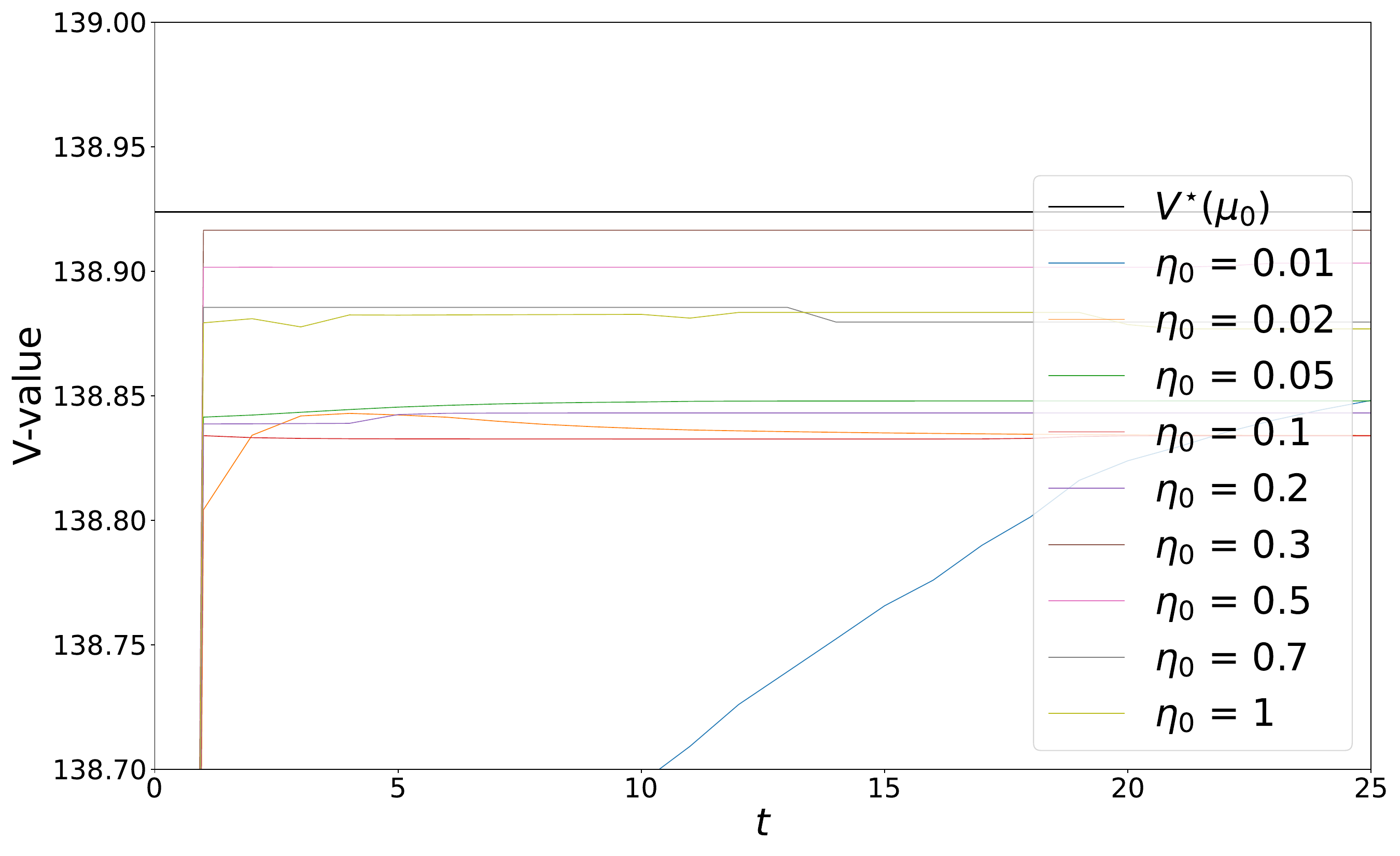}
        \caption{Exponential Potential ($\eta_t$ varying).}
    \end{subfigure}
    \caption{Evolution of values under different orchestration strategies and hyperparameter settings.}
    \label{fig:params}
\end{figure}

The exact parameter values are summarized in Table~\ref{tab:orchestration_params}. For comparison, we also include Q-learning (QL) with its own parameterization (Table~\ref{tab:ql_params}), and Figure~\ref{fig:params_ql} shows its performance across learning and exploration rates.

\begin{table}[t]
    \centering
    \caption{Learning-rate schemes for orchestration strategies.}
    \label{tab:orchestration_params}
    \begin{tabular}{l c}
        \toprule
        \textbf{Strategy} & \textbf{Learning Rate Scheme} \\
        \midrule
        Polynomial Potential & $p=30$ \\
        Exponential Potential & $\eta=0.1$ \\
        Exponential Potential (Varying Rate) & $\eta_0=0.3$ \\
        \bottomrule
    \end{tabular}
\end{table}

\begin{figure}[H]
    \centering
    \includegraphics[width=0.5\textwidth]{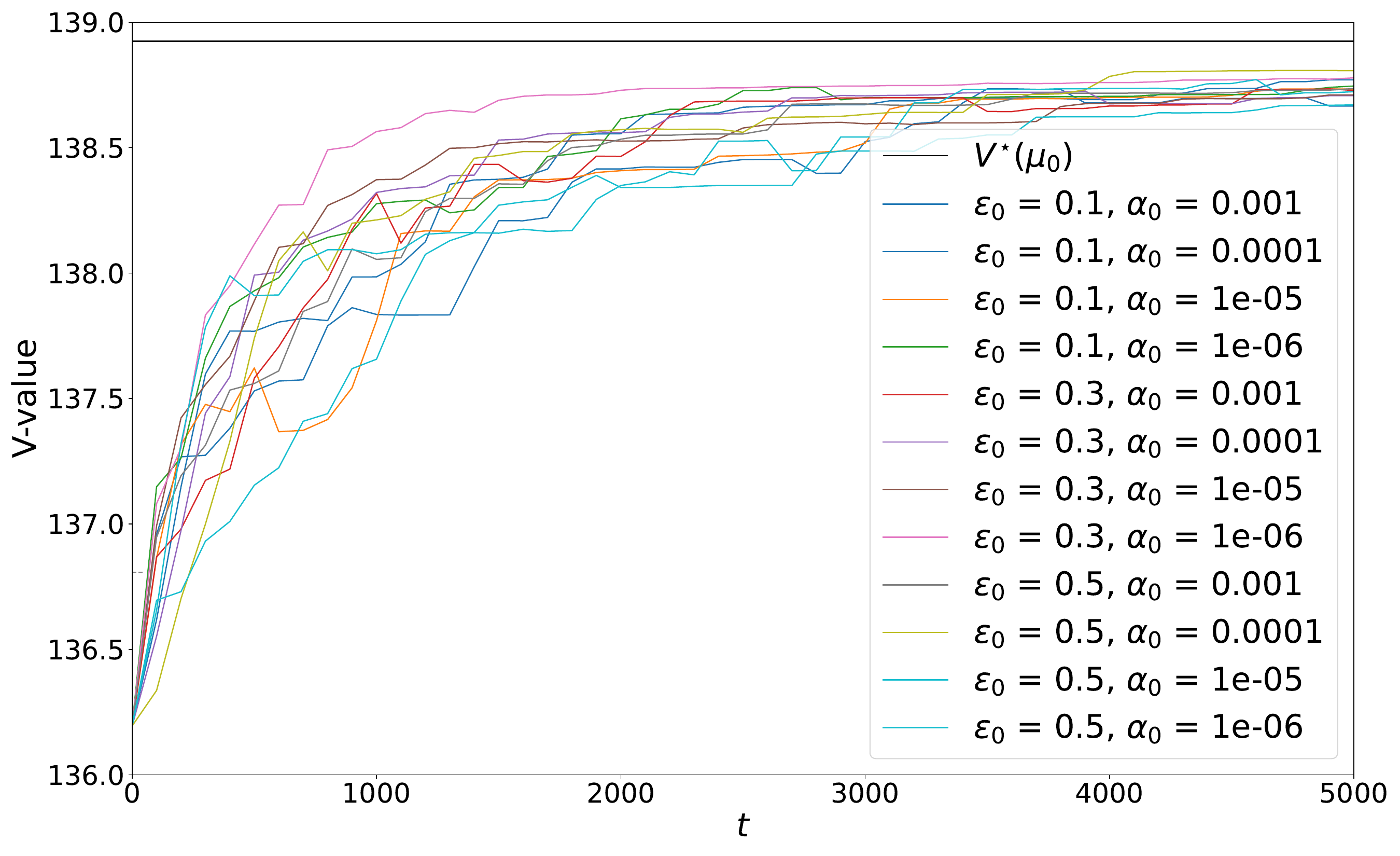}
    \caption{Values evolution of Q-learning across hyperparameter settings.}
    \label{fig:params_ql}
\end{figure}

\begin{table}[t]
    \centering
    \caption{Q-learning parameters.}
    \label{tab:ql_params}
    \begin{tabular}{l c}
        \toprule
        \textbf{QL Parameter} & \textbf{Value} \\
        \midrule
        Starting Learning Rate $\alpha_0$ & $1\cdot10^{-6}$ \\
        Starting Exploration Rate $\epsilon_0$ & $0.3$ \\
        Exploration Decay Factor $\lambda$ & $0.8$ \\
        \bottomrule
    \end{tabular}
\end{table}

\paragraph{Comparisons} We evaluate orchestration versus two baselines:
\begin{itemize}
    \item \textbf{Orchestration vs Q-learning on experts.} Figure~\ref{fig:enter-label} shows that orchestration strategies outperform QL even when both operate over the same set of expert policies.
    \item \textbf{Orchestration vs direct tabular learning.} When the number of direct actions is comparable to the number of experts, orchestration provides little additional benefit (Figure~\ref{fig:enter-label2}), highlighting that its main advantage arises in more complex environments.
\end{itemize}

\begin{figure}[H]
    \centering
    \includegraphics[width=0.55\linewidth]{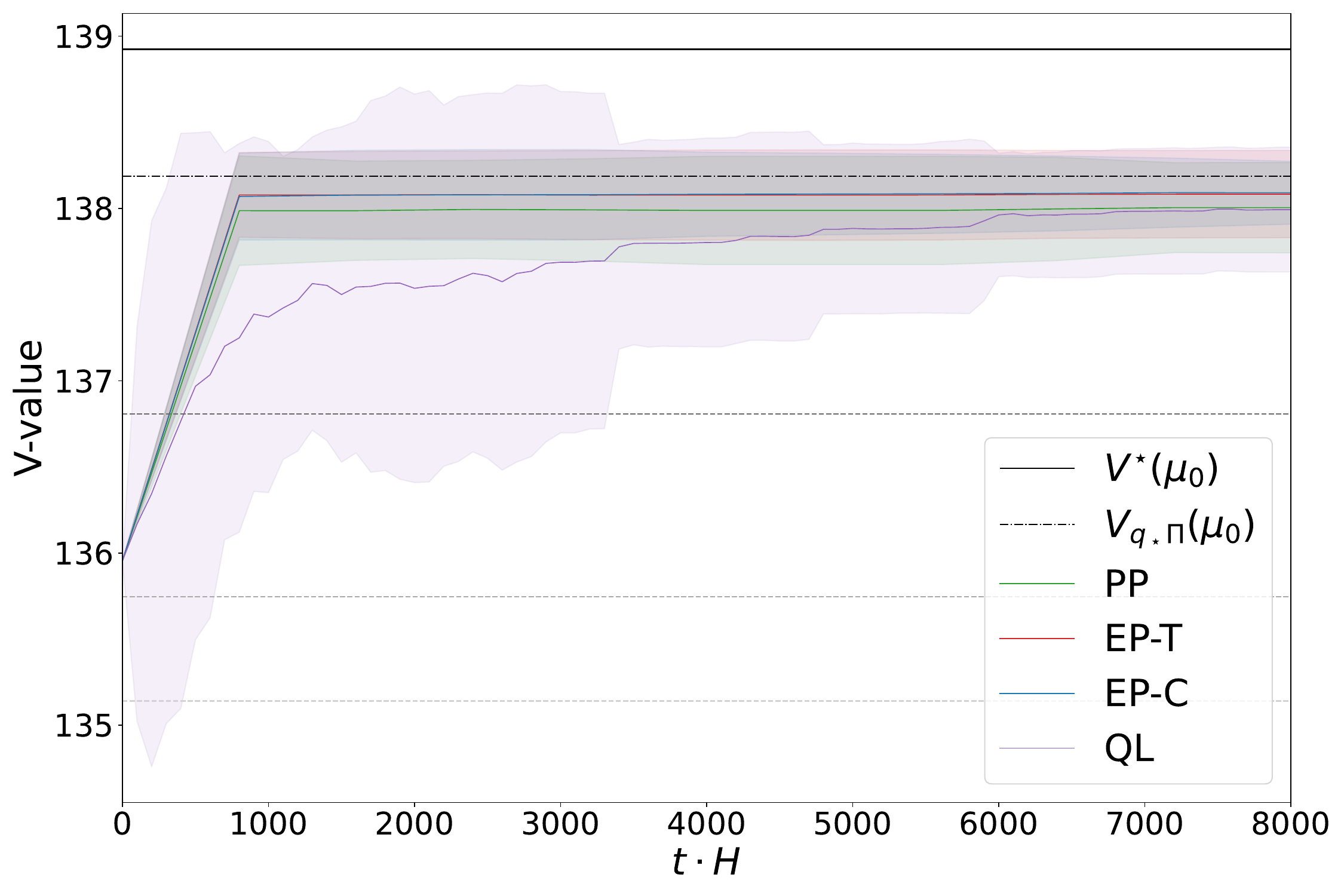}
    \caption{Comparison of orchestration of experts against QL policy.}
    \label{fig:enter-label}
\end{figure}

\begin{figure}[H]
    \centering
    \includegraphics[width=0.55\linewidth]{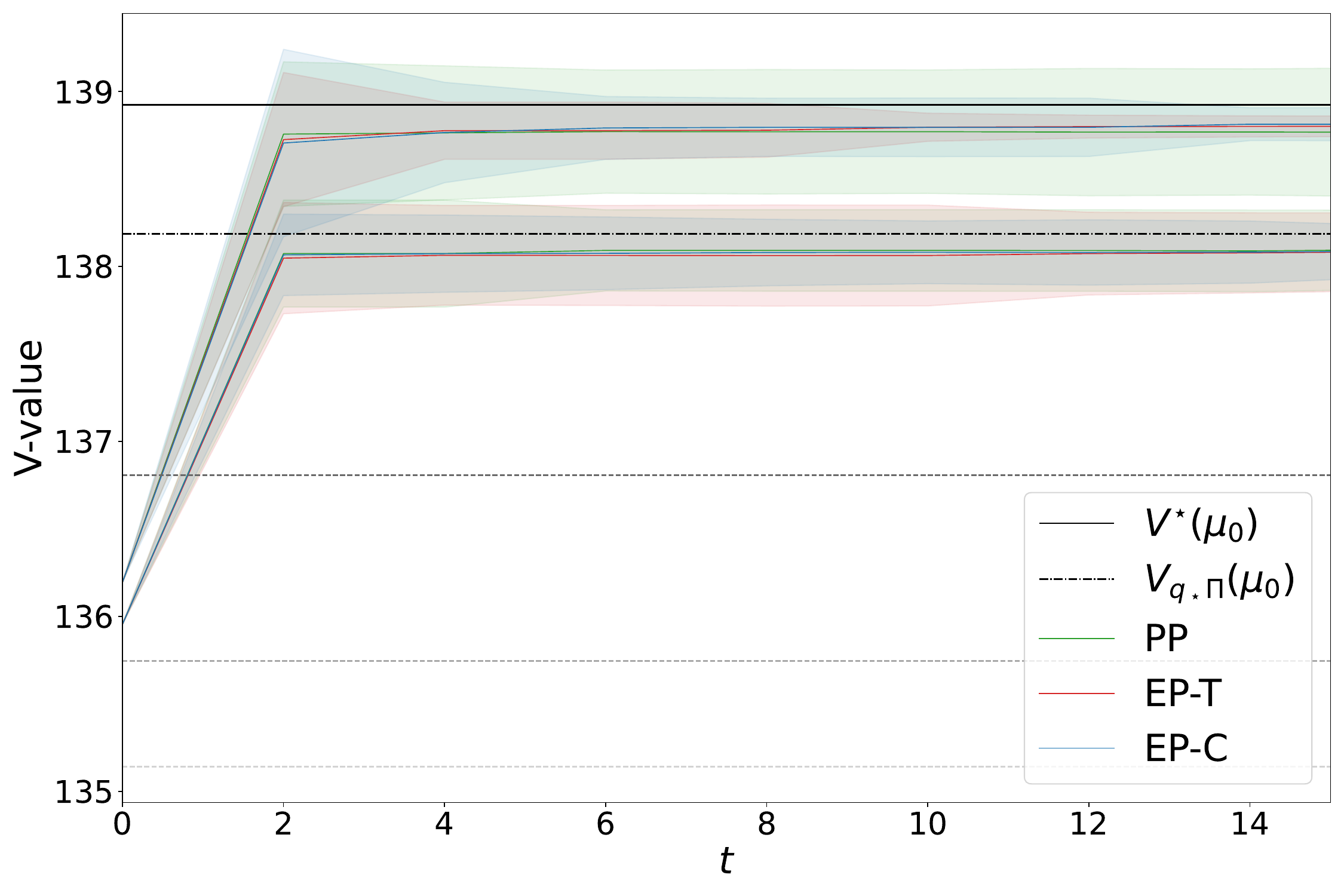}
    \caption{Comparison of orchestration of experts against direct tabular learning.}
    \label{fig:enter-label2}
\end{figure}

\subsection{Organ Exchange Model}\label{sec:sim:setting2}

We next turn to a realistic organ exchange model (Figure~\ref{fig:org_don_graph}). Here, patients and donors are grouped by blood type and urgency level. This environment is far richer than the diamond graph: arrivals are heterogeneous, urgency levels evolve over time (via relocation), and rewards are asymmetric, reflecting medical priorities.

\subsubsection{Identifying the Best Among Many}

\paragraph{Simulation Settings} Table~\ref{tab:node_params} lists node-specific parameters (arrival, departure, relocation), while Table~\ref{tab:urgency_params} gives urgency-dependent rewards and costs. Global settings are summarized in Table~\ref{tab:global_params}.

\begin{table}[t]
\centering
\caption{Node-specific parameters}
\label{tab:node_params}
\begin{tabular}{c c c c c}
\toprule
\textbf{Node} & \textbf{Urgency Level} & $\boldsymbol{\lambda_i}$ (Arrival) & $\boldsymbol{\mu_i}$ (Departure) & $\boldsymbol{\nu_i}$ (Relocation) \\
\midrule
1  & Donor    & 0.1 & 0 & - \\
2  & Donor & 0.002 & 0 & - \\
3  & Donor   & 0.082 & 0 & - \\
4  & Donor & 0.097 & 0 & - \\
5  & High    & 0.065 & 0.0008 & - \\
6  & Medium   & 0.029 & 0.0003 & 0.0005 \\
7  & Low    & 0.025 & 0.0001 & 0.0005 \\
8  & High & 0.098 & 0.0008 & - \\
9  & Medium   & 0.022  & 0.0003 & 0.0005 \\
10 & Low    & 0.011  & 0.0001 & 0.0005 \\
11 & High & 0.089 & 0.0008 & -\\
12 & Medium   & 0.124 & 0.0003 & 0.03 \\
13 & Low    & 0.0005 & 0.0001 & 0.0005 \\
14 & High & 0.067 & 0.0008 & - \\
15 & Medium   & 0.105 & 0.0003 & 0.0005 \\
16 & Low & 0.079 & 0.0001 & 0.0005 \\
\bottomrule
\end{tabular}
\end{table}

\begin{table}[t]
\centering
\caption{Urgency-level specific parameters}
\label{tab:urgency_params}
\begin{tabular}{l c c c}
\toprule
\textbf{Urgency Level $u$} & \textbf{Reward $g_u$} & \textbf{Departure Cost $D_u$} & \textbf{Relocation Cost $R_u$} \\
\midrule
Low    & 50 & 30 & 5 \\
Medium & 200 & 20 & 10 \\
High   & 1000 & 10 & - \\
\bottomrule
\end{tabular}
\end{table}

\begin{table}[t]
\centering
\caption{Global parameters}
\label{tab:global_params}
\begin{tabular}{l c}
\toprule
\textbf{Parameter} & \textbf{Value} \\
\midrule
Queue capacity $L$ & 5 \\
Discount factor $\gamma$ & 0.8 \\
Number of experts & 4 \\
\bottomrule
\end{tabular}
\end{table}

\paragraph{Discussion} In this setting, orchestration is particularly valuable: the action space is large, naive direct learning is slow, and individual experts are optimized for different sub-cases. Our results (main text, Section~\ref{sec:sim}) show that orchestration quickly identifies and matches the performance of the best expert.

\subsubsection{Improving Beyond the Best Expert}

We also consider a more structured organ exchange graph, incorporating blood type groups (O, A, B, AB). This increases the heterogeneity of arrivals and introduces further asymmetries.

\begin{table}[t]
\centering
\caption{Node-specific parameters}
\label{tab:node_params}
\begin{tabular}{c c c c c c}
\toprule
\textbf{Node} & \textbf{Group} & \textbf{Urgency Level} & $\boldsymbol{\lambda_i}$ (Arrival) & $\boldsymbol{\mu_i}$ (Departure) & $\boldsymbol{\nu_i}$ (Relocation) \\
\midrule
0  & O   & Donor   & 0.049 & 0 & - \\
1  & A   & Donor   & 0.018 & 0 & - \\
2  & B   & Donor   & 0.018 & 0 & - \\
3  & AB  & Donor   & 0.063 & 0 & - \\
4  & O   & High    & 0.049 & 0.008 & - \\
5  & O   & Medium  & 0.049 & 0.003 & 0.0005 \\
6  & O   & Low     & 0.049 & 0.001 & 0.005 \\
7  & A   & High    & 0.018 & 0.008 & - \\
8  & A   & Medium  & 0.018 & 0.003 & 0.0005 \\
9  & A   & Low     & 0.018 & 0.001 & 0.005 \\
10 & B   & High    & 0.018 & 0.008 & - \\
11 & B   & Medium  & 0.018 & 0.003 & 0.0005 \\
12 & B   & Low     & 0.018 & 0.001 & 0.005 \\
13 & AB  & High    & 0.063 & 0.008 & - \\
14 & AB  & Medium  & 0.063 & 0.003 & 0.0005 \\
15 & AB  & Low     & 0.063 & 0.001 & 0.005 \\
\bottomrule
\end{tabular}
\end{table}

\begin{table}[t]
\centering
\caption{Urgency-level specific parameters}
\label{tab:urgency_params}
\begin{tabular}{l c c c}
\toprule
\textbf{Urgency Level $u$} & \textbf{Reward $g_u$} & \textbf{Departure Cost $D_u$} & \textbf{Relocation Cost $R_u$} \\
\midrule
Donor  & 0    & 0  & 0 \\
Low    & 100  & 50 & 0 \\
Medium & 500  & 20 & 10 \\
High   & 1000 & 10 & 5 \\
\bottomrule
\end{tabular}
\end{table}

\begin{table}[t]
\centering
\caption{Global parameters}
\label{tab:global_params}
\begin{tabular}{l c}
\toprule
\textbf{Parameter} & \textbf{Value} \\
\midrule
Queue capacity $L$ & 15 \\
Discount factor $\gamma$ & 0.9 \\
Number of experts & 2 \\
\bottomrule
\end{tabular}
\end{table}

\paragraph{Discussion} Here, orchestration is not limited to “picking the best expert”: it learns combinations that improve upon both experts, exploiting complementary strengths. This is especially important in medical decision-making, where even small performance gains can translate into life-saving improvements.

\subsection{Computational Resources and Cost}

All experiments were conducted on a MacBook Pro (14‑inch, 2021) equipped with an Apple M1 Pro chip (10‑core CPU, 14‑core GPU) and 16\,GB of unified memory. The implementation is in Python using NumPy and PyTorch, and simulations were executed without GPU acceleration.

We evaluate two experimental settings: (1) a {Diamond Graph environment, representing a small-scale compatibility graph, and (2) a large-scale Organ Exchange Model} environment based on realistic transplantation networks.

In the Diamond Graph setting, we compare three algorithmic variants:
\begin{itemize}
    \item Tabular-Tabular: Tabular learning for both the policy and the advantage estimator,
    \item Tabular-NN:Tabular policy learning with a neural network for advantage estimation,
    \item NN-NN: Neural network approximation for both the policy and the advantage function.
\end{itemize}

Each configuration is run for $N = 200$ random seeds. For the policy-based methods, we perform 50 policy updates, and for each policy update, 15 estimation steps are conducted. The average training time per run is approximately 18.8 seconds (CPU time), with a wall time of 19.3 seconds.

For the Q-learning baseline, each run consists of 5 episodes of 150 steps each, using $N = 200$ random seeds. The average training time is approximately 11.1 seconds (CPU time), with a wall time of 11.3 seconds.

In the Organ Exchange Model, we employ only the NN-NN variant due to the large and structured state space. Each configuration is run for $N = 10$ random seeds. For the policy-based methods, we perform 200 policy updates, and for each policy update, 30 estimation steps are conducted. The average training time per run is approximately 4 hours.

All experiments were conducted locally on a single machine, and no cloud or distributed computing infrastructure was used. The computational cost remains moderate and reproducible on a high-end personal workstation.

\end{document}